\definecolor{yale}{RGB}{14,77,146}
\newtheorem{theorem}{Theorem}
\newtheorem{corollary}{Corollary}
\newtheorem{lemma}{Lemma}
\newtheorem{proposition}{Proposition}
\newtheorem{definition}{Definition}
\newcommand{\Alt}[1]{\operatorname{Alt}(#1)}
\DeclareMathOperator*{\argmax}{arg\,max}
\DeclareMathOperator{\KL}{\textnormal{KL}}
\newcommand{\kl}{\textnormal{kl}}
\newcommand{\scal}{\mathcal{S}}
\newcommand{\ocal}{\mathcal{O}}
\newcommand{\acal}{\mathcal{A}}
\newcommand{\bcal}{\mathcal{B}}
\newcommand{\starQ}[1]{Q^\star_{#1}}
\newcommand{\starV}[1]{V^\star_{#1}}
\newcommand{\EE}{\mathbb{E}}
\newcommand{\PP}{\mathbb{P}}
\newcommand{\norm}[1]{\left\lVert#1\right\rVert}
\newcommand\footnoteref[1]{\protected@xdef\@thefnmark{\ref{#1}}\@footnotemark}
\begin{document}

\title{Adaptive Sampling for Best Policy Identification \\
in Markov Decision Processes}

\author[1]{Aymen Al Marjani\thanks{This work was done while Aymen Al Marjani was at KTH.}}
\author[2]{Alexandre Proutiere\thanks{\textit{Emails: aymen.al\_marjani@ens-lyon.fr, alepro@kth.se} }}
\affil[1]{UMPA, ENS Lyon}
\affil[2]{KTH, Royal Institute of Technology}
\maketitle

\begin{abstract}
We investigate the problem of best-policy identification in discounted Markov Decision Processes (MDPs) when the learner has access to a generative model. The objective is to devise a learning algorithm returning the best policy as early as possible. We first derive a problem-specific lower bound of the sample complexity satisfied by any learning algorithm. This lower bound corresponds to an optimal sample allocation that solves a non-convex program, and hence, is hard to exploit in the design of efficient algorithms. We then provide a simple and tight upper bound of the sample complexity lower bound, whose corresponding nearly-optimal sample allocation becomes explicit. The upper bound depends on specific functionals of the MDP such as the sub-optimality gaps and the variance of the next-state value function, and thus really captures the hardness of the MDP. Finally, we devise KLB-TS (KL Ball Track-and-Stop), an algorithm tracking this nearly-optimal allocation, and provide asymptotic guarantees for its sample complexity (both almost surely and in expectation). The advantages of KLB-TS against state-of-the-art algorithms are discussed and illustrated numerically.
\end{abstract}

\section{INTRODUCTION}\label{sec:intro}

Reinforcement Learning (RL) algorithms are designed to interact with an unknown stochastic dynamical system, and through this interaction, to identify, as fast as possible, an optimal control policy. The efficiency of these algorithms is usually measured through their {\it sample complexity}, defined as the number of samples (the number of times the algorithm interacts with the system) required to identify an optimal policy with some prescribed levels of accuracy and certainty. This paper, as most related work in this field, focuses on systems and control objectives that are modelled as a standard discounted Markov Decision Processes (MDPs) with finite state and action spaces. Various interaction models have been investigated, but sample complexity analyses have been mainly conducted under the so-called {\it generative model}, where in each step, the algorithm may sample a transition and a reward from any given (state, action) pair. We also restrict our attention to this model. 

We investigate the design of RL algorithms with minimal sample complexity. This problem has attracted a lot of attention over the last two decades. Most studies follow a minimax approach. For example, it is known \cite{azar2013minimax} that for the worst possible MDP, identifying an $\varepsilon$-optimal policy with probability $1-\delta$ requires at least ${SA\over \varepsilon^2(1-\gamma)^3}\log({SA\over \delta})$ samples, where $S$ and $A$ are the number of states and actions, respectively, and $\gamma$ is the discount factor. Note that to obtain this sample complexity lower bound, one needs to design a very specific worst-case MDP (in particular, its transition probabilities must depend on $\varepsilon$ and $\gamma$). Since the aforementioned minimax lower bound appeared, most researchers have been aiming at devising algorithms matching this bound. In contrast, we are interested in analyzing the minimal {\it problem-specific} sample complexity. Specifically, we seek to understand the dependence of the sample complexity on the MDP that has to be learnt. Problem-specific performance metrics are much more informative than their minimax counterparts, because they encode and express the inherent hardness of the MDP. Minimax metrics just represent the hardness of the worst MDP. In particular, establishing that the sample complexity of an algorithm does not exceed the minimax lower bound just reveals that the algorithm performs well for this worst MDP. However, it does not indicate whether the algorithm {\it adapts} to the hardness of the MDP, i.e., whether the optimal policy of a very easy MDP would be learnt very quickly. As a matter of fact, an algorithm with sample complexity matching the minimax lower bound just consists in sampling (state, action) pairs uniformly at random, and is not adapting to the MDP.       

The problem-specific sample complexity of identifying the best arm in stochastic Multi-Armed Bandit (MAB) problems is now well understood \cite{garivier16a}.
In this work, we explore whether the methodology used in \cite{garivier16a} for MAB problems can be extended to RL problems. This methodology consists in first deriving a problem-specific sample complexity lower bound. The latter should reveal the sample allocation leading to the minimal sample complexity. One may then devise a {\it track-and-stop} algorithm that (i) tracks the optimal sample allocation identified in the lower bound, and (ii) stops when the information gathered is judged sufficient to get the desired PAC guarantees. As it turns out, extending this methodology to RL problems raises fundamental issues, mainly due to the difficulty of computing the sample allocation leading to the minimal problem-specific sample complexity. We propose a set of tools to solve these issues. Our contributions are as follows: 

1. We derive a problem-specific sample complexity lower bound for identifying an optimal policy in a given MDP $\phi$. This bound is expressed as $T^*(\phi)\log(1/\delta)$, where the {\it characteristic time} $T^*(\phi)$ encodes the hardness of the MDP $\phi$. $T^*(\phi)$ is the value of a complex non-convex optimization problem. This complexity makes the design of a track-and-stop algorithm similar to that proposed in \cite{garivier16a} and achieving the sample complexity lower bound elusive. To circumvent this difficulty, we derive an explicit upper bound $U(\phi)$ of $T^*(\phi)$. The advantage of $U(\phi)$ is two-fold: (i) $U(\phi)$ remains problem-specific, and explicitly depends on functionals of the MDP characterizing its hardness. (ii) $U(\phi)$ corresponds to an explicit and simple sample allocation. This allows us to devise a procedure that tracks this allocation.

2. Based on our upper bound analysis, we devise KLB-TS (KL Ball Track-and-Stop), an algorithm whose sample complexity is at most $U(\phi)\log(1/\delta)$. Our algorithm relies on a procedure tracking the sample allocation leading to $U(\phi)$, and a stopping rule that we refer to as KL Ball Stopping rule because of its analogy to the way we derive the upper bound $U(\phi)$.        

3. We highlight the differences of our design approach compared to that leading to BESPOKE \cite{BESPOKE2019}, a recently proposed adaptive algorithm. As it turns out, the adaptive part of BESPOKE is very limited in practice (see related work and Appendix \ref{sec:compare} for details), and KLB-TS exhibits a much better performance numerically.


\section{RELATED WORK}\label{sec:related}

Most work on the best policy identification in MDPs with a generative model adopt a minimax approach \cite{Kearns1999}, \cite{kakade2003sample}, \cite{even2006action}, \cite{azar2013minimax}, \cite{NIPS2018_7765}, \cite{pmlr-v125-agarwal20b}, \cite{li2020breaking}. In the most recent of these papers \cite{li2020breaking}, the authors propose an algorithm whose sample complexity achieves the minimax lower bound of \cite{azar2013minimax} for a wide range of values of $\varepsilon$, namely for $\varepsilon\in (0,\frac{1}{1-\gamma}]$. Refer to the appendix for a detailed account on the minimax framework. 

As far as we are aware, the only paper attempting to propose a problem-specific analysis of the best policy identification in MDPs with a generative model is \cite{BESPOKE2019}. There, the authors proposed BESPOKE, an adaptive algorithm designed to find $\varepsilon$-optimal policies. BESPOKE starts by allocating an extremely large number of samples $n_{\min} = \frac{2\times 625^2 \times \gamma^2 \times S \times \log(1/\delta)}{(1-\gamma)^2}$ to each (state, action) pair. Then, at each iteration, BESPOKE solves a convex program whose objective is an upper-bound of the sub-optimality gap (in terms of the $\ell_\infty$-norm of the value function) of the empirical optimal policy. The solution of this program corresponds to the sampling strategy that the algorithm uses to halve the sub-optimality gap of the empirical policy in the next iteration. Interestingly, BESPOKE is the first algorithm with a problem-dependent sample complexity upper-bound. Note however that BESPOKE has not been tested numerically in \cite{BESPOKE2019}; we fill this gap in this paper. Because of its very long initialization phase, it turns out that the part where BESPOKE actually adapts its sample allocation is negligible in comparison of its total sample complexity. In Appendix \ref{sec:compare}, we provide a more detailed discussion on BESPOKE, and further compare the sample complexity upper bounds of KLB-TS and BESPOKE. Experiments in Section \ref{sec:experiments} show that KLB-TS significantly outperforms BESPOKE numerically.

\section{PRELIMINARIES AND NOTATION}\label{sec:prelim}

\subsection{Discounted MDPs}

We investigate the optimal control of dynamical systems modelled as an infinite time-horizon MDP with finite state space ${\cal S}$ and finite action spaces ${\cal A}_s$ for any $s\in {\cal S}$. Let $\acal = \cup_{s\in \scal} \acal_s$. The MDP is defined by its kernels: $\phi=(p_{\phi},q_{\phi})$, where $p_\phi$ captures the system dynamics and $q_{\phi}$ the random collected rewards. Specifically, $p_\phi(s'|s,a)$ denotes the probability of the system to be in state $s'$ after taking the action $a\in {\cal A}_s$ in state $s$. Let $p_\phi(s,a)=(p_\phi(s'|s,a))_{s'}$. $q_\phi(\cdot |s,a)$ or simply $q_\phi(s,a)$ is the density of the distribution of the reward collected in state $s$ when action $a$ is selected, w.r.t. some positive measure $\lambda$ with support included in $[0,1]$. Let $r_\phi(s,a)$ denote the expected reward collected in state $s$ when action $a$ is selected, $r_\phi(s,a)=\int_0^1 R q_\phi(R |s,a)\lambda(dR)$.

The objective is to identify a control policy $\pi: \scal \to \acal$ maximizing the long-term discounted reward $\mathbb{E}_\phi[\sum_{t=0}^\infty \gamma^t r_\phi(s^\pi(t), \pi(s^\pi(t))]$, where $s^\pi(t)$ is the state of the system at time $t$ under the policy $\pi$ and $\mathbb{E}_\phi[\cdot]$ represents the expectation taken w.r.t. to the randomness induced by $(p_\phi,q_\phi)$.  

We denote by $V_\phi^{\pi}$ the value function of the MDP $\phi$ when the control policy is $\pi$: for any $s$, $V_\phi^{\pi}(s) = \mathbb{E}_\phi[\sum_{t=0}^\infty \gamma^t r_\phi(s^\pi(t), \pi(s^\pi(t)) | s^\pi(0)=s]$. $V_\phi^\star$ corresponds to the value function when the policy $\pi$ is optimal. Note that since the rewards are lower and upper bounded by 0 and 1, respectively, we have for any $s$, $V_\phi^\star(s) \in [0,{1\over 1-\gamma}]$. Similarly, the $Q$-function is denoted by $Q_\phi^{\pi}$, and $\starQ{\phi}$ when $\pi$ is optimal. The sub-optimality gap of action $a$ in state $s$ is defined as $\Delta_{sa} = \starV{\phi}(s) - \starQ{\phi}(s,a)$. Finally, denote by $\Pi_\phi^\star$ the set of optimal policies for $\phi$. 

\medskip
\noindent
{\bf Assumption 1.} To simplify notation and the analysis, we assume that $\phi$ admits a unique optimal control policy denoted by $\pi_\phi^\star$. This means that $\phi \in \Phi = \{\phi : |\Pi_\phi^\star |=1 \}$.

\subsection{Best-policy identification}

We aim at devising an algorithm identifying $\pi^\star_\phi$ as quickly as possible in the fixed-confidence setting: when the algorithm stops and returns an estimated optimal policy $\hat{\pi}$, we should have $\mathbb{P}_\phi[ \hat\pi \neq \pi^\star_\phi]\le \delta$, for some pre-defined confidence parameter $\delta >0$. Such an algorithm consists of a sampling rule, a stopping rule, and a decision rule. An algorithm $\chi$ gathers information sequentially, and we denote by ${\cal F}^\chi_t$ the $\sigma$-algebra generated by all observations made under $\chi$ up to and including round $t$. 

\medskip
\noindent
{\bf Sampling rule.} In round $t$, the algorithm $\chi$ selects a (state, action) pair $(s_t,a_t)$ to explore, depending on past observations. $(s_t,a_t)$ is ${\cal F}_{t-1}^\chi$-measurable. $\chi$ observes the next state denoted by $s_t'$ and a random reward $R_t$. Note that any admissible (state, action) pair may be selected (we consider a generative model).

\medskip
\noindent
{\bf Stopping and decision rules.} After gathering enough information, $\chi$ may decide to stop sampling and to return an estimated best policy. The algorithm stops after collecting $\tau$ samples, and $\tau$ is a stopping time w.r.t. the filtration $({\cal F}_t^\chi)_{t\ge 1}$. The estimated best policy $\hat\pi$ is then ${\cal F}_\tau^\chi$-measurable. $\tau$ is referred to as the sample complexity of $\chi$. 

\medskip
\noindent
{\bf $\delta$-PC algorithms.} An algorithm is $\delta$-Probably Correct ($\delta$-PC) if it satisfies the two following conditions: for any MDP $\phi\in \Phi$, (i) it stops in finite time almost surely, $\mathbb{P}_\phi[\tau<\infty]=1$, and (ii) $\mathbb{P}_\phi[ \hat\pi \neq \pi^\star_\phi]\le \delta$.

\subsection{Additional notation}
$\mathbbm{1}(s)$ denotes the canonical base vector in $\mathbb{R}^{\scal}$ whose only non-zero entry is at index $s$. $\Sigma = \{\omega \in [0,1]^{S \times A}: \underset{s,a}{\sum} w_{s a} = 1 \}$ denotes the simplex in $\mathbb{R}^{S\times A}$.
The Kullback-Leibler divergence between two probability distributions $P$ and $Q$ on some discrete space $\scal$ is defined as: $KL(P\| Q) = \sum_{s \in \scal} P(s)\log(\frac{P(s)}{Q(s)})$. For Bernoulli distributions of respective means $p$ and $q$, the KL divergence is denoted by $\kl (p,q)$. For distributions over $\mathbb{R}$ defined through their densities $p$ and $q$ w.r.t. some positive measure $\lambda$, the KL divergence is:
$KL(p\| q)=\int_{-\infty }^{\infty }p(x)\log \left(\frac{p(x)}{q(x)}\right)\,\lambda(dx)$. For two MDPs $\phi$ and $\psi$, we define $\textrm{KL}_{\phi|\psi}(s,a)$ as the KL divergence between the distributions of the random observations made for the (state, action) pair $(s,a)$ under $\phi$ and $\psi$: 
\begin{align*}
\textrm{KL}_{\phi|\psi}(s,a) = \ & KL(p_\phi(s,a)\| p_\psi(s,a)) + KL(q_\phi(s,a)\| q_\psi(s,a)).
\end{align*}
%

\section{PROBLEM-SPECIFIC SAMPLE COMPLEXITY LOWER BOUND}\label{sec:lowerbound}

To derive a problem-specific sample complexity lower bound, we use classical change-of-measure arguments as those leveraged towards regret and sample complexity lower bounds \cite{lai1985, garivier16a} in bandit problems. These arguments lead to constraints on the expected numbers of times each (state, action) pair should be explored under any $\delta$-PAC algorithm. 
\begin{definition}
The set of alternative MDPs is defined as: $\Alt\phi = \{\psi\ \mathrm{ MDP}:  \Pi^\star_{\phi} \cap   \Pi^\star_{\psi} = \emptyset \}$.
\end{definition}
Let $\psi\in \Alt\phi$ be an alternative MDP and consider a $\delta$-PAC algorithm. We denote by $O_\tau$ the set of observations made under the algorithm until it stops. Further consider $L_\tau$ the log-likelihood ratio of $O_\tau$ under the MDPs $\phi$ and $\psi$. Using similar techniques as those used in the proof of Wald's first lemma, we get (all proofs are detailed in the appendix):
\begin{lemma}\label{lemma:wald} Let $n_t(s,a)$ be the number of times $(s,a)$ has been explored up to and including step $t$. For any $\phi\in \Phi$,
$
\mathbb{E}_\phi[L_\tau]=\sum_{s,a}\mathbb{E}_\phi[n_\tau(s,a)]\KL_{\phi \mid \psi} (s,a).
$ \end{lemma}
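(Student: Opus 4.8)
The plan is to realize $L_\tau$ as the terminal value of a running sum of per-round log-likelihood increments, to show that each increment has conditional mean equal to $\KL_{\phi|\psi}(s_t,a_t)$, and then to push the expectation through the random stopping time $\tau$ by a martingale (optional-stopping) argument, exactly mirroring the proof of Wald's first lemma.

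First I would write out the likelihood of the observation record $O_\tau=(s_t,a_t,s_t',R_t)_{t\le\tau}$. Because the sampling rule is $\mathcal{F}_{t-1}$-measurable and does not depend on the underlying kernel, the factors it contributes to the likelihood are identical under $\phi$ and $\psi$ and cancel in the ratio; what remains is
\begin{equation*}
L_\tau=\sum_{t=1}^{\tau}\ell_t,\qquad
\ell_t:=\log\frac{p_\phi(s_t'\mid s_t,a_t)}{p_\psi(s_t'\mid s_t,a_t)}+\log\frac{q_\phi(R_t\mid s_t,a_t)}{q_\psi(R_t\mid s_t,a_t)}.
\end{equation*}
Conditionally on $\mathcal{F}_{t-1}$ the pair $(s_t,a_t)$ is fixed and $(s_t',R_t)$ is drawn from $p_\phi(s_t,a_t)\otimes q_\phi(s_t,a_t)$; taking $\mathbb{E}_\phi[\,\cdot\mid\mathcal{F}_{t-1}]$ of each term reproduces the two divergences defining $\KL_{\phi|\psi}$, so
\begin{equation*}
\mathbb{E}_\phi[\ell_t\mid\mathcal{F}_{t-1}]=\KL_{\phi|\psi}(s_t,a_t)=\sum_{s,a}\mathbbm{1}(s_t=s,\,a_t=a)\,\KL_{\phi|\psi}(s,a).
\end{equation*}

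Next I would introduce the centered process $M_t:=\sum_{u=1}^{t}\bigl(\ell_u-\KL_{\phi|\psi}(s_u,a_u)\bigr)$, which by the previous display is a martingale with respect to $(\mathcal{F}_t)$ under $\mathbb{P}_\phi$, and note that collecting visits gives $\sum_{u=1}^{\tau}\KL_{\phi|\psi}(s_u,a_u)=\sum_{s,a}n_\tau(s,a)\,\KL_{\phi|\psi}(s,a)$. The lemma is then equivalent to $\mathbb{E}_\phi[M_\tau]=0$, i.e. to the validity of optional stopping at $\tau$.

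The hard part will be justifying optional stopping, since $\tau$ is unbounded; this is where the Wald-type argument enters and where $\mathbb{E}_\phi[\tau]<\infty$ is used (the only regime of interest, as the lower bound is otherwise vacuous). I would establish a uniform conditional bound $\mathbb{E}_\phi[|\ell_t|\mid\mathcal{F}_{t-1}]\le C$: the transition term is bounded because $\mathcal{S}$ is finite, so $\log(p_\phi/p_\psi)$ takes finitely many values; for the reward term, the positive part of $\log(q_\phi/q_\psi)$ contributes the finite divergence $KL(q_\phi\|q_\psi)$, while its negative part is controlled via $(\log x)^+\le x$ together with $\int q_\psi\,\lambda(dx)\le 1$, giving $\mathbb{E}_\phi[(\log(q_\phi/q_\psi))^-\mid\mathcal{F}_{t-1}]\le 1$. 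Since $\{\tau\ge u\}\in\mathcal{F}_{u-1}$, this yields $\mathbb{E}_\phi\bigl[\sum_{u=1}^{\tau}|\ell_u|\bigr]\le C\,\mathbb{E}_\phi[\tau]<\infty$, so $\sum_{u=1}^{\tau}|\ell_u|$ is an integrable function dominating the stopped sequence $(M_{t\wedge\tau})_t$. Dominated convergence then gives $\mathbb{E}_\phi[M_\tau]=\lim_t\mathbb{E}_\phi[M_{t\wedge\tau}]=0$, which is exactly the claim. Throughout I assume $\phi\ll\psi$ so that every $\KL_{\phi|\psi}(s,a)$ is finite; otherwise the right-hand side is $+\infty$ and there is nothing to prove.
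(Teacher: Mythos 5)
Your proof is correct, but it follows a genuinely different route from the paper's. The paper runs the classical computation behind Wald's first identity: it splits $L_\tau=\sum_{s,a}L_\tau^{s,a}$ by (state, action) pair, re-indexes the observations at each pair by visit count (writing $Y_k,Z_k$ for the $k$-th next state and reward observed at $(s,a)$), expresses $L_\tau^{s,a}=\sum_{k\ge 1}\xi_k\mathbbm{1}_{\{N_\tau(s,a)>k-1\}}$, and uses the independence of the fresh draw $\xi_k$ from the event $\{N_\tau(s,a)>k-1\}$ (which is determined by earlier observations) to factor each expectation and sum the series into $\mathbb{E}_\phi[N_\tau(s,a)]\KL_{\phi\mid\psi}(s,a)$. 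You instead keep the chronological indexing, compute conditional means given $\mathcal{F}_{t-1}$, and close the argument with optional stopping and dominated convergence. What each buys: the paper's route needs the generative-model re-indexing argument but no martingale machinery, yet it silently interchanges expectation with an infinite sum of signed terms, a step whose rigorous justification requires essentially the integrability you establish; your route makes that integrability explicit ($\mathbb{E}_\phi[|\ell_t|\mid\mathcal{F}_{t-1}]\le C$ together with $\mathbb{E}_\phi[\tau]<\infty$), and your restriction to $\mathbb{E}_\phi[\tau]<\infty$ is harmless here since the lower bound of Proposition \ref{prop:LBgeneric} is vacuous otherwise. Two small repairs to your write-up, neither of which affects the conclusion: the function dominating $(M_{t\wedge\tau})_t$ should be $\sum_{u=1}^{\tau}|\ell_u|+C\tau$ rather than $\sum_{u=1}^{\tau}|\ell_u|$ alone, because $|M_{t\wedge\tau}|$ also contains the compensator terms $\KL_{\phi\mid\psi}(s_u,a_u)\le C$; and the conditional expectation of the positive part of the reward log-ratio is bounded by $KL(q_\phi\| q_\psi)+1$ rather than by the KL itself, since the KL equals the positive part minus the negative part.
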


From the above lemma, and using the same arguments as in \cite{kaufmann2016complexity}, one may derive the following data processing inequality, valid for any ${\cal F}_\tau$-measurable event $E$:
$$
\sum_{s,a}\mathbb{E}_\phi[n_\tau(s,a)]\KL_{\phi \mid \psi} (s,a) \ge \kl (\mathbb{P}_\phi[E],\mathbb{P}_\psi[E]).
$$
Next, we select the event $E$ as $\{ \hat{\pi} \notin \Pi^\star(\phi)\}$. Since the algorithm is $\delta$-PAC, and since $\psi\in \Alt\phi$, we have: $\mathbb{P}_\phi[E]\le \delta$ and $\mathbb{P}_\psi[E]\ge \mathbb{P}_\psi[\hat{\pi} \in \Pi^\star(\psi)]\ge 1-\delta$.
Using the monotonicity of the KL divergence, we deduce that $\kl (\mathbb{P}_\phi[E],\mathbb{P}_\psi[E])\ge \kl(\delta,1-\delta)$. We have established that under any $\delta$-PAC algorithm, the numbers of times $(n_\tau(s,a))_{s,a}$ the different (state, action) pairs are explored satisfy:   
 for any MDP $\psi\in\Alt\phi$,
\begin{equation} \label{eq:information-bdd}
\sum_{s,a}\mathbb{E}_\phi[n_\tau(s,a)]\KL_{\phi \mid \psi} (s,a) \ge \kl (\delta,1-\delta).
\end{equation}
Combining the above constraints with the fact that $\tau = \sum_{s,a}n_\tau(s,a)$, we obtain the following sample complexity lower bound.

\begin{proposition}\label{prop:LBgeneric} The sample complexity of any $\delta$-PAC algorithm satisfies: for any $\phi\in \Phi$,
\begin{equation}\label{eq:lb1}
\mathbb{E}_\phi[\tau] \ge T^*(\phi) \kl (\delta,1-\delta),
\end{equation}
\begin{equation}
\hbox{where }   T^*(\phi)^{-1} = \underset{\omega \in \Sigma}{\sup} \ \underset{\psi \in \Alt\phi}{\inf} \sum_{s,a}\omega_{s a}\textrm{KL}_{\phi|\psi}(s,a).
\label{Prob:main}
\end{equation}
\end{proposition}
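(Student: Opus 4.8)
The plan is to treat \eqref{eq:information-bdd} as the single ingredient carrying all the statistical content, and to convert its family of per-alternative constraints into the claimed variational bound through a normalization argument. All the genuinely hard work (the change of measure, Lemma~\ref{lemma:wald}, the data-processing inequality, and the choice of event $E=\{\hat\pi\notin\Pi^\star(\phi)\}$) has already been spent in deriving \eqref{eq:information-bdd}; what remains is essentially an exercise in rearranging a supremum and an infimum in the correct order.

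First I would dispose of the trivial case: if $\mathbb{E}_\phi[\tau]=+\infty$ then \eqref{eq:lb1} holds vacuously, so I assume $N:=\mathbb{E}_\phi[\tau]<\infty$. Since $\tau=\sum_{s,a}n_\tau(s,a)$ holds pathwise and the counts are nonnegative, linearity of expectation gives $N=\sum_{s,a}\mathbb{E}_\phi[n_\tau(s,a)]$. Moreover $N>0$, since otherwise every expected count would vanish and the left-hand side of \eqref{eq:information-bdd} would be $0$, contradicting the strictly positive lower bound $\kl(\delta,1-\delta)$. This lets me define the realized allocation $\omega_{sa}:=\mathbb{E}_\phi[n_\tau(s,a)]/N$, which by construction is nonnegative and sums to one, hence $\omega\in\Sigma$.

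Next I would substitute $\mathbb{E}_\phi[n_\tau(s,a)]=N\,\omega_{sa}$ into \eqref{eq:information-bdd} to obtain, for every $\psi\in\Alt\phi$,
\[
N\sum_{s,a}\omega_{sa}\,\KL_{\phi|\psi}(s,a)\ \ge\ \kl(\delta,1-\delta).
\]
Because the right-hand side does not depend on $\psi$ and the inequality holds for \emph{all} alternatives simultaneously, I may take the infimum over $\psi\in\Alt\phi$ on the left while preserving it, which (using $N>0$) yields $N\inf_{\psi\in\Alt\phi}\sum_{s,a}\omega_{sa}\KL_{\phi|\psi}(s,a)\ge\kl(\delta,1-\delta)$. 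Finally, since this particular $\omega$ is one admissible point of the simplex, the supremum defining $T^*(\phi)^{-1}$ dominates the value attained at $\omega$, so that $T^*(\phi)^{-1}\ge\inf_{\psi\in\Alt\phi}\sum_{s,a}\omega_{sa}\KL_{\phi|\psi}(s,a)\ge \kl(\delta,1-\delta)/N$; rearranging gives $N\ge T^*(\phi)\,\kl(\delta,1-\delta)$, which is exactly \eqref{eq:lb1}.

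The main subtlety, and the only place where one can reverse the logic, is the combined sup/inf manipulation in the last step. It is essential that the constraint holds for every $\psi$ \emph{before} taking the infimum (so that even the worst-case alternative still satisfies it), and that the supremum over $\omega$ enters only as a lower bound, by evaluating the objective at the algorithm's own realized allocation. No minimax or saddle-point exchange is needed, since only the ``$\ge$'' direction is required; this is precisely why the argument goes through even though $T^*(\phi)^{-1}$ is the value of a non-convex program. The only structural hypotheses I must remember to invoke are the finiteness reduction and $N>0$, both immediate from the $\delta$-PC property and \eqref{eq:information-bdd}.
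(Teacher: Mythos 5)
Your proof is correct and takes essentially the same route as the paper: after establishing \eqref{eq:information-bdd}, the paper obtains Proposition~\ref{prop:LBgeneric} precisely by normalizing the expected counts $\mathbb{E}_\phi[n_\tau(s,a)]$ by $\mathbb{E}_\phi[\tau]$ into an allocation $\omega\in\Sigma$, taking the infimum over $\psi\in\Alt\phi$, and lower-bounding the supremum in \eqref{Prob:main} by its value at this realized allocation. Your write-up simply makes explicit the finiteness of $\mathbb{E}_\phi[\tau]$ and the positivity of the normalizer, which the paper leaves implicit.
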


In the above proposition, $\omega_{s a}\kl (\delta,1-\delta)$ can be interpreted as the expected proportion of times the pair $(s,a)$ is explored under the algorithm. Taking the supremum over $\omega$ then corresponds to selecting an optimal sampling rule. In the following, $\omega$ is referred to as the allocation vector.
%
%
%
%
%

\subsection{Properties of the problem (\ref{Prob:main})}
We now provide useful properties of the optimization problem (\ref{Prob:main}). Additional properties of the problem are presented in Appendix \ref{sec:appProp}.

\medskip
\noindent
{\bf (i) The set of alternative MDPs.} To simplify the notation we use $\pi^\star$ instead of $\pi_{\phi}^\star$. Our first result concerns the set $\Alt\phi$ of {\it alternative} MDPs:

\begin{lemma} $\Alt\phi = \underset{s, a\neq \pi^\star(s)}{\bigcup} \mathrm{Alt}_{sa}(\phi)$ where
$$\mathrm{Alt}_{sa}(\phi)=\{\psi :  Q_{\psi}^{\pi^\star}(s,a) > V_{\psi}^{\pi^\star}(s) \}.$$ 
\label{lemma:2}
\end{lemma}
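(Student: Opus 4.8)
The plan is to translate the set-membership condition $\Pi^\star_\phi \cap \Pi^\star_\psi = \emptyset$ into a statement purely about the (sub)optimality of the single policy $\pi^\star$ under $\psi$, and then to invoke the classical characterization of optimal stationary deterministic policies through the Bellman optimality equations. First I would use Assumption 1: since $\phi$ admits a unique optimal policy, $\Pi^\star_\phi = \{\pi^\star\}$, so $\Pi^\star_\phi \cap \Pi^\star_\psi = \emptyset$ holds if and only if $\pi^\star \notin \Pi^\star_\psi$, i.e.\ $\pi^\star$ is \emph{not} optimal for $\psi$. Hence $\Alt\phi = \{\psi : \pi^\star \text{ is not optimal for } \psi\}$, and it remains to show that $\pi^\star$ fails to be optimal for $\psi$ exactly when $Q^{\pi^\star}_\psi(s,a) > V^{\pi^\star}_\psi(s)$ for some state $s$ and some action $a \neq \pi^\star(s)$.

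The key step is the standard optimality criterion: a deterministic policy $\pi^\star$ is optimal for $\psi$ if and only if $Q^{\pi^\star}_\psi(s,a) \le V^{\pi^\star}_\psi(s)$ for all $(s,a)$, equivalently $V^{\pi^\star}_\psi(s) = \max_a Q^{\pi^\star}_\psi(s,a)$ for all $s$. To prove this I would recall the Bellman optimality operator $(\mathcal{T} V)(s) = \max_a [\, r_\psi(s,a) + \gamma \sum_{s'} p_\psi(s'|s,a) V(s') \,]$, which is a $\gamma$-contraction in the sup-norm and whose unique fixed point is $V^\star_\psi$. Since $Q^{\pi^\star}_\psi(s,a) = r_\psi(s,a) + \gamma \sum_{s'} p_\psi(s'|s,a) V^{\pi^\star}_\psi(s')$, the identity $V^{\pi^\star}_\psi = \max_a Q^{\pi^\star}_\psi$ is precisely $\mathcal{T} V^{\pi^\star}_\psi = V^{\pi^\star}_\psi$; by uniqueness of the fixed point this forces $V^{\pi^\star}_\psi = V^\star_\psi$, i.e.\ $\pi^\star$ is optimal. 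Conversely, if $\pi^\star$ is optimal then $V^{\pi^\star}_\psi = V^\star_\psi$ satisfies the Bellman optimality equation, giving $V^{\pi^\star}_\psi(s) = \max_a Q^{\pi^\star}_\psi(s,a) \ge Q^{\pi^\star}_\psi(s,a)$ for all $(s,a)$.

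Finally I would negate this criterion: $\pi^\star$ is not optimal for $\psi$ if and only if there exist $s,a$ with $Q^{\pi^\star}_\psi(s,a) > V^{\pi^\star}_\psi(s)$. Because $Q^{\pi^\star}_\psi(s,\pi^\star(s)) = V^{\pi^\star}_\psi(s)$ always holds, the strict inequality can never be realized at $a = \pi^\star(s)$, so the offending action must satisfy $a \neq \pi^\star(s)$. This shows $\psi \in \Alt\phi$ if and only if $\psi \in \bigcup_{s,\,a\neq\pi^\star(s)} \mathrm{Alt}_{sa}(\phi)$, which is the claim. The main obstacle here is not really technical: it is packaging the textbook fact (Puterman-style) that one-step greedy consistency of a stationary deterministic policy is equivalent to its global optimality, which rests on the uniqueness of the fixed point of the contraction $\mathcal{T}$. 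The only point demanding care is the observation that strictness is automatically confined to non-$\pi^\star$ actions, which is exactly what produces the restricted union over $a \neq \pi^\star(s)$.
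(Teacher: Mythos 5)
Your proof is correct, and it reaches the lemma by a somewhat different route than the paper. Both arguments share the same skeleton: by Assumption 1, $\Alt\phi$ is exactly the set of $\psi$ under which $\pi^\star$ fails to be optimal, so the lemma reduces to the one-step improvement criterion, namely that $\pi^\star$ is optimal under $\psi$ if and only if $Q_\psi^{\pi^\star}(s,a) \le V_\psi^{\pi^\star}(s)$ for all $(s,a)$. The difference lies in how this criterion is established. You invoke the Bellman optimality operator $\mathcal{T}$, its $\gamma$-contraction property, and the uniqueness of its fixed point: greedy consistency of $V_\psi^{\pi^\star}$ makes it a fixed point of $\mathcal{T}$, hence equal to $V_\psi^\star$. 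The paper instead runs two contradiction arguments using only the monotonicity of policy-specific Bellman operators $\bcal_\psi^\pi$ and the convergence of their iterates: for one inclusion it applies $\bigl(\bcal_\psi^{\pi_\psi^\star}\bigr)^n$ to $V_\psi^{\pi^\star}$ and passes to the limit to conclude $V_\psi^\star \le V_\psi^{\pi^\star}$; for the other it constructs the one-step deviation policy $\pi_1$ (equal to $a$ at $s$, to $\pi^\star$ elsewhere) and iterates $\bcal_\psi^{\pi_1}$ to exhibit a policy strictly better than $\pi^\star$. Your version is shorter because it cites the fixed-point theorem as a black box, whereas the paper's is self-contained, relying only on monotonicity (the standard policy-improvement mechanism) and never on uniqueness of the optimality operator's fixed point. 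Your closing observation, that the strict inequality can never occur at $a = \pi^\star(s)$ since $Q_\psi^{\pi^\star}(s,\pi^\star(s)) = V_\psi^{\pi^\star}(s)$, which is what confines the union to $a \neq \pi^\star(s)$, mirrors exactly the step in the paper's first inclusion where the hypothesis on suboptimal actions is extended to all pairs.
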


The above lemma states that an alternative MDP $\psi$ is such that $\pi^\star$, the optimal policy of $\phi$, can be improved under $\psi$ locally at some state $s$, by selecting in $s$ some previously sub-optimal action $a$, instead of $\pi^\star(s)$. Using this lemma, we can simplify the expression of the characteristic time appearing in Proposition \ref{prop:LBgeneric}. Indeed, (\ref{Prob:main}) is equivalent to: 
\begin{equation}
    \underset{\omega \in \Sigma}{\sup}  \min_{s, a\neq \pi^\star(s)}\ \underset{\psi \in \mathrm{Alt}_{sa}(\phi)}{\inf} \sum_{s',a'}\omega_{s',a'}\textrm{KL}_{\phi|\psi}(s',a'). 
\label{eq:rewritten_problem}
\end{equation}

Next, we rewrite the problem in an analytic manner. To this aim, we parametrize $\psi$ by its transition probabilities and rewards $u = (q_{\psi}(s,a), p_{\psi}(s,a))_{s,a \in \scal\times \acal}$ and introduce the following notations: for all $(s,a)$, $dr(s,a) = (r_{\psi}- r_{\phi})(s,a)$ and $dp(s,a) = (p_{\psi}- p_{\phi})(s,a)$. Further define $dV^{\pi^\star}= \left([V_{\psi}^{\pi^\star}- V_{\phi}^{\pi^\star}](s) \right)_{s \in \scal}$. 

Combining the condition : $Q_{\psi}^{\pi^\star}(s,a) > V_{\psi}^{\pi^\star}(s) $ with the fact that $Q_{\phi}^{\pi^\star}(s,a) +\Delta_{sa} = V_{\phi}^{\pi^\star}(s) $ we obtain that $\psi \in \mathrm{Alt}_{sa}(\phi)$ if and only if:
\begin{align}
\nonumber \Delta_{sa}  <\  &  dr(s,a)  + \gamma dp(s,a)^{\top} V_{\phi}^{\pi^\star}\\
&+ [\gamma p_{\psi}(s,a) - \mathbbm{1}(s)] ^{\top}dV^{\pi^\star}.
\label{eq:rewritten_conditions_with_kernels}
\end{align}
The above inequality states that for $\psi$ to be in $\mathrm{Alt}_{s a}(\phi)$, the changes in the rewards and transitions between $\phi$ and $\psi$ should be greater than the sub-optimality gap of action $a$ in state $s$. Defining ${\cal U}_{s a}=\{u :$ (\ref{eq:rewritten_conditions_with_kernels}) holds$\}$, we conclude that both the optimization problems (\ref{Prob:main}) and (\ref{eq:rewritten_problem}) are equivalent to: 
\begin{equation}
    \underset{\omega \in \Sigma}{\sup} \min_{s, a\neq \pi^\star(s)}\ \underset{u \in {\cal U}_{s a}}{\inf} \sum_{s',a'}\omega_{s',a'}\textrm{KL}_{\phi|\psi}(s',a'). 
\label{eq:analytic_problem}
\end{equation}

\medskip
\noindent
{\bf (ii) Non-convexity of the problem (\ref{Prob:main}).} The characteristic time $T^*(\phi)$, as well as the optimal sampling rule are characterized by the solution of (\ref{Prob:main}) or that of (\ref{eq:rewritten_problem}). If we think of a track-and-stop algorithm to identify the best policy (as proposed in \cite{garivier16a} for the simple MAB problem), one would need to repeatedly solve these optimization problems. It is then important to be able to do it in a computationally efficient way. Unfortunately, these problems are probably very hard to solve. This is well illustrated by the fact that the following sub-problem is not convex: 
\begin{equation}\label{eq:subprob}
T(\phi,\omega)^{-1} = \underset{\psi \in \Alt\phi}{\inf} \sum_{s,a}\omega_{s a}\textrm{KL}_{\phi|\psi}(s,a).
\end{equation}
Actually, in the example presented in Fig. \ref{fig:example1}, we can specify $\phi$ such that the sets $\Alt\phi$ and $\mathrm{Alt}_{sa}(\phi)$ are not convex. 

\begin{figure}[htb]
\center
\includegraphics[width= 0.8\textwidth]{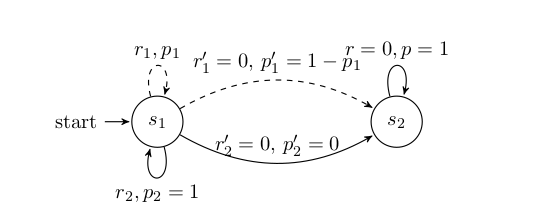}
\caption{A class of two-state MDPs, with $\gamma=0.9$. Actions $a_1$ and $a_2$ are available in state $s_1$. State $s_2$ is absorbing. Dashed (resp. full) arrows indicate the transitions when action $a_1$ (resp. $a_2$) is chosen. Numbers above each arrow indicate the transition probability and the average reward, e.g. $p_2' =\mathbb{P}[s_2 | s_1,a_2]$.}
\label{fig:example1}
\end{figure}

Consider $\phi, \psi, \overline{\psi}$ belonging to the class of MDPs specified in Fig. \ref{fig:example1}, each defined by the vector $(r_2,r_1, p_1)$ (all other parameters values are fixed as in the figure): 
\begin{equation*}
\begin{cases}
\psi = (r_2 = 0.25,\ r_1 = 0.93,\ p_1 = 0.7)\\
\overline{\psi} = (r_2 = 0.1,\ r_1 = 0.47,\ p_1 = 0.6)\\ 
\phi = \frac{\psi +\overline{\psi}}{2} = (r_2 = 0.175,\ r_1 = 0.6925,\ p_1 = 0.65)
\end{cases}
\end{equation*}
Then a simple calculation shows that the pair $(s_1,a_1)$ is optimal : $\frac{r_1}{1- \gamma p_1} > \frac{r_2}{1- \gamma p_2}$ for both $\psi$ and $\overline{\psi}$, while it is sub-optimal : $\frac{r_1}{1- \gamma p_1} < \frac{r_2}{1- \gamma p_2}$ for $\phi$. In other words, both $\psi$ and $\overline{\psi}$ are in $\Alt\phi$ and $\mathrm{Alt}_{s_1 a_1}(\phi)$ but their average is not: $\frac{\psi +\overline{\psi}}{2} = \phi \notin \Alt\phi$.
Therefore the sets $\Alt\phi$ and $\mathrm{Alt}_{s_1 a_1}(\phi)$ are not convex. Observe that this non-convexity does not arise in simple MAB problems. Indeed, there, the set of parameters (e.g., the average reward vectors $\mu=(\mu_1,\ldots,\mu_K)$) such that a given arm is optimal is always convex, i.e., $\{ \mu: \mu_k > \max_{j\ne k}\mu_j\}$ is convex.

%
%

\subsection{Upper bound of $T^*(\phi)$}

We use the analytic version (\ref{eq:analytic_problem}) of the optimization problem that defines the sample complexity lower bound to derive a simple (but still problem-specific) upper bound of the characteristic time $T^*(\phi)$. The upper bound actually corresponds to a sampling rule that is explicit, i.e., we do not need to solve any optimization problem to get it. Using this upper bound and the corresponding sampling rule, we will be able to devise a simple track-and-stop algorithm with provable performance guarantees. In addition, the upper bound has the right dependence in the sub-optimality gaps, and we also prove that it remains smaller than existing minimax sample complexity lower bounds.

Before we state the main result leading to our upper bound, we introduce additional notations. \\
$\bullet$ $\Delta_{\min} = \underset{s,a\neq \pi^\star (s)}{\min} \Delta_{sa}$ denotes the minimum sub-optimality gap in $\phi$. \\
$\bullet$ $\mathrm{Var}_{p_{\phi}(s,a)}[\starV{\phi}] = \mathrm{Var}_{s'\sim p_{\phi}(.|s,a)}[\starV{\phi}(s')]$ (resp. $\mathrm{MD}_{ p_{\phi}(s,a)}[\starV{\phi}] = \norm{\starV{\phi} - \EE_{s'\sim p_{\phi}(.|s,a)}[\starV{\phi}(s')]}_{\infty}$) is the variance (resp. maximum deviation from the mean) of the next-state value after taking  state-action pair $(s,a)$.\\
$\bullet$ $\mathrm{Var}_{\max}^{\star}[\starV{\phi}] = \underset{s}{\max}\ \mathrm{Var}_{ p_{\phi}(s,\pi^\star(s))}[\starV{\phi}]$ (resp. $\mathrm{MD}_{\max}^{\star}[\starV{\phi}] = \underset{s}{\max}\ \mathrm{MD}_{p_{\phi}(s,\pi^\star(s))}[\starV{\phi}]$) is the maximum variance (resp. maximum deviation) of the next-state value after taking an optimal action.

\begin{theorem} We have:
\begin{equation}
T^*(\phi) \leq \underset{\omega \in \Sigma}{\inf}\ \bigg(\underset{s,a\neq \pi^\star (s)}{\max}\ \frac{T_1(s,a;\phi) + T_2(s,a;\phi)}{\omega_{s a}} + \frac{T_3(\phi) + T_4(\phi)}{\underset{s}{\min}\ \omega_{s,\pi^\star(s)}} \bigg), 
\label{eq:upper_bound_problem}
\end{equation}
where
\begin{align}
&T_1(s,a;\phi) \triangleq \displaystyle{\frac{2}{\Delta_{sa}^2}}, \\
&T_2(s,a;\phi) \triangleq \max\bigg(\frac{16\mathrm{Var}_{ p_{\phi}(s,a)}[\starV{\phi}] }{\Delta_{sa}^2},\ \frac{6\mathrm{MD}_{p_{\phi}(s,a)}[\starV{\phi}]^{4/3}}{\Delta_{sa} ^{4/3}}\bigg),\\
&T_3(\phi) \triangleq \displaystyle{\frac{2}{[\Delta_{\min} (1-\gamma) ]^2}},\\
\\
&T_4(\phi) \triangleq  \min\Bigg(\frac{27}{\Delta_{\min}^2(1-\gamma)^3},\ \max\bigg(\frac{16\mathrm{Var}_{\max}^{\star}[\starV{\phi}]}{\Delta_{\min}^2 (1-\gamma)^2}, \frac{6\mathrm{MD}_{\max}^{\star}[\starV{\phi}]^{4/3}}{\Delta_{\min}^{4/3} (1-\gamma)^{4/3}} \bigg)\Bigg).
\label{eq:four_terms}
\end{align}

%

\label{theorem:pre-upper-bound}
\end{theorem}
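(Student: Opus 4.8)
The plan is to turn the upper bound into a pointwise statement about the inner value of (\ref{eq:analytic_problem}). Writing $g(\omega)=\min_{s,a\neq\pi^\star(s)}\inf_{u\in{\cal U}_{sa}}\sum_{s',a'}\omega_{s',a'}\textrm{KL}_{\phi|\psi}(s',a')$ for the value at a fixed allocation, (\ref{eq:analytic_problem}) reads $T^*(\phi)^{-1}=\sup_{\omega\in\Sigma}g(\omega)$, hence $T^*(\phi)=\inf_{\omega\in\Sigma}g(\omega)^{-1}$. Since $\inf f\le\inf h$ whenever $f\le h$ pointwise, to obtain $T^*(\phi)\le\inf_\omega B(\omega)$ (with $B(\omega)$ the right-hand bracket) it suffices to prove $g(\omega)\ge B(\omega)^{-1}$ for every $\omega$. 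Because $T_3(\phi),T_4(\phi)$ depend on $\phi$ only through $\Delta_{\min}$ and are common to all pairs, it is in fact enough to establish, for each $(s,a)$ with $a\neq\pi^\star(s)$ and each feasible $\psi\in{\cal U}_{sa}$, the per-pair inequality
$$\sum_{s',a'}\omega_{s',a'}\textrm{KL}_{\phi|\psi}(s',a')\ \ge\ \Big(\frac{T_1(s,a;\phi)+T_2(s,a;\phi)}{\omega_{sa}}+\frac{T_3(\phi)+T_4(\phi)}{\min_{s'}\omega_{s',\pi^\star(s')}}\Big)^{-1};$$
taking the infimum over $\psi$ and the minimum over $(s,a)$, and noting that the constant second term pulls out of the $\max$, recovers $g(\omega)\ge B(\omega)^{-1}$.

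To prove the per-pair bound I would start from the analytic characterization (\ref{eq:rewritten_conditions_with_kernels}) of ${\cal U}_{sa}$, which forces $\Delta_{sa}<A+C$, where $A=dr(s,a)+\gamma\,dp(s,a)^\top V^{\pi^\star}_\phi$ gathers the local perturbation at $(s,a)$ and $C=[\gamma p_\psi(s,a)-\mathbbm{1}(s)]^\top dV^{\pi^\star}$ gathers the value-function perturbation. The decisive structural point is disjointness: the quantity $A$ is controlled by the reward and transition changes at the single sub-optimal pair $(s,a)$, whereas $dV^{\pi^\star}$ is governed — through the Bellman fixed point $V^{\pi^\star}_\psi(\cdot)=r_\psi(\cdot,\pi^\star(\cdot))+\gamma\,p_\psi(\cdot,\pi^\star(\cdot))^\top V^{\pi^\star}_\psi$ — solely by the changes at the \emph{optimal} pairs $(s',\pi^\star(s'))$. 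Since $a\neq\pi^\star(s)$ these two families of pairs are disjoint, so the KL cost splits additively into a local term weighted by $\omega_{sa}$ and a trajectory term weighted by masses that are each at least $\min_{s'}\omega_{s',\pi^\star(s')}$.

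I would then bound each cost by how cheaply it can purchase its contribution. For the local term, splitting $A$ into reward and transition parts, Pinsker's inequality on $[0,1]$ gives $\textrm{KL}(q_\phi(s,a)\|q_\psi(s,a))\ge 2\,dr(s,a)^2$, producing $T_1=2/\Delta_{sa}^2$, while a Bernstein-type transportation inequality controls $|(p_\psi-p_\phi)(s,a)^\top\starV{\phi}|$ by $\sqrt{2\,\mathrm{Var}_{p_\phi(s,a)}[\starV{\phi}]\,\textrm{KL}(p_\phi\|p_\psi)}$ up to a maximum-deviation correction; inverting this relation yields exactly the crossover $\max$ between the variance term and the $\mathrm{MD}^{4/3}$ term defining $T_2$. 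For the trajectory term the same two inequalities are applied at the optimal pairs, but the perturbation must now be propagated through the resolvent $(I-\gamma P^{\pi^\star}_\psi)^{-1}$; the extra factors $(1-\gamma)^{-1}$ it contributes, together with the coarse range bound $\|\starV{\phi}\|_\infty\le 1/(1-\gamma)$, account for the $(1-\gamma)$ powers in $T_3$ and for the two branches of $T_4$ combined by its inner $\min$ (the crude $(1-\gamma)^{-3}$ alternative versus the variance-refined one). Finally I would allocate the budget $\Delta_{sa}$ between the two contributions: since each cost is essentially quadratic in the contribution it buys, a Cauchy--Schwarz/Lagrange minimization of $\tfrac{\kappa_1}{\omega_{sa}}A^2+\tfrac{\kappa_2}{\min_{s'}\omega_{s',\pi^\star(s')}}C^2$ subject to $A+C=\Delta_{sa}$ returns the value $\Delta_{sa}^2/(\kappa_1/\omega_{sa}+\kappa_2/\min_{s'}\omega_{s',\pi^\star(s')})$, which is precisely the reciprocal-of-sum form of the per-pair bound once $\kappa_1=\Delta_{sa}^2(T_1+T_2)$ and $\kappa_2=\Delta_{sa}^2(T_3+T_4)$.

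The hard part will be the trajectory term. Unlike the local pair, the map from the optimal-pair perturbations to $dV^{\pi^\star}$ is implicit and nonlinear, since $V^{\pi^\star}_\psi$ is a Bellman fixed point involving the \emph{perturbed} kernel $P^{\pi^\star}_\psi$ itself; controlling it with the correct variance-dependent constants and $(1-\gamma)$ scaling — rather than the naive range-only estimate that would inflate everything by powers of $1/(1-\gamma)$ — demands a careful resolvent bound together with the delicate $\max(\cdot,\cdot)$ and exponent-$4/3$ bookkeeping of the Bernstein inequality. A further nuisance to absorb is the coupling inside $C$, where the new transition $p_\psi(s,a)$ at the sub-optimal pair multiplies $dV^{\pi^\star}$; this can be handled by bounding $p_\psi(s,a)$ in terms of $p_\phi(s,a)$ plus the transition cost already accounted for in the local term.
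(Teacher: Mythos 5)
Your plan follows essentially the same route as the paper's proof: the reduction to a per-allocation, per-pair lower bound on the weighted KL sum (the paper's equation (\ref{eq:reduced_objective})), the decomposition of the constraint (\ref{eq:rewritten_conditions_with_kernels}) into local perturbations at $(s,a)$ and perturbations at the optimal pairs propagated through the resolvent, Pinsker's inequality for the reward pieces, a variance/maximum-deviation transportation inequality (the paper's Lemma \ref{lemma:hellinger}) for the transition pieces, and a final Lagrange/harmonic-mean allocation of the budget $\Delta_{sa}$ (the paper optimizes over $\alpha\in\Sigma_4$ and obtains $\big(\sum_i 1/B_i'\big)^{-1}$). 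Two slips in your write-up are cosmetic: your Pinsker constant is inconsistent with your claimed $T_1$, and in the final quadratic minimization the coefficients are inverted --- the minimum of $aA^2+bC^2$ under $A+C=\Delta_{sa}$ is $\Delta_{sa}^2/(a^{-1}+b^{-1})$, so the value you state is the correct one for the objective with coefficients $\omega_{sa}/\kappa_1$ and $\min_{s'}\omega_{s',\pi^\star(s')}/\kappa_2$, which is indeed the correct KL lower bound.

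The genuine gap concerns the first branch of $T_4(\phi)$, namely $27/(\Delta_{\min}^2(1-\gamma)^3)$. You attribute it to the ``coarse range bound $\norm{\starV{\phi}}_\infty\le 1/(1-\gamma)$'' combined with the $(1-\gamma)^{-1}$ factor of the resolvent, but that computation cannot produce $(1-\gamma)^{-3}$: writing the fourth term as $\gamma[\gamma p_\psi(s,a)-\mathbbm{1}(s)]^\top(I-\gamma P_\psi^{\pi^\star})^{-1}[P_\psi^{\pi^\star}-P_\phi^{\pi^\star}]\starV{\phi}$ and bounding $|dp(s',\pi^\star(s'))^\top\starV{\phi}|\le \norm{dp(s',\pi^\star(s'))}_1\norm{\starV{\phi}}_\infty\le\sqrt{2\,\mathrm{KL}}/(1-\gamma)$ yields $|\alpha_4|\Delta_{sa}\lesssim \sqrt{\mathrm{KL}}/(1-\gamma)^2$, i.e.\ a sample-complexity cost of order $1/(\Delta_{\min}^2(1-\gamma)^4)$. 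The paper gains the missing factor of $(1-\gamma)$ through a genuinely different ingredient (Lemma \ref{lemma:second_bound_fourth_term}): it converts the per-state KL into per-state standard deviations $\sqrt{\sigma^{\pi^\star}(s)}$ via Lemma \ref{lemma:hellinger}, then invokes the law-of-total-variance bound of \cite{azar2013minimax} (restated as Lemma \ref{lemma:munos_variance}), $\norm{(I-\gamma P_\phi^{\pi^\star})^{-1}\sqrt{\sigma^{\pi^\star}}}_\infty\le 2\log(2)/(1-\gamma)^{3/2}$, together with a swap of resolvents controlled by $\norm{(I-\gamma P_\psi^{\pi^\star})^{-1}(I-\gamma P_\phi^{\pi^\star})}_\infty$. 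This is exactly the ``careful resolvent bound'' you defer to later without supplying. Without it you only obtain inequality (\ref{ineq:5}), hence a version of the theorem in which $T_4$ equals its second branch alone (the $\min$ is lost); that is strictly weaker than the statement, and in the worst case $\mathrm{Var}_{\max}^{\star}[\starV{\phi}]\approx (1-\gamma)^{-2}$ the resulting $U(\phi)$ scales as $(1-\gamma)^{-4}$, forfeiting the minimax-matching $\ocal\big(SA/(\Delta_{\min}^2(1-\gamma)^3)\big)$ guarantee that the paper derives from this theorem.
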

The proof of the theorem relies on writing each of the difference terms $dr(s,a)$, $dp(s,a)$, $dr^{\pi^\star}$ and $dp^{\pi^\star}$ involved in the constraint (\ref{eq:rewritten_conditions_with_kernels}) as a proportion of the sub-optimality gap $\Delta_{sa}$. Then, using classical f-divergences inequalities, as well as a variance inequality from \cite{azar2013minimax}, we relate each difference term to the KL divergences appearing in the objective function of the problem (\ref{eq:analytic_problem}). With this perspective in mind, the terms $T_1(s,a;\phi)$ and $T_2(s,a;\phi)$ can be interpreted as the sample complexity costs to learn the reward of (state,action) pair $(s,a)$ and the corresponding transition probabilities, respectively. Similarly, the terms $T_3(\phi)$ and $T_4(\phi)$ are interpreted as the sample complexity costs to estimate the future rewards collected from the next state and the transitions from the next state. 

\begin{corollary} Let $H_{s a} \triangleq T_1(s,a;\phi) + T_2(s,a;\phi)$ and $H^\star \triangleq  S(T_3(\phi) + T_4(\phi))$. Then the solution of the problem (\ref{eq:upper_bound_problem}) is given by the unique allocation vector $\overline{\omega} \in \Sigma$ defined by ($\sim$ means {\it proportional to}): for all $s\in {\cal S}$, 
\begin{equation}
\left\{
\begin{array}{l}
\overline{\omega}_{s,\pi^\star(s)} \sim {1\over S}{\sqrt{H^\star (\sum_{s, a\neq \pi^\star(s)}H_{s a})}},\\ 
\overline{\omega}_{s,a} \sim H_{s a}, \ \ \ \hbox{for }s, a\neq \pi^\star(s).
\end{array}
\right.
\label{eq:optimal_weights}
\end{equation}
%
This allocation yields the following upper bound: 
\begin{equation}
T^*(\phi) \leq U(\phi) \triangleq 2(H^\star + \sum_{s, a\neq \pi^\star(s)}H_{s a}).
\label{eq:upper_bound}
\end{equation}
\label{corollary:upper_bound}
\end{corollary}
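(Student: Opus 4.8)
The plan is to solve the optimization problem in \eqref{eq:upper_bound_problem} explicitly. Writing $f(\omega) = \max_{s,a\neq \pi^\star(s)} \frac{H_{sa}}{\omega_{sa}} + \frac{T_3+T_4}{\min_s \omega_{s,\pi^\star(s)}}$, I first observe that at any optimizer the two structural features of the objective can be pinned down. For the sub-optimal pairs, since the objective depends on the $\omega_{sa}$ only through the single $\max$ term $\max_{s,a\neq\pi^\star(s)} H_{sa}/\omega_{sa}$, and each such $\omega_{sa}$ appears in exactly one ratio, at the optimum all these ratios must be equal: $H_{sa}/\omega_{sa} = c$ for a common constant $c$, for otherwise one could shift mass from a pair with a small ratio to the pair achieving the max and strictly decrease $f$ without violating the simplex constraint. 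This forces $\omega_{sa} \sim H_{sa}$, giving $\max_{s,a\neq\pi^\star(s)} H_{sa}/\omega_{sa} = \big(\sum_{s,a\neq\pi^\star(s)} H_{sa}\big)/W_{\mathrm{sub}}$ where $W_{\mathrm{sub}} = \sum_{s,a\neq\pi^\star(s)}\omega_{sa}$ is the total mass on sub-optimal pairs. Symmetrically, the $\min_s \omega_{s,\pi^\star(s)}$ in the denominator means the second term is minimized (for a fixed budget $W_\star = \sum_s \omega_{s,\pi^\star(s)}$ on optimal pairs) by spreading that budget equally, $\omega_{s,\pi^\star(s)} = W_\star/S$, so that the term equals $S(T_3+T_4)/W_\star = H^\star/W_\star$.

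With these reductions the problem collapses to a one-dimensional trade-off: minimize $g(W_\star) = \big(\sum_{s,a\neq\pi^\star(s)} H_{sa}\big)/(1-W_\star) + H^\star/W_\star$ over $W_\star \in (0,1)$, where I have used $W_{\mathrm{sub}} = 1 - W_\star$ from the simplex constraint. This is a standard convex one-variable minimization: differentiating and setting $g'(W_\star)=0$ gives $\big(\sum H_{sa}\big)/(1-W_\star)^2 = H^\star/W_\star^2$, i.e. $W_\star/(1-W_\star) = \sqrt{H^\star / \sum H_{sa}}$, which has a unique solution in $(0,1)$ because the left side is strictly increasing from $0$ to $\infty$. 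Substituting back yields $\min g = \big(\sqrt{H^\star} + \sqrt{\sum H_{sa}}\big)^2 = H^\star + \sum H_{sa} + 2\sqrt{H^\star \sum H_{sa}}$.

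From the explicit value of $W_\star$ I can read off the allocation: $\omega_{s,\pi^\star(s)} = W_\star/S$, and since $W_\star \sim \sqrt{H^\star}$ and $1-W_\star \sim \sqrt{\sum H_{sa}}$ up to the common normalizing constant, this gives $\omega_{s,\pi^\star(s)} \sim \frac{1}{S}\sqrt{H^\star \sum_{s,a\neq\pi^\star(s)} H_{sa}}$ after clearing the shared factor, matching \eqref{eq:optimal_weights}; the sub-optimal weights $\omega_{sa}\sim H_{sa}$ were already established. Uniqueness of $\overline\omega$ follows from the uniqueness of the equal-ratio and equal-split conditions together with the unique root $W_\star$. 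Finally, to obtain the clean bound \eqref{eq:upper_bound} I relax the exact optimum by the elementary inequality $2\sqrt{H^\star \sum H_{sa}} \le H^\star + \sum H_{sa}$ (AM–GM), which turns $\min g$ into $2\big(H^\star + \sum_{s,a\neq\pi^\star(s)} H_{sa}\big) = U(\phi)$; combined with Theorem \ref{theorem:pre-upper-bound} this gives $T^*(\phi)\le U(\phi)$.

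The main obstacle is carrying out the reduction of the $\max$ and $\min$ terms rigorously rather than heuristically: I must argue that an optimal $\omega$ exists (the objective is continuous and coercive on the relevant region, blowing up as any weight tends to $0$, so the infimum is attained in the interior of the simplex) and that at such an optimizer the equalization claims genuinely hold, using a perturbation argument that any imbalance admits a strictly improving feasible move. Once existence and the two equalization lemmas are in place, the remaining calculus is routine. I would take care that the proportionality constants in \eqref{eq:optimal_weights} are consistent across the two cases, i.e. that a single normalization makes $\overline\omega$ lie in $\Sigma$, which is automatic since the stated proportionalities determine $\overline\omega$ only up to scale and the simplex constraint fixes that scale.
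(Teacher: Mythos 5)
Your proposal is correct, and it recovers exactly the paper's explicit solution (\ref{eq:optimal_weights_full_def}) and optimal value $\Sigma_H + H^\star + 2\sqrt{H^\star \Sigma_H}$, where $\Sigma_H = \sum_{s,a\neq\pi^\star(s)} H_{sa}$, finishing with the same AM--GM relaxation $2\sqrt{H^\star\Sigma_H}\le H^\star+\Sigma_H$ to get $U(\phi)$. The route differs only in how the optimization is solved: the paper first imposes equal weights on optimal pairs, then introduces an epigraph variable $t\ge H_{sa}/\omega_{sa}$ to rewrite the problem as a convex program and reads the solution off the KKT conditions; you instead establish the two equalization properties directly (ratios $H_{sa}/\omega_{sa}$ all equal at the optimum via a mass-shifting perturbation, and equal splitting of the optimal-pair budget since only the minimum enters), which collapses the problem to the one-variable trade-off $g(W_\star)=\Sigma_H/(1-W_\star)+H^\star/W_\star$ solved by elementary calculus (and $g''>0$ gives uniqueness of the split, hence of $\overline\omega$). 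Your version is more self-contained --- it needs no convex-duality machinery and makes the existence/uniqueness and the structure of the optimizer fully explicit --- while the paper's KKT formulation is shorter on the page precisely because it leaves those equalization facts implicit in the stationarity conditions. Both yield identical allocations, since $\frac{1}{S}\cdot\frac{\sqrt{H^\star}}{\sqrt{H^\star}+\sqrt{\Sigma_H}} = \frac{1}{S}\cdot\frac{\sqrt{H^\star\Sigma_H}}{\Sigma_H+\sqrt{H^\star\Sigma_H}}$, so the proportionality constants are consistent as you note.
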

In the previous corollary, $\overline{\omega}_{s,a}$ is the optimal proportion of times $(s,a)$ should be sampled, and hence for $s, a\neq \pi^\star(s)$, $H_{s a}$ corresponds to the {\it hardness} of learning that $(s,a)$ is sub-optimal. It scales as the inverse of the square of the gap $\Delta_{sa}$ and is proportional to the variance of future rewards after taking $(s,a)$. 

Further observe that since the rewards are normalized, we always have: for all $(s,a)$, $\mathrm{Var}_{p_{\phi}(s,a)}[\starV{\phi}] \leq \frac{1}{(1-\gamma)^2}$ and $\mathrm{MD}_{p_{\phi}(s,a)}[\starV{\phi}]\leq \frac{1}{(1-\gamma)}$. In addition, we show in Lemma \ref{lemma:delta_min} (see Appendix E) that $\Delta_{\min}$ is always smaller than 1. These observations allow us to upper bound $T_1(s,a;\phi)$, $T_2(s,a;\phi)$, $T_3(\phi)$ and $T_4(\phi)$, and to prove the following corollary.

\begin{corollary}
We have: $U(\phi) = \ocal\left(\frac{SA}{\Delta_{\min}^2 (1-\gamma)^3}\right).$
\end{corollary}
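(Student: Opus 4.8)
The plan is to prove the claimed order by bounding each of the four functionals $T_1,T_2,T_3,T_4$ crudely and summing, using exactly the three facts recalled just before the statement: for every $(s,a)$ we have $\mathrm{Var}_{p_{\phi}(s,a)}[\starV{\phi}]\le (1-\gamma)^{-2}$ and $\mathrm{MD}_{p_{\phi}(s,a)}[\starV{\phi}]\le (1-\gamma)^{-1}$, and $\Delta_{\min}\le 1$. I will also repeatedly use the elementary monotonicity $\Delta_{sa}\ge\Delta_{\min}$ and the fact that $1-\gamma\le 1$, so that any smaller power of $(1-\gamma)^{-1}$ is dominated by $(1-\gamma)^{-3}$. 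Since $U(\phi)=2\big(H^\star+\sum_{s,a\neq\pi^\star(s)}H_{sa}\big)$, it suffices to bound the off-policy sum and $H^\star$ separately and check each lands in $\ocal\!\big(\tfrac{SA}{\Delta_{\min}^2(1-\gamma)^3}\big)$.

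First I would handle a single off-policy term $H_{sa}=T_1(s,a;\phi)+T_2(s,a;\phi)$. Here $T_1=2/\Delta_{sa}^2\le 2/\Delta_{\min}^2$. For $T_2$, the variance bound gives $\tfrac{16\mathrm{Var}_{p_{\phi}(s,a)}[\starV{\phi}]}{\Delta_{sa}^2}\le \tfrac{16}{(1-\gamma)^2\Delta_{\min}^2}$ and the maximum-deviation bound gives $\tfrac{6\,\mathrm{MD}_{p_{\phi}(s,a)}[\starV{\phi}]^{4/3}}{\Delta_{sa}^{4/3}}\le \tfrac{6}{(1-\gamma)^{4/3}\Delta_{\min}^{4/3}}$; since $(1-\gamma)\Delta_{\min}\le 1$ the first of these dominates the second, so $T_2\le \tfrac{16}{(1-\gamma)^2\Delta_{\min}^2}$. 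Hence $H_{sa}=\ocal\!\big(\tfrac{1}{(1-\gamma)^2\Delta_{\min}^2}\big)$. There are at most $S(A-1)\le SA$ off-policy pairs, so $\sum_{s,a\neq\pi^\star(s)}H_{sa}=\ocal\!\big(\tfrac{SA}{(1-\gamma)^2\Delta_{\min}^2}\big)$, which is $\ocal\!\big(\tfrac{SA}{(1-\gamma)^3\Delta_{\min}^2}\big)$ because $1-\gamma\le 1$.

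For $H^\star=S(T_3(\phi)+T_4(\phi))$ I would bound $T_3=\tfrac{2}{\Delta_{\min}^2(1-\gamma)^2}=\ocal\!\big(\tfrac{1}{\Delta_{\min}^2(1-\gamma)^3}\big)$, and for $T_4$ simply take the left branch of the minimum, $T_4\le \tfrac{27}{\Delta_{\min}^2(1-\gamma)^3}$. Thus $H^\star=\ocal\!\big(\tfrac{S}{\Delta_{\min}^2(1-\gamma)^3}\big)\le \ocal\!\big(\tfrac{SA}{\Delta_{\min}^2(1-\gamma)^3}\big)$. Adding the two contributions and multiplying by the constant factor $2$ yields $U(\phi)=\ocal\!\big(\tfrac{SA}{\Delta_{\min}^2(1-\gamma)^3}\big)$, as claimed. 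There is no genuine obstacle here: the argument is entirely routine term-by-term estimation. The only point worth stating explicitly is where the cubic dependence $(1-\gamma)^{-3}$ actually enters, namely through the crude branch of the minimum in $T_4(\phi)$ (the on-policy "future rewards" cost); the off-policy variance terms only produce $(1-\gamma)^{-2}$ and are therefore absorbed once one invokes $1-\gamma\le 1$.
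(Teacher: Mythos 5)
Your proof is correct. Every step checks out: $T_1(s,a;\phi)\le 2/\Delta_{\min}^2$, the comparison $T_2(s,a;\phi)\le 16/((1-\gamma)^2\Delta_{\min}^2)$ using $(1-\gamma)\Delta_{\min}\le 1$, the bound $T_3(\phi)=\ocal\bigl(1/(\Delta_{\min}^2(1-\gamma)^3)\bigr)$, and $T_4(\phi)\le 27/(\Delta_{\min}^2(1-\gamma)^3)$ by keeping the first branch of the minimum; summing over the at most $SA$ suboptimal pairs then gives the claim. Your route is, however, slightly different from the paper's. The paper derives the corollary by plugging the uniform allocation $\omega_{sa}=1/SA$ into the variational bound (\ref{eq:upper_bound_problem}); this controls the infimum over $\omega$, whose value $V_P=\sum_{s,a\neq\pi^\star(s)}H_{sa}+H^\star+2\sqrt{H^\star\sum_{s,a\neq\pi^\star(s)}H_{sa}}$ (computed in the proof of Corollary \ref{corollary:upper_bound}) satisfies $U(\phi)\le 2V_P$ after dropping the nonnegative cross term, so the statement about $U(\phi)$ follows up to a constant. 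You instead bound $U(\phi)=2\bigl(H^\star+\sum_{s,a\neq\pi^\star(s)}H_{sa}\bigr)$ directly from its closed-form definition, which is more elementary (no appeal to the optimization problem, nor to the relation between $U(\phi)$ and the program's value) and in fact slightly tighter: the on-policy contribution stays $H^\star=\ocal\bigl(S/(\Delta_{\min}^2(1-\gamma)^3)\bigr)$, without the extra factor $A$ that the uniform allocation places on the term $(T_3(\phi)+T_4(\phi))/\min_s\omega_{s,\pi^\star(s)}$. What the paper's route buys, and yours does not, is the remark made right after the corollary: since the minimax-type scaling is attained already by naive uniform sampling, the optimized allocation $\overline{\omega}$ — hence KLB-TS — can only improve on it. Your closing observation that the cubic horizon dependence $(1-\gamma)^{-3}$ enters only through the crude branch of the minimum in $T_4(\phi)$, while the off-policy terms contribute only $(1-\gamma)^{-2}$, is accurate and is indeed the structural point of the bound.
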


The above result is obtained by plugging the uniform allocation $\omega_{sa}=1/SA$ in (\ref{eq:upper_bound_problem}). Hence this naive uniform allocation yields an upper bound scaling as the known minimax sample complexity lower bound $\frac{SA}{\Delta_{\min}^2 (1-\gamma)^3}$. This result also implies that a track-and-stop algorithm sampling the pairs $(s,a)$ according to $\overline{\omega}$ will perform better than the minimax bound. This algorithm will become strictly better when $\mathrm{Var}_{\max}^{\star}[\starV{\phi}] = o(1/(1-\gamma))$, i.e., when the variance of the next-state value after taking the optimal action is small.

\section{ALGORITHM}\label{sec:algo} 

In this section, we present KLB-TS (KL-Ball Track-and-Stop), an algorithm that selects the successive (state, action) pairs so as to track the allocation $\overline{\omega}$, the problem-specific allocation (\ref{eq:optimal_weights}) that leads to the upper bound (\ref{eq:upper_bound}). The algorithm is a track-and-stop, whose stopping rule does not follow a generic Generalized Likelihood Ratio Test as that used \cite{garivier16a} for MAB problems (refer to Subsection \ref{subsection:test} for detail).

The algorithm takes as input the confidence parameter $\delta$ and any black-box planner MDP-SOLVER. The latter takes as input an MDP $\phi$, and returns an optimal policy $\pi^\star_{\phi} \in \Pi^\star_{\phi}$. For practical implementations, we use the  Policy Iteration algorithm. 

KLB-TS starts exploring each (state, action) pair once, to construct an initial estimate $\widehat{\phi}$ of the true MDP $\phi$. The algorithm maintains, after $t$ collected observations, an estimate $\widehat{\phi}_t$ of the true MDP. Based on this estimate, KLB-TS computes an estimate of the allocation $\overline{\omega}$, and selects the next (state, action) pair to track it. After each observation, the estimated MDP $\widehat{\phi}_t$ is updated. Finally, the algorithm checks if a stopping condition is satisfied, in which case the algorithm stops and returns the empirical optimal policy $\widehat{\pi}^\star_{\tau}$. The stopping condition is referred to as the {\it KL-Ball stopping rule} since it is inspired by the derivation of the upper bound of $T^*(\phi)$. There, the various terms involved in the exploration constraints are upper bounded by KL divergences, i.e., are in a KL ball.

The pseudo-code of KLB-TS is presented in Algorithm \ref{main_algo1}. Its sampling and stopping rule are described in detail in the next two sub-sections.

\begin{algorithm}[h]
\caption{KLB-TS}
\begin{algorithmic}

\INPUT{Black-box planner MDP-SOLVER(), Confidence parameter $\delta$. }
 \STATE Collect one sample from each (s,a) in  $\scal\times\acal$.
 \STATE Set $t \leftarrow S A$ and $n_t(s,a) \leftarrow 1$, for all (s,a).
 \STATE Initialize empirical estimate $\widehat{\phi}_t$ of $\phi$.
 \STATE $\widehat{\pi}_t^{\star}\leftarrow \textrm{MDP-SOLVER}(\widehat{\phi}_t)$.
\WHILE{Stopping condition (\ref{eq:simplifed_stopping_rule}) is not satisfied}
\STATE Compute allocation vector $\overline{\omega}(\widehat{\phi}_t)$ of equation (\ref{eq:optimal_weights}).
\STATE Sample from $(s_{t+1},a_{t+1})$ determined by equation (\ref{eq:sampling_rule}).
\STATE For all (s,a) set:
\begin{equation*}
n_{t+1}(s,a) \leftarrow 
\begin{cases}
n_{t}(s,a) + 1 \textrm{  if $(s,a) =(s_{t+1}, a_{t+1})$} \\
n_{t}(s,a) \textrm{ Otherwise}
\end{cases}
\end{equation*}
\STATE $t\leftarrow t+1$.
\STATE Update empirical estimate $\widehat{\phi}_{t}$ of $\phi$.
\STATE $\widehat{\pi}_t^{\star}\leftarrow \textrm{MDP-SOLVER}(\widehat{\phi}_t)$.
\ENDWHILE
\OUTPUT{Empirical optimal policy $\widehat{\pi}_{\tau}^{\star}$}   
\label{main_algo1}
\end{algorithmic}
\end{algorithm}

\subsection{Sampling rule}

To build an algorithm with sample complexity matching the upper-bound of Corollary \ref{corollary:upper_bound}, the sampling proportions of (state,action) pairs should be as close as possible to the near-optimal weights defined in (\ref{eq:optimal_weights}). To this aim, we simply use the C-tracking rule defined in \cite{garivier16a}, which we recall below.
 
Define $\overline{\omega}^{\varepsilon}(\phi)$ as the $L^{\infty}$ projection of $\overline{\omega}(\phi)$ onto\\ $\Sigma^{\varepsilon} = \{\omega \in [\varepsilon,1]^{SA} :  \underset{s,a}{\sum}\ \omega_{s,a} = 1\}.$ Further define $\varepsilon_t = (S^2 A^2 + t)^{-1/2}/2$. Then the (state, action) pair to be sampled in round $t+1$ is defined as:
\begin{equation}
 (s_{t+1}, a_{t+1}) \in   \underset{(s,a)\in \scal\times\acal}{\argmax}\ \sum_{s=1}^{t} \overline{\omega}^{\varepsilon_s}_{s,a}(\widehat{\phi}_s) - n_t(s,a)
\label{eq:sampling_rule}
\end{equation}
with ties broken arbitrarily. The projection onto $\Sigma^\varepsilon$ forces a minimal amount of exploration so that no pair is left under-explored because of bad initial estimates. The same analysis of the sampling rule given in \cite{garivier16a} holds in the MDP case and guarantees that: 
\begin{equation*}
\PP_{\phi}\left(\forall (s,a) \in \scal\times\acal, \quad \underset{t \to \infty}{\lim} \frac{n_t(s,a)}{t} = \overline{\omega}_{s,a}(\phi)  \right) = 1.     
\end{equation*}

\subsection{Stopping rule}\label{subsection:test}

It is first worth noting that the proposed stopping condition constitutes the first stopping rule for best-policy identification in the MDP setting. Previous stopping rules in the literature are designed to identify $\varepsilon$-optimal policies. Unless we have access to an oracle that reveals the minimal gap between the best policy and a sub-optimal policy (in which case we can set $\varepsilon$ smaller than this gap), we cannot identify the best-policy using these rules.\\ 
A {\it good} stopping rule determines when the set of samples collected so far is {\it just} enough to declare that $\widehat{\pi}_t^\star = \pi^\star$ with probability $1-\delta$. The design of our stopping rule is inspired by the proof of the upper-bound $U(\phi)$, which uses the following fact (refer to the inequalities (\ref{ineq:1})-(\ref{ineq:2})-(\ref{ineq:3})-(\ref{ineq:4})-(\ref{ineq:5}) in the appendix): For all $\psi \in \Alt\phi$, there exists $s,a\neq \pi^{\star}(s)$ and a vector $\alpha$ in the simplex of $\mathbbm{R}^4$ (which we denote $\Sigma_4$) such that the four following conditions are verified: 
\begin{equation}
\begin{cases}
     \frac{\alpha_1^2}{T_1(s,a;\phi)}\leq \kl\left(r_{\phi}(s,a),r_{\psi}(s,a)\right), \\
     \frac{\alpha_2^2}{T_2(s,a;\phi)} \leq KL\left(p_{\phi}(s,a)\| p_{\psi}(s,a)\right), \\
    \frac{\alpha_3^2}{T_3(\phi)} \leq \ \underset{s \in \scal}{\max}\ \kl\left(r_{\phi}(s,\pi_{\phi}^\star(s)), r_{\psi}(s,\pi_{\phi}^\star(s))\right), \\ 
     \frac{\alpha_4^2}{T_4(\phi)} \leq \ \underset{s \in \scal}{\max}\ KL\left(p_{\phi}(s,\pi_{\phi}^\star(s))\| p_{\psi}(s,\pi_{\phi}^\star(s))\right).
\end{cases}
\label{eq:stopping_rule_1}
\end{equation}
Then defining the quantities
\begin{equation}
\begin{cases}
\rho_1(\phi,\psi)(s,a) = T_1(s,a;\phi) \kl\left(r_{\phi}(s,a),r_{\psi}(s,a)\right), \\
\rho_2(\phi,\psi)(s,a) =  T_2(s,a;\phi) KL\left(p_{\phi}(s,a)\| p_{\psi}(s,a)\right), \\
\rho_3(\phi,\psi) = \underset{s \in \scal}{\max}\ T_3(\phi) \kl (r_{\phi}(s,\pi_{\phi}^\star(s)), r_{\psi}(s,\pi_{\phi}^\star(s))), \\
\rho_4(\phi,\psi) = \underset{s \in \scal}{\max}\ T_4(\phi) KL(p_{\phi}(s,\pi_{\phi}^\star(s))\| p_{\psi}(s,\pi_{\phi}^\star(s))),
\end{cases}
\label{eq:rho}
\end{equation}
(\ref{eq:stopping_rule_1}) suggests that to design a PAC stopping condition, it is sufficient to check that the event
\begin{equation*}
\begin{split}
 \mathcal{E} = \bigg(& \forall \alpha \in \Sigma_4 \ \forall s,a\neq \widehat{\pi}_{t}^{\star}(s),\  \rho_1(\widehat{\phi}_t,\phi)(s,a) < \alpha_1^2 \textrm{ or }  \rho_2(\widehat{\phi}_t,\phi)(s,a) < \alpha_2^2\\
 &\textrm{ or } \rho_3(\widehat{\phi}_t,\phi) < \alpha_3^2 \textrm{ or } \rho_4(\widehat{\phi}_t,\phi) < \alpha_4^2 \bigg)
 \end{split}
\end{equation*} 
or equivalently\footnote{Hence the name KL-Ball stopping rule.}:
\begin{equation*}
\begin{split}
 \mathcal{E} = \bigg(\forall s,a\neq \widehat{\pi}_{t}^{\star}(s), 
 \sqrt{\rho_1(\widehat{\phi}_t,\phi)(s,a)}+ \sqrt{\rho_2(\widehat{\phi}_t,\phi)(s,a)}+\sqrt{\rho_3(\widehat{\phi}_t,\phi)}+\sqrt{\rho_4(\widehat{\phi}_t,\phi)} < 1\bigg)
 \end{split}
\end{equation*} 
holds with probability $1-\delta$. Indeed, if $\mathcal{E}$ holds, then by contraposition of (\ref{eq:stopping_rule_1}), we have $\phi \notin \mathrm{Alt}(\widehat{\phi}_t)$, which means that $\widehat{\pi}_t^\star = \pi^\star$. To define our stopping rule, we further introduce the threshold function:
\begin{equation*}
x(\delta,n,m) = \log(1/\delta) + (m-1)[1+\log\big(1+n/(m-1)\big)].
\end{equation*}
We finally define $\widehat{T}_1(s,a)=T_1(s,a;\widehat{\phi}_t)$, $\widehat{T}_2(s,a)=T_2(s,a;\widehat{\phi}_t)$, $\widehat{T}_3=T_3(\widehat{\phi}_t)$, $\widehat{T}_4=T_4(\widehat{\phi}_t)$ and $\delta' = \frac{\delta}{4S^3A}$. The KL-Ball stopping condition, which guarantees that the event $\mathcal{E}$ above holds with probability $1-\delta$, is:  
\begin{align}
&\underset{s,a\neq \widehat{\pi}_{t}^{\star}(s)}{\max} \frac{\sqrt{\widehat{T_1}(s,a) x(\delta',n_t(s,a),2)}+\sqrt{\widehat{T_2}(s,a) x(\delta',n_t(s,a),S)}}{\sqrt{n_t(s,a)}} \nonumber\\
&+\underset{s\in \scal}{\max} \frac{\sqrt{\widehat{T_3} x(\delta',n_t(s,\widehat{\pi}_t^\star(s)),2)  }+ \sqrt{\widehat{T_4} x(\delta',n_t(s,\widehat{\pi}_t^\star(s)),S)}}{\sqrt{n_t(s,\widehat{\pi}_t^\star(s))}} \leq 1
\label{eq:simplifed_stopping_rule}
\end{align}
More precisely: $\tau_ {\delta}=\inf\{ t\in\mathbb{N}: (\ref{eq:simplifed_stopping_rule}) \textrm{ holds}\}$.

\begin{theorem} Under the KL-Ball stopping rule, we have: $\PP_{\phi}(\tau_\delta < \infty,\widehat{\pi}_{\tau_\delta}^\star \neq \pi^\star_{\phi} ) \leq \delta$.
\label{Theorem:stopping_rule}
\end{theorem}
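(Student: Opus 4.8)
The plan is to show that the algorithm can stop and return a wrong policy only on an event where some empirical estimate has strayed from the truth by more than its confidence radius, and then to control that event by a time-uniform concentration argument. The first reduction is to exploit Assumption 1: since both $\phi$ and $\widehat\phi_t$ have unique optimal policies, $\widehat\pi_t^\star\neq\pi^\star_\phi$ is equivalent to $\Pi^\star_{\widehat\phi_t}\cap\Pi^\star_\phi=\emptyset$, i.e.\ to $\phi\in\mathrm{Alt}(\widehat\phi_t)$. Hence it suffices to prove that, with probability at least $1-\delta$, the stopping condition (\ref{eq:simplifed_stopping_rule}) is never met at an instant where $\phi\in\mathrm{Alt}(\widehat\phi_t)$.

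First I would establish the deterministic implication that the event $\mathcal{E}$ forces $\widehat\pi_t^\star=\pi^\star_\phi$. I apply the decomposition (\ref{eq:stopping_rule_1}) — valid for any base MDP and any of its alternatives, by the same $f$-divergence argument used for the upper bound — with the roles reversed, taking $\widehat\phi_t$ as the base MDP and the true $\phi$ as the candidate alternative. If $\phi\in\mathrm{Alt}(\widehat\phi_t)$, this produces a pair $(s,a)$ with $a\neq\widehat\pi_t^\star(s)$ and a vector $\alpha\in\Sigma_4$ with $\alpha_i^2\le\rho_i(\widehat\phi_t,\phi)$ for the $\rho_i$ of (\ref{eq:rho}); summing square roots and using $\sum_i\alpha_i=1$ gives $\sqrt{\rho_1(\widehat\phi_t,\phi)(s,a)}+\sqrt{\rho_2(\widehat\phi_t,\phi)(s,a)}+\sqrt{\rho_3(\widehat\phi_t,\phi)}+\sqrt{\rho_4(\widehat\phi_t,\phi)}\ge1$. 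By contraposition, $\mathcal{E}$ forces $\phi\notin\mathrm{Alt}(\widehat\phi_t)$, hence $\widehat\pi_t^\star=\pi^\star_\phi$. The non-strict-versus-strict gap only matters on the boundary where this sum equals $1$, which corresponds to a degenerate tie in the optimal policy of $\widehat\phi_t$ and is handled by the strict inequality defining $\mathrm{Alt}_{sa}$.

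Next I define the good event $\mathcal{G}$ as the time-uniform event on which, for every $t$ and every $(s,a)$, $n_t(s,a)\,\kl(r_{\widehat\phi_t}(s,a),r_\phi(s,a))\le x(\delta',n_t(s,a),2)$ and $n_t(s,a)\,KL(p_{\widehat\phi_t}(s,a)\|p_\phi(s,a))\le x(\delta',n_t(s,a),S)$. On $\mathcal{G}$ each $\rho_i(\widehat\phi_t,\phi)$ is bounded by the squared corresponding summand of the stopping statistic, e.g.\ $\sqrt{\rho_1}\le\sqrt{\widehat T_1(s,a)\,x(\delta',n_t(s,a),2)}/\sqrt{n_t(s,a)}$, and similarly for $\rho_2$. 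For $\rho_3,\rho_4$, which carry a maximum over $s$, I would use that $u\mapsto x(\delta',u,m)/u$ is decreasing, so that $\max_s\sqrt{\widehat T_3\,x(\delta',n_t(s,\widehat\pi_t^\star(s)),2)/n_t(s,\widehat\pi_t^\star(s))}$ and the analogous $\widehat T_4$ maximum are both attained at $\argmin_s n_t(s,\widehat\pi_t^\star(s))$; their sum therefore equals the single combined maximum in (\ref{eq:simplifed_stopping_rule}) and dominates $\sqrt{\rho_3}+\sqrt{\rho_4}$. Consequently, on $\mathcal{G}$ the stopping inequality (\ref{eq:simplifed_stopping_rule}) implies $\mathcal{E}$, and hence $\widehat\pi_{\tau_\delta}^\star=\pi^\star_\phi$.

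It remains to bound $\PP_\phi(\mathcal{G}^c)$, which is the crux. Because $\widehat\pi_t^\star$ is random and time-varying, the deviation control must hold uniformly in $t$ and over all $(s,a)$. For the rewards (bounded in $[0,1]$, hence dominated by a Bernoulli) I invoke the time-uniform bound $\PP(\exists n:\,n\,\kl(\widehat r_n,r)>x(\delta',n,2))\le\delta'$, and for the transitions the categorical analogue $\PP(\exists n:\,n\,KL(\widehat p_n\|p)>x(\delta',n,S))\le\delta'$, whose dimensional penalty $(S-1)[1+\log(1+n/(S-1))]$ is exactly the second term of $x(\delta',n,S)$. A union bound over the at most $SA$ reward and $SA$ transition sequences, each at level $\delta'=\delta/(4S^3A)$, yields $\PP_\phi(\mathcal{G}^c)\le\delta$. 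Since the error event $\{\tau_\delta<\infty,\ \widehat\pi_{\tau_\delta}^\star\neq\pi^\star_\phi\}$ is contained in $\mathcal{G}^c$, the claim follows. I expect the main obstacle to be assembling these two time-uniform KL deviation inequalities with the precise threshold $x$ and checking that the union bound survives the uniformity over $t$ and over the random active pairs; the reduction to $\mathcal{E}$ and the monotonicity argument for $\rho_3,\rho_4$ are comparatively routine.
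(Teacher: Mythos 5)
Your proposal is correct and takes essentially the same route as the paper's own proof: the correctness implication derived from inequalities (\ref{ineq:1})--(\ref{ineq:5}) (stopping while $\phi\in\mathrm{Alt}(\widehat{\phi}_t)$ forces some empirical KL past its threshold, using the same monotonicity trick to merge the $\widehat{T}_3,\widehat{T}_4$ maxima), the time-uniform reward/transition concentration lemmas of \cite{jonsson2020planning}, and a union bound at level $\delta'=\delta/(4S^3A)$. Your packaging via a single good event $\mathcal{G}$ (versus the paper's decomposition of $\mathrm{STOP}_t\cap\mathcal{E}_t^c$ into deviation events, which yields the count $4S^3A\delta'$ rather than your tighter $2SA\delta'$), and your remark on the strict-versus-non-strict boundary (which the paper sidesteps by writing the stopping inequality strictly inside the proof), are only cosmetic differences.
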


\section{SAMPLE COMPLEXITY ANALYSIS}\label{sec:analysis}

Our main results take the form of asymptotic (when $\delta$ goes to 0) upper bounds on the sample complexity of KLB-TS. These bounds are proved as follows. First, the use of the C-tracking rule makes it possible to establish the convergence of the vector $(n_t(s,a))_{s,a}/t$ (the (state, action) pair visit frequencies) to the nearly-optimal allocation vector $\overline{\omega}$, as well as the convergence of the empirical MDP $\widehat{\phi}_t$ to the true MDP $\phi$. Then, plugging these convergence results in the definition of the stopping rule (\ref{eq:simplifed_stopping_rule}), and combining the obtained results with the asymptotic shape of the threshold function $x(\delta',n,m) \underset{\delta \to 0}{\sim} \log(1/\delta)$, we obtain (refer to Appendix \ref{AppendixC} for a detailed description of these arguments):
\begin{align*}
\tau_{\delta} \underset{\delta \to 0}{\sim} \inf\Bigg\{ t\in\mathbb{N}:\sqrt{\log(1/\delta)}\bigg(&\underset{s,a\neq \pi^\star (s)}{\max} +\frac{\sqrt{T_1(s,a;\phi)}+ \sqrt{T_2(s,a;\phi) }}{\sqrt{t\times \overline{\omega}_{sa}}}\\ 
&+ \underset{s\in \scal}{\max} \frac{\sqrt{T_3(\phi)}+ \sqrt{T_4(\phi)}}{\sqrt{t\times \overline{\omega}_{s,\pi^\star(s)}}} \ \bigg) \leq 1 \Bigg\}.
\end{align*}
Finally, we show that the condition in the '$\inf$' above holds as soon as $t \geq 4U(\phi) \log(1/\delta)$ (see Lemma \ref{lemma:technical_bound}). The above arguments lead to an upper bound of the sample complexity of KLB-TS, valid almost surely (Proposition \ref{proposition:almost_sure_sample_complexity}) and in expectation (Theorem \ref{Theorem:Upper bound in expectation}).

\begin{proposition}\label{proposition:almost_sure_sample_complexity}
The KL-Ball stopping rule, coupled with any sampling rule ensuring that for every state-action pair $(s,a)$, $n_t(s,a)/t$ converges almost surely to the nearly-optimal allocations $\overline{\omega}_{s,a}$ of  Corollary \ref{corollary:upper_bound}, yields a sample complexity $\tau_\delta$ satisfying for all $ \delta \in (0,1): \PP_{\phi}(\tau_{\delta} < \infty) = 1 $ and
$\PP_{\phi}\left(\limsup_{\delta \to 0} \frac{\tau_{\delta}}{\log(1/\delta)} \leq 4 U(\phi)\right) = 1.$
\end{proposition}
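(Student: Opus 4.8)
The plan is to establish the two claims—finiteness ($\PP_\phi(\tau_\delta < \infty) = 1$) and the asymptotic almost-sure rate—by transferring the convergence properties guaranteed by the C-tracking sampling rule into the stopping condition (\ref{eq:simplifed_stopping_rule}). First I would record the two convergence facts that the sampling rule supplies: almost surely, $n_t(s,a)/t \to \overline{\omega}_{s,a}(\phi)$ for every pair, and consequently the empirical MDP $\widehat{\phi}_t$ converges to $\phi$ (because every pair is sampled infinitely often, by the law of large numbers each empirical reward mean, transition vector, and reward density converges to its true counterpart). Since each $T_i$ is a continuous functional of the MDP away from degeneracies, and $\phi \in \Phi$ has a unique optimal policy with $\Delta_{\min} > 0$, I would argue that for all $t$ large enough $\widehat{\pi}_t^\star = \pi^\star_\phi$ and $\widehat{T}_i(s,a) = T_i(s,a;\widehat{\phi}_t) \to T_i(s,a;\phi)$ almost surely.

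Next I would analyze the left-hand side of (\ref{eq:simplifed_stopping_rule}) as $t \to \infty$. The key is the asymptotic form of the threshold, $x(\delta',n,m) \sim \log(1/\delta)$ as $\delta \to 0$; writing $\delta' = \delta/(4S^3A)$ does not change this leading order. Substituting $n_t(s,a) \approx t\,\overline{\omega}_{s,a}$ and the convergent $\widehat{T}_i$ into each summand, the stopping condition becomes, to leading order in $\delta$, exactly the '$\inf$' expression displayed just before the proposition, namely
\begin{equation*}
\sqrt{\log(1/\delta)}\,G(t) \leq 1, \qquad G(t) = \underset{s,a\neq \pi^\star(s)}{\max}\frac{\sqrt{T_1}+\sqrt{T_2}}{\sqrt{t\,\overline{\omega}_{sa}}} + \underset{s}{\max}\frac{\sqrt{T_3}+\sqrt{T_4}}{\sqrt{t\,\overline{\omega}_{s,\pi^\star(s)}}}.
\end{equation*}
Since $G(t)$ decreases to $0$ like $1/\sqrt{t}$, for each fixed $\delta$ the condition is eventually met, giving $\tau_\delta < \infty$ almost surely. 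For the rate I would invoke Lemma \ref{lemma:technical_bound}, which asserts that $G(t)\le 1/\sqrt{\log(1/\delta)}$ holds as soon as $t \geq 4U(\phi)\log(1/\delta)$; combined with the fact that $G(t)$ is built from the optimal weights $\overline{\omega}$ realizing the bound $U(\phi)$ of Corollary \ref{corollary:upper_bound}, this yields $\tau_\delta \leq 4U(\phi)\log(1/\delta)$ asymptotically, hence $\limsup_{\delta\to 0}\tau_\delta/\log(1/\delta) \leq 4U(\phi)$ on the almost-sure event.

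The main obstacle is making the interchange of limits rigorous: the statement is an almost-sure bound on $\limsup_{\delta\to 0}$, yet $\delta$ and $t$ are coupled through both $x(\delta',n_t,m)$ and the stopping time itself. The convergences $n_t/t \to \overline{\omega}$ and $\widehat{T}_i \to T_i$ are statements about $t\to\infty$ on a probability-one event that does not depend on $\delta$; I would fix such a sample path and show that along it the error between the true stopping condition and its idealized limiting version is uniformly controlled. Concretely, for any $\eta > 0$ I would produce a (path-dependent but $\delta$-independent) time $t_0$ beyond which $\widehat{T}_i \leq (1+\eta)T_i$ and $n_t(s,a) \geq (1-\eta)t\,\overline{\omega}_{s,a}$, so that the true left-hand side is bounded by $(1+O(\eta))\sqrt{\log(1/\delta)}\,G(t) \cdot (1+o_\delta(1))$; choosing $\delta$ small forces $\tau_\delta \geq t_0$, after which the idealized analysis applies and the $4U(\phi)$ factor, inflated only by $(1+O(\eta))$, emerges. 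Letting $\eta \to 0$ closes the argument. Care is also needed because $\overline{\omega}$ is defined from $\phi$ whereas the algorithm tracks $\overline{\omega}(\widehat{\phi}_t)$; continuity of the allocation map (\ref{eq:optimal_weights}) in the MDP parameters—again using $\Delta_{\min}>0$—ensures these agree in the limit and underlies the cited C-tracking convergence guarantee.
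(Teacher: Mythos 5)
Your proposal is correct and follows essentially the same route as the paper's proof: both fix the probability-one convergence event, use continuity to get $\widehat{\pi}_t^\star = \pi^\star$ and $(1+\varepsilon)$-control of the $\widehat{T}_i$ and of $n_t(s,a)/t$ beyond a path-dependent but $\delta$-independent time, separate the threshold $x(\delta',n,m)$ into its $\log(1/\delta)$ part plus a sublinear-in-$n$ remainder that vanishes, and conclude by applying Lemma \ref{lemma:technical_bound} after letting $\delta \to 0$ and then the slack parameter go to zero. The only cosmetic difference is that the paper controls the threshold remainder as an additive term $f(n_t,\widehat{\phi}_t)\le\varepsilon$ for $t$ large rather than via your multiplicative $(1+o_\delta(1))$ factor, which changes nothing substantive.
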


\begin{theorem}
The KL-Ball stopping rule, coupled with the C-tracking rule defined in (\ref{eq:sampling_rule}), yields a sample complexity $\tau_\delta$ satisfying: for all $\delta \in (0,1),\ \EE_{\phi}[\tau_{\delta}]$ is finite and $\limsup_{\delta \to 0} \frac{\EE_{\phi}[\tau_{\delta}]}{\log(1/\delta)} \leq 4 U(\phi).$
\label{Theorem:Upper bound in expectation}
\end{theorem}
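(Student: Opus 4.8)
The plan is to lift the almost-sure guarantee of Proposition~\ref{proposition:almost_sure_sample_complexity} to a bound in expectation. The almost-sure analysis already shows that, along almost every trajectory, the stopping condition~(\ref{eq:simplifed_stopping_rule}) is met once $t$ exceeds roughly $4U(\phi)\log(1/\delta)$; the only additional work is to control the rare trajectories on which the empirical MDP $\widehat{\phi}_t$ is a poor estimate of $\phi$ and the algorithm consequently over-samples. I would do this through the standard concentration-plus-tail-summation scheme of \cite{garivier16a}, adapted to the MDP setting.

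First I would fix a small tolerance $\xi > 0$ and define the good event
$$
\mathcal{G}_T = \Big\{\, \dist{\widehat{\phi}_t}{\phi} \le \xi \text{ and } \max_{s,a}\big|\tfrac{n_t(s,a)}{t} - \overline{\omega}_{s,a}(\phi)\big| \le \xi \ \text{ for all } t \ge \sqrt{T} \,\Big\}.
$$
By continuity of the maps $\phi \mapsto \overline{\omega}(\phi)$ and $\phi \mapsto (T_i(\,\cdot\,;\phi))_i$ entering Corollary~\ref{corollary:upper_bound}, one can choose $\xi$ small enough (depending on a target $\eta>0$) so that on $\mathcal{G}_T$ all the plug-in quantities $\widehat{T}_i$ and the visit counts $n_t(s,a)$ are within a factor $1+\eta$ of their idealized values. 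Combining this with the asymptotic shape $x(\delta',n,m)\sim\log(1/\delta)$ of the threshold and with Lemma~\ref{lemma:technical_bound}, I would argue that on $\mathcal{G}_T$ the stopping condition~(\ref{eq:simplifed_stopping_rule}) necessarily holds as soon as $T \ge T^\star_\delta := (4+\eta)\,U(\phi)\log(1/\delta)$. Consequently $\{\tau_\delta > T\} \subseteq \mathcal{G}_T^c$ for every $T \ge T^\star_\delta$.

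Next I would bound $\mathbb{P}_\phi(\mathcal{G}_T^c)$ and sum it. The C-tracking rule, through the forced-exploration floor $\varepsilon_t$, guarantees that every pair is visited at least order $\sqrt{t}$ times, so a Bernstein/Hoeffding concentration for the reward means together with a concentration bound for the empirical kernels yields a per-time, per-pair deviation probability that is exponentially small in $\sqrt{t}$; a union bound over $t \ge \sqrt{T}$ and over the finitely many pairs then gives $\mathbb{P}_\phi(\mathcal{G}_T^c) \le B_\xi\, T^{C}\exp(-D_\xi \sqrt{T})$ for constants $B_\xi, C, D_\xi$ independent of $\delta$. Writing $\mathbb{E}_\phi[\tau_\delta] = \sum_{T\ge 0}\mathbb{P}_\phi(\tau_\delta > T)$ and splitting at $T^\star_\delta$ gives
$$
\mathbb{E}_\phi[\tau_\delta] \le T^\star_\delta + \sum_{T > T^\star_\delta}\mathbb{P}_\phi(\mathcal{G}_T^c) \le (4+\eta)\,U(\phi)\log(1/\delta) + K(\xi),
$$
where $K(\xi)=\sum_{T\ge 0} B_\xi T^{C}\exp(-D_\xi\sqrt{T}) < \infty$ does not depend on $\delta$. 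The same tail bound shows $\mathbb{P}_\phi(\tau_\delta > T)$ is summable, hence $\mathbb{E}_\phi[\tau_\delta]<\infty$ for each fixed $\delta$. Dividing by $\log(1/\delta)$, letting $\delta \to 0$, and finally $\eta \to 0$ (i.e. $\xi\to 0$) yields $\limsup_{\delta\to 0}\mathbb{E}_\phi[\tau_\delta]/\log(1/\delta) \le 4U(\phi)$.

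The main obstacle is the concentration bound on $\mathbb{P}_\phi(\mathcal{G}_T^c)$ in the last step: the observations are collected under an adaptive, data-dependent sampling rule, so the per-pair samples are not i.i.d. and one cannot invoke a textbook concentration inequality directly. The crux is to exploit the deterministic forced-exploration guarantee of C-tracking (which lower-bounds $n_t(s,a)$ uniformly) to decouple the sampling from the observations, and to verify that the resulting deviation probability decays fast enough that its sum over all $T$ is a finite constant \emph{independent of} $\delta$. This independence from $\delta$ is precisely what makes the remainder $o(\log(1/\delta))$ and delivers the sharp constant $4U(\phi)$.
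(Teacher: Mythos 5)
Your proposal follows essentially the same route as the paper's proof: a concentration event on $\widehat{\phi}_t$ over a growing time window, continuity of $\overline{\omega}(\cdot)$ and of the $T_i$ functionals, the C-tracking guarantee to transfer MDP-closeness to the visit proportions, stopping by $(4+\eta)U(\phi)\log(1/\delta)$ on the good event via Lemma~\ref{lemma:technical_bound}, and the tail summation $\EE_{\phi}[\tau_\delta]=\sum_{T}\PP_{\phi}(\tau_\delta>T)$ with a $\delta$-independent stretched-exponential bound on the bad event (the paper's Lemmas~\ref{lemma:concentration of C Tracking proportions} and~\ref{lemma:concentration of phi}). The only imprecision is minor: the proportion-tracking half of your event $\mathcal{G}_T$ is not itself obtained from a concentration inequality but follows deterministically (for $T$ large enough) from the MDP-closeness half via the C-tracking lemma, which is exactly how the paper structures the argument, so your bound on $\PP_{\phi}(\mathcal{G}_T^c)$ indeed reduces to the reward/kernel concentration you describe.
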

The proof of the theorem above is similar to that of Theorem 14 in \cite{garivier16a} with a few notable differences. First, we defined a distance on MDPs through the $L^{\infty}$-norm of their reward and transition kernels. Then, we adapted Lemma 19 from \cite{garivier16a}, which gives a concentration inequality of the empirical average-rewards in the MAB setting, to include the concentration of transition probabilities of the empirical MDP.

\section{EXPERIMENTS}\label{sec:experiments}

In this section, we run numerical experiments to compare the performances of KLB-TS and BESPOKE (these are so far the two algorithms with problem-specific sample complexity guarantees). We refer the reader to Appendix \ref{sec:compare} for a detailed description of the differences between KLB-TS and BESPOKE, as well as a comparison of their theoretical guarantees. To compare the two algorithms, we generated two MDPs randomly: a first small MDP with two states and two actions, and a second larger and more realistic MDP with five states and ten actions per state. We used BESPOKE with an accuracy parameter $\epsilon = 0.9\Delta_{\min}$ (note that $\Delta_{\min}$ is revealed to BESPOKE). For each value of the confidence level $\delta$, we run 10 simulations for the first MDP under both algorithms. To save computation time in the case of the second MDP, we run 5 simulations for each $\delta$ and only compare KLB-TS's sample complexity with BESPOKE's initial number of samples $n_{\min}$ which, as noted in Appendix \ref{sec:compare}, contributed for more than 99\% of its sample complexity. 

Figure \ref{fig:experiment1} shows the mean sample complexity along with its 2-standard-deviations interval (which seems very small due to the use of a log-scale). The red curve (referred to as 'asymptotic bound') shows the upper bound $4 U(\phi) \log(1/\delta)$ guaranteed by Theorem \ref{Theorem:Upper bound in expectation}. Note that KLB-TS sample complexity is greater than $4 U(\phi) \log(1/\delta)$ for moderate values of $\delta$ and only matches it for $ \delta = 10^{-14}$. For both MDPs, KLB-TS clearly outperforms BESPOKE.

\begin{figure*}[h!]
     \centering
         \includegraphics[width =0.32\linewidth]{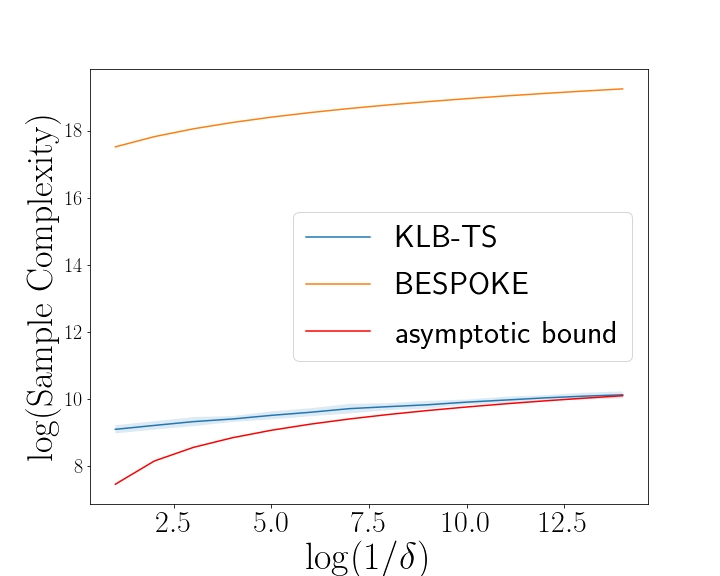}
          \includegraphics[width =0.32\textwidth]{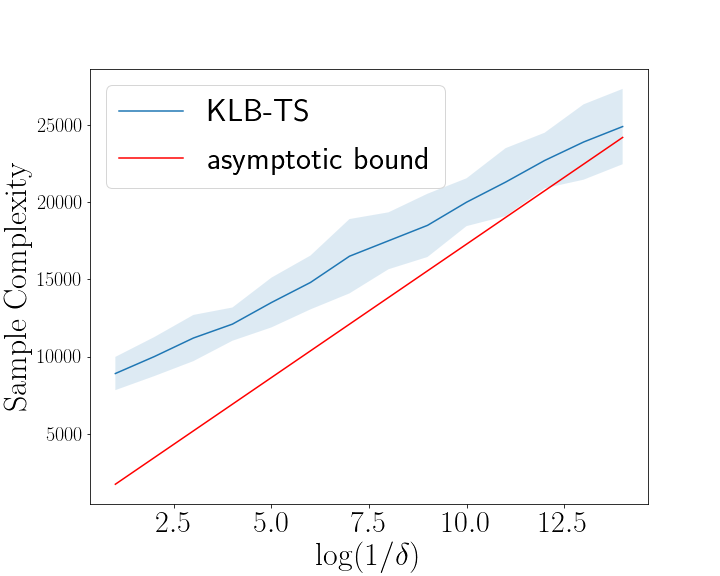}
         \includegraphics[width=0.32\textwidth]{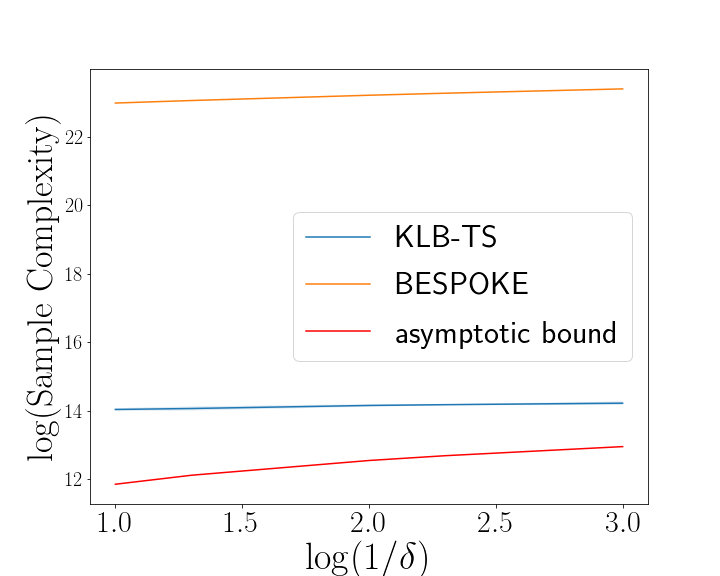}
\caption{KLB-TS vs. BESPOKE. Left and center: S=A=2, $\gamma = 0.5$, right: $S=5, A=10, \gamma=0.7$.}
\label{fig:experiment1}
\end{figure*}


\section{CONCLUSION}\label{sec:conclusion}

In this work, we have investigated the design of RL algorithms with {\it minimal problem-specific} sample complexity. To this aim, we first derived the information-theoretical sample complexity limit (a lower bound on the sample complexity satisfied by any algorithm) and the corresponding optimal sample allocation. Our hope was that, as for the MAB problem, this allocation would be easy to compute and could then lead to a simple and optimal track-and-stop algorithm. Unfortunately, for RL problems, it turns out that the optimal allocation solves an involved non-convex program. Approaching the fundamental sample complexity limit seems possible only if one could solve this program. To circumvent this issue, we derived a tight upper bound of the characteristic time. Remarkably, this bound corresponds to a sample allocation that is explicit, and hence can be easily plugged in into a track-and-stop algorithm. Based on this upper bound, we proposed KLB-TS, an algorithm whose sample complexity matches this upper bound. 

This work opens up interesting research directions. First, the computational complexity of the sample complexity lower bound strongly suggests the existence of a fundamental trade-off between sample and computational complexities. Investigating this trade-off is intriguing. Then, we restricted our attention to the generative model, where one can sample any (state, action) pair at any step. In most practical cases however, one needs to learn an optimal policy by observing a single trajectory of the system. Hence, the numbers of times one observes the various (state, action) pairs are correlated, inducing some additional constraints in the optimization problem leading to the sample complexity lower bound. It is worth studying the impact of these navigation constraints on the sample complexity. Finally, we plan to extend our results to the framework of RL with function approximation.  
\clearpage
\newpage
\bibliography{bibliography,ref,vr2017}

\appendix
\onecolumn

\section{Related work: The minimax approach}

One of the first works on best-policy identification in discounted MDPs is \cite{Kearns1999}. There, the authors introduce a model referred to as {\it parallel sampling}, where the agent can sample transitions from all (state,action) pairs simultaneously (instead of following a trajectory determined by the MDP dynamics). They proposed Phased Q-Learning and showed that it requires $\Tilde{\mathcal{O}}\left(\frac{{S}{A}\log({S}{A}/\delta)}{\varepsilon^2 }\right)$ samples\footnote{Their analysis ignored the dependency on the horizon $H = \frac{1}{1-\gamma}$, treating $\gamma$ as a constant.} to find an $\varepsilon$-optimal value function.  Later on, \cite{kakade2003sample}(Chapter~2.5) proposed the generative model as a variant of the parallel sampling model. Both \cite{Kearns1999} and  \cite{kakade2003sample} proved upper-bounds on the sample complexity of model-based Q-Value-Iteration (QVI) by  $\mathcal{O}\left(\frac{{S}{A}\log({S}{A}/\delta)}{\varepsilon^2 (1-\gamma)^4}\right)$. Using a variance trick, \cite{azar2013minimax} improved their analysis and showed that when $\varepsilon \in (0,\frac{1}{\sqrt{(1-\gamma)S}}]$, both model-based QVI along and Policy Iteration (PI) can find an $\varepsilon$-optimal policy after collecting $\mathcal{O}\left(\frac{S{A}\log(SA/\delta)}{\varepsilon^2 (1-\gamma)^3}\right)$ samples. They also proved that the latter quantity is the minimax lower bound of sample complexity required to find an $\varepsilon$-optimal policy. \cite{even2006action} used Action-Elimination techniques from the Multi-Armed Bandit setting(MAB) to devise MAB-Phased-Q-Learning, an algorithm for MDPs with a generative model which finds an $\varepsilon$-optimal policy using $\Tilde{\ocal}(\frac{S A V_{\max}^2}{(1-\gamma)^5\varepsilon^2})$ samples, where $V_{\max}$ is the maximum range of the value function. \cite{NIPS2018_7765} proposed Variance-Reduced-Q-Value-Iteration (vQVI) which matches the minimax bound for a wider range of $\varepsilon \in (0,1]$. The same bound was derived by \cite{pmlr-v125-agarwal20b} for $\varepsilon \in (0, \frac{1}{\sqrt{1-\gamma}}]$ using a model-based approach. Finally, \cite{li2020breaking} used a reward perturbation technique to widen the set of $\varepsilon$ where their algorithm is minimax optimal to the full range of accuracy levels: $(0,\frac{1}{1-\gamma}]$. It is worth noting that, except for \cite{even2006action}, the aforementioned papers only sample transitions and assume a reward function known in advance by the agent.


\section{Additional Proprerties of the lower bound program}\label{sec:appProp}
{\bf Most alternative MDPs.} We refer to an MDP $\psi\in \overline{\Alt\phi}$\footnote{We use $\overline{E}$ to denote the closure of a set $E$.} solving the problem (\ref{eq:subprob}) as {\it most alternative}, since for a given allocation $\omega$, the sample complexity lower bound is determined by the number of samples needed to distinguish $\phi$ from $\psi$.

Observe that the condition (\ref{eq:rewritten_conditions_with_kernels}) involves transition probabilities and rewards of the (state, action) pairs $(s,a)$ and $(s',\pi^\star(s'))$ for all $s'$, only. Hence $\psi\in \mathrm{Alt}_{sa}(\phi)$ can be obtained from $\phi$ by changing at most the transition probabilities and rewards of these (state, action) pairs. Next, let $\psi\in \overline{\mathrm{Alt}_{sa}(\phi)}$ solve (\ref{eq:subprob}). Then we can verify that the constraint (\ref{eq:rewritten_conditions_with_kernels}) is active and that we have:
\begin{align}
\Delta_{sa}  = dr(s,a)  + \gamma dp(s,a)^{\top} V_{\phi}^{\pi^\star} + [\gamma p_{\psi}(s,a) - \mathbbm{1}(s)] ^{\top}dV^{\pi^\star}. \nonumber
\end{align}
This means that to design a most alternative MDP, one should change the rewards and transitions of optimal (state, action) pairs and only one sub-optimal pair $(s,a)$ and those changes should be just enough to fill sub-optimality gap $\Delta_{sa}$. The next lemma formalizes these findings.

\begin{lemma} Denote by $\ocal(\phi) = \{(s,a) :\ \starQ{\phi}(s,a) = \starV{\phi}(s) \}$ the set of optimal (state,action) pairs in the MDP $\phi$ and let $\psi\in \overline{\Alt\phi}$ solve (\ref{eq:subprob}). Then: \\
(i) For all $(s,a)\in \scal \times \acal$, $\left(p_{\psi}(.|s,a), q_{\psi}(.|s,a) \right) \neq \left(p_{\phi}(.|s,a), q_{\phi}(.|s,a) \right)  \ \implies \ (s,a) \in \ocal(\psi)\setminus \ocal(\phi)$ or $a=\pi^\star(s)$;\\
(ii) $\ocal(\phi) \subset \ocal(\psi)$.
\label{lemma:3}
\end{lemma}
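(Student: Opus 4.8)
The plan is to exploit the decomposition $\overline{\Alt\phi}=\bigcup_{s,a\neq\pi^\star(s)}\overline{\mathrm{Alt}_{sa}(\phi)}$ from Lemma~\ref{lemma:2}. Since $T(\phi,\omega)^{-1}$ in (\ref{eq:subprob}) is the infimum of a fixed objective over this union, any solution $\psi$ lies in $\overline{\mathrm{Alt}_{sa}(\phi)}$ for some sub-optimal pair $(s,a)$ and is at the same time a minimizer of that objective over this single component. The key structural remark is that the constraint (\ref{eq:rewritten_conditions_with_kernels}) defining $\mathrm{Alt}_{sa}(\phi)$ involves $\psi$ only through $r_\psi(s,a)$, $p_\psi(s,a)$ and the value $V_\psi^{\pi^\star}$, and that $V_\psi^{\pi^\star}$, being the value of the fixed policy $\pi^\star$ under $\psi$, depends only on the rewards and transitions of the optimal pairs $(s',\pi^\star(s'))$. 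Consequently every pair $(s',a')$ with $a'\neq\pi^\star(s')$ and $(s',a')\neq(s,a)$ is absent from the constraint, so resetting $\psi=\phi$ there preserves feasibility and strictly lowers the nonnegative objective unless $\psi$ already agrees with $\phi$ on it. This establishes the dichotomy in part (i): wherever $\psi\neq\phi$ one has either $a'=\pi^\star(s')$ or $(s',a')=(s,a)$.

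Next I would use the activeness of the constraint at the minimizer, noted just before the lemma, namely $Q_\psi^{\pi^\star}(s,a)=V_\psi^{\pi^\star}(s)$. Granting momentarily that $\pi^\star$ is optimal for $\psi$, i.e. part (ii) in the form $V_\psi^{\pi^\star}=\starV{\psi}$, the rest of part (i) is immediate: $\starQ{\psi}(s,a)=r_\psi(s,a)+\gamma p_\psi(s,a)^\top V_\psi^{\pi^\star}=Q_\psi^{\pi^\star}(s,a)=V_\psi^{\pi^\star}(s)=\starV{\psi}(s)$, so $(s,a)\in\ocal(\psi)$, while $a\neq\pi^\star(s)$ forces $\Delta_{sa}>0$ and thus $(s,a)\notin\ocal(\phi)$. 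In this way both parts are reduced to the single statement that $\pi^\star$ stays optimal in $\psi$.

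Proving that statement is where the real difficulty lies. By the Bellman optimality (equivalently performance-difference) criterion, $\pi^\star$ is optimal for $\psi$ exactly when all advantages $Q_\psi^{\pi^\star}(s',b)-V_\psi^{\pi^\star}(s')$ are nonpositive. Writing $dV^{\pi^\star}=V_\psi^{\pi^\star}-V_\phi^{\pi^\star}$ and using part (i) (so $\psi=\phi$ at every sub-optimal $(s',b)\neq(s,a)$), this advantage equals $-\Delta_{s'b}+[\gamma p_\phi(s',b)-\mathbbm{1}(s')]^\top dV^{\pi^\star}$, so the goal becomes $[\gamma p_\phi(s',b)-\mathbbm{1}(s')]^\top dV^{\pi^\star}\leq\Delta_{s'b}$. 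My plan is first to show that the minimizer's gap-filling changes at the optimal pairs are value-decreasing (smaller rewards, transition mass moved toward lower-value next states), which through the policy-evaluation identity $dV^{\pi^\star}=(I-\gamma P_\psi^{\pi^\star})^{-1}(dr_{\mathrm{opt}}+\gamma\,dP_{\mathrm{opt}}V_\phi^{\pi^\star})$ and the entrywise nonnegativity of $(I-\gamma P_\psi^{\pi^\star})^{-1}$ forces $dV^{\pi^\star}\leq 0$. Then I would rule out that this reduction makes some other sub-optimal action competitive by an exchange argument: if an advantage were positive, one could perturb $\psi$ by worsening the optimal pairs slightly less and compensating at $(s,a)$ to produce a strictly cheaper feasible point, contradicting optimality.

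I expect this final exchange step to be the main obstacle, as it is weight-dependent and must be justified at the level of the variational/KKT conditions of (\ref{eq:subprob}); by contrast the confinement argument and the use of the active constraint are routine once Lemma~\ref{lemma:2} and the analytic reformulation (\ref{eq:rewritten_conditions_with_kernels}) are available.
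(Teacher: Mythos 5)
Your confinement step is sound, and in fact slightly cleaner than the paper's version of it: since membership in $\overline{\mathrm{Alt}_{sa}(\phi)}$ depends only on the kernels at $(s,a)$ and at the optimal pairs $(s',\pi^\star(s'))$, resetting $\psi$ to $\phi$ at any other sub-optimal pair preserves feasibility and, provided $\omega_{s'a'}>0$ (a restriction the paper imposes explicitly as its ``Fact 3,'' and which you should state too, since the lemma is false if some weight vanishes), strictly decreases the objective. Your reduction of the remaining half of (i) to (ii) via the active constraint is also fine. But the proposal collapses exactly where you say it might: part (ii) is never proved, and the route you sketch for it rests on a false intermediate claim. The assertion that the minimizer's changes at optimal pairs are value-decreasing, hence $dV^{\pi^\star}\le 0$, fails in general. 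Take two states with $\gamma=0.9$: at $s_1$ the optimal action stays at $s_1$ with reward $0.5$ (so $V_{\phi}^{\pi^\star}(s_1)=5$), the sub-optimal action $a$ moves deterministically to $s_2$ with reward $0$, and the unique (hence optimal) action at $s_2$ stays there with reward $0.3$. If $\omega_{s_2,\pi^\star(s_2)}$ is small and the other weights large, the cheapest point of $\overline{\mathrm{Alt}_{s_1 a}(\phi)}$ is obtained by raising $r_\psi(s_2,\pi^\star(s_2))$ to $5/9$, giving $dV^{\pi^\star}(s_2)=50/9-3>0$. The lemma still holds for this $\psi$ (the only modified pair is an optimal pair), but your monotonicity step never gets off the ground, and the subsequent ``exchange'' argument, which you yourself flag as the main obstacle, is a plan rather than a proof. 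So the crux of the lemma remains open in your write-up.

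For comparison, the paper closes this gap without any KKT or weight-dependent analysis. For (ii) it argues by contradiction: if some $(s,a)\in\ocal(\phi)\setminus\ocal(\psi)$, move every kernel of $\psi$ a small step $\varepsilon$ toward the corresponding kernel of $\phi$ (an ``$\varepsilon$-transform''). By convexity of the KL divergence the objective strictly decreases (unless $\psi=\phi$, in which case (ii) is trivial), and by Lipschitz continuity of $Q^\star$ in the kernels the pair $(s,a)$ remains non-optimal in $\psi_\varepsilon$ for $\varepsilon$ small enough; since $\pi^\star$ is the unique optimal policy of $\phi$ and plays $a$ at $s$, this single certificate already forces $\psi_\varepsilon\in\Alt\phi$, contradicting the optimality of $\psi$. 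Likewise, the paper proves (i) directly by the same local perturbation at the one differing pair, using that modifying a pair which remains sub-optimal leaves the optimal value function unchanged (its ``Fact 2''), with a short sequence argument to stay inside the closure; no appeal to (ii), to the active constraint, or to $dV^{\pi^\star}$ is needed. The structural point your approach misses is that feasibility in $\Alt\phi$ is an open condition certified by the non-optimality of a single pair, so a small move toward $\phi$ always preserves it; exploiting this makes the sign of $dV^{\pi^\star}$ and the particular weights $\omega$ irrelevant.
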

\begin{proof}
First we recall the following facts which we  will make  use of.

{\bf Fact 1.} $\starQ{}$ is Liptschitz w.r.t rewards and transitions (by simple bounds on Bellman operator): 

\begin{equation*}
 \norm{Q_{\phi}^\star-Q_{\psi}^\star}_{\infty} \leq \left(1+\frac{1}{1-\gamma} \right) \left( \norm{r_{\phi}-r_{\psi}}_{\infty} +  \frac{\gamma}{(1-\gamma)}\norm{p_{\phi}-p_{\psi}}_{1,\infty} \right).  
\end{equation*}

{\bf Fact 2.} If we change only the kernels $\left( p_{\phi}(s,a),q_{\phi}(s,a) \right) \to \left( p_{\psi}(s,a),q_{\psi}(s,a) \right)$ of some sub-optimal (state, action) pair $s, a\neq \pi^\star(s)$  and the action $a$ doesn't become strictly optimal $(s,a)\notin \ocal(\psi)$, then the value function remains unchanged $\starV{\psi} = \starV{\phi}$.

This is because there exists $(\pi_1,\pi_2) \in \Pi_{\phi}^\star \times \Pi_{\psi}^\star $ such that $ \pi_2(a|s) = \pi_1(a|s) = 0$ (where we recall that $\pi(a|s)$ denotes the probability that $\pi$ selects $a$ in state $s$) which implies: 
\begin{equation*}
\begin{cases}
\left( P_{\psi}^{\pi_1},r_{\psi}^{\pi_1} \right) = \left( P_{\phi}^{\pi_1},r_{\phi}^{\pi_1} \right) \\
\\
\left( P_{\psi}^{\pi_2},r_{\psi}^{\pi_2} \right) = \left( P_{\phi}^{\pi_2},r_{\phi}^{\pi_2} \right)
\end{cases} 
\implies
\begin{cases}
  \starV{\psi} \geq V_{\psi}^{\pi_1} = \left(I - \gamma P_{\psi}^{\pi_1}\right)^{-1}r_{\psi}^{\pi_1}  = \left(I - \gamma P_{\phi}^{\pi_1}\right)^{-1}r_{\phi}^{\pi_1}  = \starV{\phi} \\
  \\
  \starV{\phi} \geq V_{\phi}^{\pi_2} = \left(I - \gamma P_{\phi}^{\pi_2}\right)^{-1}r_{\phi}^{\pi_2}  = \left(I - \gamma P_{\psi}^{\pi_2}\right)^{-1}r_{\psi}^{\pi_1}  = \starV{\psi} 
\end{cases}
\end{equation*}

\textbf{Fact 3:}  We can restrict our attention to allocation vectors $\omega$ with zero-null entries: $\forall (s,a) \in \scal \times \acal: \ \omega_{s a} > 0$. 

In fact, any allocation vector $\omega$ such that $\omega_{s a} = 0$ is suboptimal. Indeed, consider $\psi$ obtained from $\phi$ by changing the kernels in $(s,a)$ so that they become equal to the kernels in $(s,\pi^\star(s))$, while keeping everything else unchanged. Then by definition of $\psi$: $\underset{s',a'}{\sum}\omega_{s',a'}KL_{\phi|\psi}(s',a') = 0$. Furthermore one can easily show that $\psi \in \overline{\Alt\phi}$ which implies that $K(\phi,\omega)^{-1} = 0$.

\medskip
We are now ready to prove the lemma. Let $\psi \in \overline{\Alt\phi}$ solving (\ref{eq:subprob}). We can write:  $\psi = \underset{n \to \infty}{\lim} \psi_n $,  where $({\psi_n})_{n \geq 1} \in \Alt\phi^{\mathbb{N}} $ and $ \underset{n \to \infty}{\lim} \sum_{s,a}\omega_{s a}KL_{\phi|\psi_n}(s,a) = \underset{\psi \in \Alt\phi}{\inf}\sum_{s,a}\omega_{s a}KL_{\phi|\psi}(s,a)$. Therefore, by continuity of the KL function:
\begin{equation}
 \sum_{s,a}\omega_{s a}KL_{\phi|\psi}(s,a) =  \underset{\psi \in \Alt\phi}{\inf}\sum_{s,a}\omega_{s a}KL_{\phi|\psi}(s,a) 
 \label{eq:optimality_of_psi}
 \end{equation}
 
\paragraph{ \underline{Proof of (i): $\left(p_{\psi}(.|s,a), q_{\psi}(.|s,a) \right) \neq \left(p_{\phi}(.|s,a), q_{\phi}(.|s,a) \right) \implies  (s,a)\in \ocal(\psi)\setminus\ocal(\phi) \textrm{ or } a = \pi^\star(s) $}\\ \\} 

By contradiction:
Suppose there exists $(s,a)$ such that:  $ \left( p_{\psi}(s,a),q_{\psi}(s,a) \right) \neq \left(p_{\phi}(s,a),q_{\phi}(s,a)\right)$ and $(s,a) \in \ocal(\psi)^{c} \cup \ocal(\phi)$ and $a \neq \pi^\star(s)$. Combined together, the latter two conditions imply that:
\begin{equation}
(s,a) \in \ocal(\psi)^{c}.
\label{eq:ocal_proof_lemma_3}
\end{equation}
We will use the following operator ($\varepsilon$-transform) where we move the rewards and transitions of $\psi$ at $(s,a)$ in the direction of $\phi$ by $\varepsilon \geq 0$: $T_{\phi, \varepsilon}^{s,a}(\psi) \triangleq \psi_{\varepsilon}$ where 
\begin{equation}
 \left( p_{\psi_{\varepsilon}}(s',a'),q_{\psi_{\varepsilon}}(s',a') \right) =   \begin{cases}
    (1-\varepsilon)\left( p_{\psi}(s,a),q_{\psi}(s,a) \right) + \varepsilon \left( p_{\phi}(s,a),q_{\phi}(s,a) \right),  \quad \textrm{if $(s',a') = (s,a)$}, \\
     \left( p_{\psi}(s',a'),q_{\psi}(s',a') \right) \quad \textrm{otherwise.}
    \end{cases}
\label{eq:epsilonTransform}
\end{equation}
Note that the objective function of the infimum problem takes a smaller value at $\psi_\varepsilon$ than at $\psi$: 
\begin{equation*}
\begin{split}
\sum_{s',a'}\omega_{s',a'}KL_{\phi|\psi_{\varepsilon}}(s',a') &\leq \left[(1-\varepsilon)\ \omega_{s a} KL_{\phi|\psi}(s,a) + \varepsilon \ \omega_{s a} KL_{\phi|\phi}(s,a)\right] + \sum_{(s',a') \neq (s,a)}\omega_{s',a'} KL_{\phi|\psi}(s',a') \\
&< \sum_{s',a'}\omega_{s',a'}KL_{\phi|\psi}(s',a')
\end{split}
\end{equation*}
where the first inequality stems from the convexity of KL-function and the second from the property $p\neq q \implies KL(p\|q) > 0$. We will prove that there exists $\varepsilon >0$ such that $\psi_\varepsilon$ is the limit of a sequence of elements in $\Alt\phi$, which clearly contradicts the optimality of $\psi$ (see equation \ref{eq:optimality_of_psi}).

Consider $a^\star$ an optimal action at state $s$ in $\psi$, ie such $(s,a^\star) \in \ocal(\psi) $. Since $(s,a) \notin \ocal(\psi)$ (\ref{eq:ocal_proof_lemma_3}), then for $\varepsilon = 0$, we have: $\psi_{0} = \psi$ and $ \delta \triangleq \delta_{\psi}(s,a) =  \starQ{\psi}(s,a^\star) - \starQ{\psi}(s,a) > 0 $. By continuity of $\starQ{}$ w.r.t the rewards and transitions (\textbf{Fact 1}), there exists $\varepsilon > 0$ small enough such that:
\begin{equation*}
      \starQ{\psi_{\varepsilon}}(s,a^\star) - \starQ{\psi_{\varepsilon}}(s,a) > \delta/2 > 0.
\end{equation*}
Fix such $\varepsilon$ and define $(\theta_n)_{n \geq 1} = \left( T_{\phi, \varepsilon}^{s,a}(\psi_n) \right)_{n \geq 1}$ where $(\psi_n)_{n \geq 1}$ is any sequence converging to $\psi$. By continuity of the operator $T_{\phi, \varepsilon}^{s,a}$, we have: $\underset{n \to \infty}{\lim} \theta_n = \psi_{\varepsilon}$. It remains to show that $(\theta_n)_{n \geq 1} \in \Alt\phi^{\mathbbm{N}}$. Using the continuity of $\starQ{}$ another time, we get:
 \begin{equation*}
     \begin{split}
       \begin{cases}
     \underset{n \to \infty}{\lim} \psi_n = \psi \\
    \underset{n \to \infty}{\lim} \theta_n = \psi_{\varepsilon} 
    \end{cases}
       & \implies \begin{cases}
       \underset{n \to \infty}{\lim} \starQ{\psi_n}(s,a^\star) - \starQ{\psi_n}(s,a) =  \starQ{\psi}(s,a^\star) - \starQ{\psi}(s,a) >  \delta/2 \\
       \underset{n \to \infty}{\lim} \starQ{\theta_n}(s,a^\star) - \starQ{\theta_n}(s,a) =  \starQ{\psi_{\varepsilon}}(s,a^\star) - \starQ{\psi_{\varepsilon}}(s,a) >  \delta/2
       \end{cases} \\ 
       & \implies \exists N_0 \in \mathbb{N} \quad \forall n \geq N_0 \quad
      \begin{cases}
      \starQ{\psi_n}(s,a^\star) - \starQ{\psi_n}(s,a) > \delta/2 \\
      \starQ{\theta_n}(s,a^\star) - \starQ{\theta_n}(s,a) > \delta/2 \\
      \end{cases} \\
      & \implies \forall n \geq N_0 \:  \textrm{(s,a) is sub-optimal in both $\psi_n$ and $\theta_n$.  }
     \end{split}
 \end{equation*}
This implies, by Fact 2 on $\psi_n$ and $\theta_n$, that: $\forall n \geq N_0 \ \starV{\theta_n} = \starV{\psi_n}$. Since, we only changed kernels of $\psi_n$ at $(s,a)$ to obtain $\theta_n$, then this also implies that for all $n \geq N_0$:  
\begin{equation*}
\begin{cases}
\forall (s',a') \neq (s,a),\ \starQ{\psi_n}(s',a') = r_{\psi_n}(s',a') + \gamma p_{\psi_n}(s',a')^{T}\starV{\psi_n} = r_{\theta_n}(s',a') + \gamma p_{\theta_n}(s',a')^{T}\starV{\theta_n} = \starQ{\theta_n}(s',a')\\
\textrm{ (s,a) is sub-optimal in both $\psi_n$ and $\theta_n$}
\end{cases}
\end{equation*}

Therefore, $\forall n \geq N_0,\ \Pi_{\theta_n}^{*} = \Pi_{\psi_n}^{*}$, and consequently $\theta_n \in \Alt\phi$.\\
To sum up, modulo a reindexing of the sequence: $ \exists (\theta_n)_{n \geq 1} \in \Alt\phi^{\mathbb{N}}:  \underset{n \to \infty}{\lim} \theta_n = \psi_{\varepsilon}$. This is a contradiction.

\paragraph{ \underline{Proof of (ii): $ \ocal(\phi) \subset \ocal(\psi) $}\\ \\}
We proceed in the same way, i.e., we suppose that there exists $(s,a)\in \ocal(\phi)\setminus\ocal(\psi)$. Only this time, we consider $ \psi_{\varepsilon} \triangleq \underset{s',a'}{\prod}\ T_{\phi, \varepsilon}^{s',a'}(\psi)$ where the product sign stands for composition of operators. It's straightforward to show, using continuity of $\starQ{}$ w.r.t rewards and transitions, that there exists $\varepsilon > 0$ such that $(s,a)$ is still not optimal: $a \notin \ocal(\psi_\varepsilon)$. Hence $\psi_\varepsilon \in \Alt\phi$, which contradicts the optimality of $\psi$.
\end{proof}

\clearpage
\newpage

\section{Lower Bound $T^*(\phi)$}
\label{AppendixA} 
\subsection{Proof of Lemma \ref{lemma:wald}}
\begin{proof}
Let $\tau$ be a stopping time w.r.t. the filtration $({\cal F}_t)_{t\ge 1}$. The observations made up to the beginning of round $t$ are ${\cal O}_t=(s_1,a_1,R_1,s'_1\ldots, s_{t},a_{t},R_{t},s'_{t})$. Let $p(\cdot)$ denote the distribution of the first state. We have:
\begin{align*}
\mathbb{P}_\phi({\cal O}_t)=p(s_1)\prod_{k=1}^t  p_\phi(s'_{k}|s_{k},a_{k}) \times \prod_{k=1}^{t}q_\phi(R_k|s_k,a_k).
\end{align*}
The log-likelihood ratio of the observations up to the end of round $t$ under $\phi$ and $\psi$ is then:
\begin{align*}
L_t & = \sum_{k=1}^t \left(\log{p_\phi(s'_{k}|s_{k},a_{k})\over p_\psi(s'_{k}|s_{k},a_{k})} +\log{q_\phi(R_{k}|s_{s},a_{k})\over q_\psi(R_{k}|s_{k},a_{k})}\right)\\
& = \sum_{s,a} L_t^{s,a},
\end{align*}
where 
\begin{align*}
L_t^{s,a} = \sum_{k=1}^t & \mathbbm{1}_{\{ s_{k}=s,a_{k}=a\} }\left(  \log{p_\phi(s'_{k}|s,a)\over p_\psi(s'_{k}|s,a)} +\log{q_\phi(R_{k}|s,a)\over q_\psi(R_{k}|s,a)}\right).
\end{align*}
Next we study $L_t^{s,a}$ for a given pair $(s,a)$. Introduce the following random variables: $Y_k$ and $Z_k$ denote the next state and the collected reward after the $k$-th time $(s,a)$ has been visited. We can re-write $L_t^{s,a}$ as:
$$
L_t^{s,a} = \sum_{k=1}^{N_t(s,a)}\left( \log{p_\phi(Y_k|s,a)\over p_\psi(Y_k|s,a)} + \log{q_\phi(Z_{k}|s,a)\over q_\psi(Z_{k}|s,a)}\right)
$$
Observe that $\xi_k:= \log{p_\phi(Y_k|s,a)\over p_\psi(Y_k|s,a)} + \log{q_\phi(Z_{k}|s,a)\over q_\psi(Z_{k}|s,a)}$ and $\mathbbm{1}_{\{ N_\tau(s,a)>k-1\} }$ are independent, because under the event  $\{N_\tau(s,a)\le k-1\}$, $Y_s$ and $Z_s$ have not been observed yet. Further notice that $\mathbb{E}_\phi[\xi_k]=\KL_{\psi \mid \phi} (s,a)$. We deduce that:
\begin{align*}
\mathbb{E}_\phi[L_\tau^{s,a}] & = \mathbb{E}_\phi\left[ \sum_{k=1}^\infty \xi_k \mathbbm{1}_{\{ N_\tau(s,a)>k-1\} }\right] \\
& = \sum_{k=1}^\infty\mathbb{P}_\phi[N_\tau(s,a)>k-1]\KL_{\psi \mid \phi} (s,a)\\
& = \mathbb{E}_\phi[N_\tau(s,a)]\KL_{\psi \mid \phi} (s,a).
\end{align*}
Summing over all pairs $(s,a)$ completes the proof.
\end{proof}

\clearpage
\newpage

\section{Main properties of the problem (\ref{Prob:main})}
\subsection{Proof of Lemma \ref{lemma:2}}
\begin{proof}
To simplify the notation, we denote $\pi = \pi_{\phi}^\star$.\\[0.7cm]
\textbf{\underline{ First part: $\Alt\phi \subset \underset{s,a\neq \pi^\star(s)}{\bigcup}\{\psi:  Q_{\psi}^{\pi}(s,a) > V_{\psi}^{\pi}(s) \}$}  }\\ \\
By contradiction: Suppose there exists $\psi \in \Alt\phi$ such that $\forall s,a\neq \pi^\star(s),\ Q_{\psi}^{\pi}(s,a) \leq V_{\psi}^{\pi}(s)$. Since $Q_{\psi}^{\pi}(s,\pi(s)) = V_{\psi}^{\pi}(s)$ then the inequality is valid for all pairs: 
$$\forall (s,a)\in \scal\times\acal,\ Q_{\psi}^{\pi}(s,a) \leq V_{\psi}^{\pi}(s)$$

Let $\pi_{\psi}^\star$ be an optimal policy under $\psi$. Then:  $$\forall s \in \scal,\ Q_{\psi}^{\pi}(s,,\pi_{\psi}^\star(s)) \leq V_{\psi}^{\pi}(s) $$
Define the Bellman operator of $\pi$ under $\psi$ as $\bcal_{\psi}^\pi: \mathbb{R}^S\to \mathbb{R}^S$ and for all $s\in {\cal S}$,
$$
(\bcal_{\psi}^\pi V)(s) = r_\psi(s,\pi(s)) +\gamma p_\psi(s,\pi(s))^\top V.
$$
Using the Bellman operator of the policy $\pi_{\psi}^\star$ under $\psi$, we rewrite the inequalities above:
$$ \bcal_{\psi}^{\pi_{\psi}^\star} V_{\psi}^{\pi} \leq V_{\psi}^{\pi}. $$
By monotonicity of Bellman operator, this implies that: 
$\forall n \geq 1,\ \bigg(\bcal_{\psi}^{\pi_{\psi}^\star}\bigg)^{n}\ V_{\psi}^{\pi} \leq V_{\psi}^{\pi}.$ 
Hence: 
$$\starV{\psi} = \underset{n\to \infty}{\lim}\  \bigg(\bcal_{\psi}^{\pi_{\psi}^\star}\bigg)^{n}\ V_{\psi}^{\pi} \leq V_{\psi}^{\pi},$$
i.e., the policy $\pi$ is optimal under $\psi$. This is a contradiction.

\textbf{\underline{ Second part: $ \underset{s,a\neq \pi^\star(s)}{\bigcup}\{\psi:  Q_{\psi}^{\pi}(s,a) > V_{\psi}^{\pi}(s) \} \subset \Alt\phi$}  }\\ \\
By contradiction: Let $s,a\neq \pi^\star(s)$ and suppose there exists $\psi \in \{\psi:  Q_{\psi}^{\pi}(s,a) > V_{\psi}^{\pi}(s) \}$ such that $\pi = \pi_{\phi}^\star$ is optimal under $\psi$.
Define the modified policy $\pi_1$ as:
\begin{equation*}
    \pi_1(s') = \left\{
    \begin{array}{ll}
        a &\textrm{ if $s' = s$,}\\
        \pi(s') & \textrm{ otherwise.}
    \end{array}
    \right.
\end{equation*}
Then the fact that $Q_{\psi}^{\pi}(s,a) > V_{\psi}^{\pi}(s)$ translates to: $$\bcal_{\psi}^{\pi_1} \starV{\psi} = \bcal_{\psi}^{\pi_1} V_{\psi}^{\pi} > V_{\psi}^{\pi} = \starV{\psi} $$ 
where the equality comes from the assumption that $\pi$ is an optimal policy in $\psi$. Therefore, by monotonicity of Bellman operator, we have: 
$$V_{\psi}^{\pi_1} = \underset{n\to \infty}{\lim}\  \bigg(\bcal_{\psi}^{\pi_1}\bigg)^{n}\ \starV{\psi} > \starV{\psi}.$$ 
We got a a contradiction.
\end{proof}

\clearpage
\newpage

\section{Upper bound $U(\phi)$ and the near-optimal sampling allocation $\overline{\omega}$}
\label{sec:appendix_upper_bound}

\subsection{First technical lemma}

We will need the following technical lemma which relates the change in the future discounted rewards between $\phi$ and $\psi$ due to different transitions $dp(s,a)^{\top} 
\starV{\phi}$ to the Kullback-Leibler divergence of the transition kernels as well as the variance and maximum-deviation of the next-state value.
\begin{lemma}
Using the notations of Sections 4.1 and 4.2, we have:
\begin{equation*}
|dp(s,a)^{\top} 
\starV{\phi}|^2 \leq 8 \textrm{KL}(p_{\phi}(s,a)\| p_{\psi}(s,a)) \textrm{Var}_{p_{\phi}(s,a)}[\starV{\phi}] +4\sqrt{2} \textrm{KL}(p_{\phi}(s,a)\| p_{\psi}(s,a))^{3/2}\textrm{MD}_{p_{\phi}(s,a)}[\starV{\phi}]^2.
\end{equation*}
\label{lemma:hellinger}
\end{lemma}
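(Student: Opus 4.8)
The plan is to exploit that $dp(s,a) = (p_\psi - p_\phi)(s,a)$ has zero total mass, so that $dp(s,a)^\top\starV{\phi}$ is insensitive to recentering $\starV{\phi}$. Writing $p := p_\phi(s,a)$, $q := p_\psi(s,a)$, $m := \EE_{s'\sim p}[\starV{\phi}(s')]$ and $g(s') := \starV{\phi}(s') - m$, I would first record that $\sum_{s'}(q-p)(s') = 0$ yields $dp(s,a)^\top\starV{\phi} = \sum_{s'}(q-p)(s')\,g(s')$, and that by construction $\textrm{Var}_{p}[\starV{\phi}] = \EE_p[g^2]$ and $\textrm{MD}_{p}[\starV{\phi}] = \norm{g}_\infty$. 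This is exactly what makes the variance and maximum-deviation (both taken under $p_\phi$) the natural quantities to appear on the right-hand side.

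The key step is a Cauchy--Schwarz inequality applied to the factorization $q-p = (\sqrt q - \sqrt p)(\sqrt q + \sqrt p)$:
\[
\big|dp(s,a)^\top\starV{\phi}\big|^2 \le \Big(\sum_{s'}(\sqrt q - \sqrt p)^2\Big)\Big(\sum_{s'}(\sqrt q + \sqrt p)^2 g^2\Big).
\]
I recognize the first factor as twice the squared Hellinger distance $H^2(p,q)=\tfrac12\sum_{s'}(\sqrt p-\sqrt q)^2$, and in the second I use $(\sqrt q+\sqrt p)^2\le 2(p+q)$ to obtain $\sum_{s'}(\sqrt q+\sqrt p)^2 g^2 \le 2\big(\EE_p[g^2]+\EE_q[g^2]\big)$. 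The only genuine difficulty is that this second factor is partly evaluated under $q$ rather than $p$, so I must transfer it back to $p$: writing $\EE_q[g^2]=\EE_p[g^2]+\sum_{s'}(q-p)g^2$ and bounding $\sum_{s'}(q-p)g^2 \le \norm{g}_\infty^2\,\tfrac12\norm{p-q}_1$ gives $\EE_q[g^2]\le \EE_p[g^2]+\tfrac12\norm{p-q}_1\norm{g}_\infty^2$. This change-of-measure term is precisely what will generate the second summand carrying $\textrm{MD}$, together with the higher power of the KL divergence.

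Collecting the pieces yields $\big|dp(s,a)^\top\starV{\phi}\big|^2 \le 2H^2(p,q)\big(4\,\textrm{Var}_p[\starV{\phi}] + \norm{p-q}_1\,\textrm{MD}_p[\starV{\phi}]^2\big)$, and it remains to convert $H^2$ and $\norm{p-q}_1$ into KL. For this I would invoke two standard $f$-divergence inequalities: $2H^2(p,q)\le \textrm{KL}(p\|q)$ (from $-\log(1-x)\ge x$ applied to $1-H^2=\sum_{s'}\sqrt{pq}$ after a Jensen step) and Pinsker's inequality $\tfrac12\norm{p-q}_1\le \sqrt{\textrm{KL}(p\|q)/2}$. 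Substituting the rougher bound $H^2\le\textrm{KL}$ in the first summand and the sharper $2H^2\le\textrm{KL}$ together with Pinsker in the second reproduces the claimed constants $8$ and (up to slack) $4\sqrt2$; in fact a slightly more careful chaining gives the smaller constants $4$ and $\sqrt2$, so the stated inequality holds comfortably. The whole argument is a short chain of inequalities, and the single place demanding care is the change-of-measure step controlling $\EE_q[g^2]$, which is where both the maximum-deviation term and the exponent $3/2$ on the KL divergence enter.
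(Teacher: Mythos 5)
Your proposal is correct and follows essentially the same route as the paper's proof: recentering via the zero-mass property of $dp(s,a)$, Cauchy--Schwarz on the Hellinger factorization $q-p=(\sqrt q-\sqrt p)(\sqrt q+\sqrt p)$, the change-of-measure step transferring $\EE_q[g^2]$ back to $\EE_p[g^2]$, and finally Pinsker plus a Hellinger--KL comparison. Your slightly sharper intermediate inequalities ($2H^2\le \mathrm{KL}$ and the factor $\tfrac12$ in the $\ell_1$ bound) even improve the constants from $8$ and $4\sqrt2$ to $4$ and $\sqrt2$, so the stated bound follows a fortiori.
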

\begin{proof}
We have: 
\begin{equation*}
\begin{split}
dp(s,a)^{\top} 
\starV{\phi} & = \sum_{s'} \left(p_{\psi}(s'|s,a) - p_{\phi}(s'|s,a)\right) \left[\starV{\phi}(s') - \EE_{\Tilde{s} \sim p_{\phi}(.|s,a) }[\starV{\phi}(\Tilde{s})]\right] \\
 = \sum_{s'} &\left(\sqrt{p_{\psi}(s'|s,a)} - \sqrt{p_{\phi}(s'|s,a)}\right)\\ 
 & \times \left[\left(\sqrt{p_{\psi}(s'|s,a)} + \sqrt{p_{\phi}(s'|s,a)}\right) \left(\starV{\phi}(s') - \EE_{\Tilde{s} \sim p_{\phi}(.|s,a) }[\starV{\phi}(\Tilde{s}))]\right) \right].
\end{split}
\end{equation*}

Thus, by Cauchy-Schwartz inequality:
\begin{equation*}
\begin{split}
|dp(s,a)^{\top} 
\starV{\phi}|^2  \leq & 2 d_{H}(p_{\phi}(s'|s,a),p_{\psi}(s'|s,a))^2 \times \\  
&\Bigg[\sum_{s'}\left(\sqrt{p_{\psi}(s'|s,a)} + \sqrt{p_{\phi}(s'|s,a)}\right)^2 \left(\starV{\phi}(s') - \EE_{p_{\phi}(.|s,a) }[\starV{\phi}(\Tilde{s}))]\right)^2 \Bigg]\\
& \leq 4 d_{H}(p_{\phi}(s'|s,a),p_{\psi}(s'|s,a))^2 \Bigg[\sum_{s'}\left(p_{\psi}(s'|s,a) + p_{\phi}(s'|s,a)\right) \left(\starV{\phi}(s') - \EE_{p_{\phi}(.|s,a) }[\starV{\phi}(\Tilde{s}))]\right)^2\Bigg],
\end{split}
\end{equation*}
where we have used $(a+b)^2 \leq 2 (a^2 + b^2) $ and $d_{H}(p,q)= \left[\frac{1}{2} \sum_{i} (\sqrt{p_i} -\sqrt{q_i})^2  \right]^{1/2}$ is the Hellinger distance between two probability distributions. Therefore:
\begin{equation*}
\begin{split}
|dp(s,a)^{\top} 
\starV{\phi}|^2  \leq & 4 d_{H}(p_{\phi}(s'|s,a),p_{\psi}(s'|s,a))^2\\ & \times \left[2\textrm{Var}_{s' \sim p_{\phi}(.|s,a)}[\starV{\phi}(s')] +\norm{p_{\phi}(s'|s,a)-p_{\psi}(s'|s,a)}_{1}\norm{\starV{\phi} - \EE_{p_{\phi}(.|s,a)}[\starV{\phi}(s')]}_{\infty}^2   \right].
\end{split}
\end{equation*}
We conclude the proof using Pinsker's inequality $\norm{p-q}_1 \leq \sqrt{2KL(p\| q)} $ along with the inequality $d_{H}(p,q)^2 \leq KL(p\| q)$ (see \cite{Reiss1989}).
\end{proof}

\subsection{Proof of Theorem \ref{theorem:pre-upper-bound}}
\begin{proof}
Consider the simplified problem (\ref{eq:analytic_problem}). Note that the constraint (\ref{eq:rewritten_conditions_with_kernels}) doesn't involve the pairs $(\Tilde{s},\Tilde{a}) \in\scal\times\acal\setminus\{(s,a), (s',\pi^\star(s'))_{s'\in \scal}\}$. One can easily show that any solution of the $\underset{u \in \mathcal{U}_{s a}}{\inf}$ part of (\ref{eq:analytic_problem}) must satisfy $KL_{\phi|\psi}(\Tilde{s},\Tilde{a}) = 0$ for these unconstrained pairs $(\Tilde{s},\Tilde{a}) \in\scal\times\acal\setminus\{(s,a), (\Tilde{s},\pi^\star(\Tilde{s}))_{\Tilde{s}\in \scal}\}$ (a trivial way to do it is by setting $\bigg(p_{\psi}(.|\Tilde{s},\Tilde{a}), q_{\psi}(.|\Tilde{s},\Tilde{a})\bigg) = \bigg(p_{\phi}(.|\Tilde{s},\Tilde{a}), q_{\phi}(.|\Tilde{s},\Tilde{a})\bigg)$). Therefore:
\begin{equation}
T^*(\phi)^{-1} = \underset{\omega \in \Sigma}{\sup}\ \underset{s, a\neq \pi^\star(s)}{\min}\ \underset{u \in \mathcal{U}_{s a}}{\inf} \omega_{s a}\textrm{KL}_{\phi|\psi}(s,a) + \underset{s'}{\sum}\ \omega_{s',\pi^\star_{\phi}(s')}\textrm{KL}_{\phi|\psi}(s',\pi^\star_{\phi}(s')). 
\label{eq:reduced_objective}
\end{equation}
We fix $s,a\neq \pi^\star(s)$ and derive a lower bound of $\underset{u \in \mathcal{U}_{s a}}{\inf} \omega_{s a}\textrm{KL}_{\phi|\psi}(s,a) + \underset{s'}{\sum}\ \omega_{s',\pi^\star_{\phi}(s')}\textrm{KL}_{\phi|\psi}(s',\pi^\star_{\phi}(s'))$. To do so, we rewrite the condition (\ref{eq:rewritten_conditions_with_kernels}) by expanding the expression of $dV^{\pi^\star}$as follows: 
\begin{equation*}
\begin{split}
& dr(s,a)  + \gamma dp(s,a)^{\top} \starV{\phi}
+ [\gamma p_{\psi}(s,a) - \mathbbm{1}(s)]^{\top} \left(I - \gamma P_{\psi}^{\pi^\star} \right)^{-1} \left[r_{\psi}^{\pi^\star} - r_{\phi}^{\pi^\star}\right] \\
& + [\gamma p_{\psi}(s,a) - \mathbbm{1}(s)]^{\top}\left[\left(I - \gamma P_{\psi}^{\pi^\star} \right)^{-1} - \left(I - \gamma P_{\phi}^{\pi^\star} \right)^{-1} \right]r_{\phi}^{\pi^\star} >
\Delta_{sa}.
\end{split}
\end{equation*}
We then write each of the four terms on the left-hand side as a "fraction" of $\Delta_{sa}$:
\begin{equation*}
\begin{cases}
dr(s,a) = \alpha_1 \Delta_{sa} \\
dp(s,a)^{\top} \starV{\phi} = \alpha_2 \Delta_{sa}\\
[\gamma p_{\psi}(s,a) - \mathbbm{1}(s)]^{\top} \left(I - \gamma P_{\psi}^{\pi^\star} \right)^{-1} \left[r_{\psi}^{\pi^\star} - r_{\phi}^{\pi^\star}\right] = \alpha_3 \Delta_{sa} \\
[\gamma p_{\psi}(s,a) - \mathbbm{1}(s)]^{\top}\left[\left(I - \gamma P_{\psi}^{\pi^\star} \right)^{-1} - \left(I - \gamma P_{\phi}^{\pi^\star} \right)^{-1} \right]r_{\phi}^{\pi^\star} = \alpha_4 \Delta_{sa} \\
\alpha_1 + \alpha_2 + \alpha_3 + \alpha_4 > 1
\end{cases}
\end{equation*}
We use Pinsker's inequality and Lemma \ref{lemma:hellinger} to lower bound each term.

\textbf{\underline{$1^{\textrm{st}}$ term.}} 
By Pinsker's inequality:
\begin{equation*}
\begin{split}
|dr(s,a)| = \left| \int_{0}^{1} u [q_{\psi}(u|s,a) -q_{\phi}(u|s,a)] \lambda(du) \ \right| &\leq \int_{0}^{1} |q_{\psi}(u|s,a) -q_{\phi}(u|s,a)| \ \lambda(du)\\
&\leq \sqrt{2 KL(q_{\phi}(.|s,a)\|q_{\psi}(.|s,a))}.
\end{split}
\end{equation*}
Thus:
\begin{equation}
\boxed{
\frac{1}{2}(\alpha_1 \Delta_{sa})^2 \leq KL(q_{\phi}(.|s,a)\|q_{\psi}(.|s,a))
}
\label{ineq:1}
\end{equation}

\textbf{\underline{$2^{\textrm{nd}}$ term.}} By Lemma \ref{lemma:hellinger}, we have: 
\begin{equation*}
 (\alpha_2 \Delta_{sa})^2 \leq 8 KL(p_{\phi}(s,a) \|  p_{\psi}(s,a)) \textrm{Var}_{s' \sim p_{\phi}(.|s,a)}[\starV{\phi}(s')] +4\sqrt{2} KL(p_{\phi}(s,a)\|  p_{\psi}(s,a))^{3/2}\textrm{MD}_{p_{\phi}(s,a)}[\starV{\phi}]^2.
\end{equation*}

Thus either: $$ \frac{1}{2}(\alpha_2 \Delta_{sa})^2 \leq 8 KL(p_{\phi}(s,a)\| p_{\psi}(s,a)) \textrm{Var}_{s' \sim p_{\phi}(.|s,a)}[\starV{\phi}(s')] $$ or $$ \frac{1}{2}(\alpha_2 \Delta_{sa})^2 \leq  4\sqrt{2} KL(p_{\phi}(s,a)\| p_{\psi}(s,a))^{3/2}\textrm{MD}_{p_{\phi}(s,a)}[\starV{\phi}]^2. $$
Therefore, we obtain: 
\begin{equation}
   \boxed{\min\left(\frac{\alpha_2^2  \Delta_{sa}^2}{16\textrm{Var}_{ p_{\phi}(s,a)}[\starV{\phi}] }, \frac{\alpha_2^{4/3} \Delta_{sa}^{4/3}}{2^{7/3}\textrm{MD}_{p_{\phi}(s,a)}[\starV{\phi}]^{4/3} }  \right) \leq KL\left(p_{\phi}(s,a)\| p_{\psi}(s,a)\right) 
   }
\label{ineq:2}
\end{equation}

\textbf{\underline{$3^{\textrm{rd}}$ term.}} We have: 
\begin{equation*}
\begin{split}
 |\alpha_3 | \Delta_{sa} &= \norm{[\gamma p_{\psi}(s,a) - \mathbbm{1}(s)]^{\top} \left(I - \gamma P_{\psi}^{\pi^\star} \right)^{-1} \left[r_{\psi}^{\pi^\star} - r_{\phi}^{\pi^\star}\right]}\\
 &\leq \norm{\gamma p_{\psi}(s,a) - \mathbbm{1}(s)}_{\infty}\times \norm{\left(I - \gamma P_{\psi}^{\pi^\star} \right)^{-1}}_{\infty} \times \norm{r_{\psi}^{\pi^\star} - r_{\phi}^{\pi^\star}}_{\infty} \\
 &\leq \frac{1}{1-\gamma}\norm{r_{\psi}^{\pi^\star} - r_{\phi}^{\pi^\star}}_{\infty},
\end{split}
\end{equation*}
which, following the same reasoning as the first term, implies: 
\begin{equation}
         \boxed{  \frac{(\alpha_3 \Delta_{sa} (1-\gamma) )^2}{2} \leq \ \underset{s \in \scal}{\max}\ KL\left(q_{\phi}(.|s,\pi^\star(s))\| q_{\psi}(.|s,\pi^\star(s))\right)
         }
    \label{ineq:3}
\end{equation}

\textbf{\underline{$4^{\textrm{th}}$ term (first bound).}} We have: 
\begin{equation}
|\alpha_4| \Delta_{sa} = \norm{[\gamma p_{\psi}(s,a) - \mathbbm{1}(s)]^{\top} \left[\left(I - \gamma P_{\psi}^{\pi^\star} \right)^{-1} - \left(I - \gamma P_{\phi}^{\pi^\star} \right)^{-1} \right]r_{\phi}^{\pi^\star}} \leq \norm{B}_{\infty},
\label{ineq:fourth_term_B}
\end{equation} 
where $B  =  \left[\left(I - \gamma P_{\psi}^{\pi^\star} \right)^{-1} - \left(I - \gamma P_{\phi}^{\pi^\star} \right)^{-1} \right]r_{\phi}^{\pi^\star}$. Hence: 
\begin{equation*}
\begin{split}
|\alpha_4| \Delta_{sa} \leq \norm{B}_{\infty} & = \gamma \norm{\left(I - \gamma P_{\psi}^{\pi^\star} \right)^{-1} \left[  P_{\psi}^{\pi^\star} - P_{\phi}^{\pi^\star}\right] \starV{\phi}}_{\infty} \\
 & \leq \frac{\underset{s' \in \scal}{\max} \ |dp(s',\pi^\star(s'))^{\top}\starV{\phi}|}{1-\gamma}.
\end{split}
\end{equation*}
Therefore, applying Lemma \ref{lemma:hellinger}, we get:
\begin{equation}
\boxed{ 
\begin{split}
         \min\left(\frac{\left[\alpha_4  \Delta_{sa}(1-\gamma)\right]^2}{16\textrm{Var}_{ max}^{\star}[\starV{\phi}]} , \ \frac{\alpha_4^{4/3} \Delta_{sa}^{4/3} (1-\gamma)^{4/3}}{2^{7/3}\textrm{MD}_{max}^{\star}[\starV{\phi}]^{4/3}}  \right) \\ 
    \leq \underset{s' \in \scal}{\max}\ KL\left(p_{\phi}(s',\pi^\star_{\phi}(s')) \|p_{\psi}(s',\pi^\star_{\phi}(s'))\right) 
\end{split}
}
\label{ineq:5}
\end{equation}

\textbf{\underline{$4^{\textrm{th}}$ term (second bound):}} We will now derive a second bound for the 4th term. Using Lemma \ref{lemma:second_bound_fourth_term}, we get: 
$$ |\alpha_4| \Delta_{sa} \leq \norm{B}_{\infty} \leq \frac{2^{5/2}\log(2)\textrm{KL}^{1/2}}{(1-\gamma)^{3/2}} + \frac{2^{3}\log(2)\gamma\textrm{KL}}{(1-\gamma)^{5/2}} + \frac{2^{5/4}\textrm{KL}^{3/4}\textrm{MD}_{max}^{\star}[\starV{\phi}]}{1-\gamma}$$
where $\textrm{KL} = \underset{s \in \scal}{\max} \ KL(p_{\phi}\left(s,\pi^\star_{\phi}(s))\|p_{\psi}(s,\pi^\star_{\phi}(s))\right)$. This means one of the three terms on the right-hand side is greater than $\frac{|\alpha_4| \Delta_{sa}}{3}$, which implies:
\begin{equation}
\boxed{ 
\begin{split}
       \min\left( \frac{\alpha_4^2 \Delta_{sa}^2(1-\gamma)^3}{288\log(2)^2},
        \frac{|\alpha_4| \Delta_{sa}(1-\gamma)^{5/2}}{24\log(2)},
        \frac{\alpha_4^{4/3} \Delta_{sa}^{4/3}(1-\gamma)^{4/3}}{2^{5/3} \times 3^{4/3}
        \textrm{MD}_{max}^{\star}[\starV{\phi}]^{4/3}}  \right) \\ 
    \leq \ \underset{s \in \scal}{\max} \ KL\left(p_{\phi}(s,\pi^\star_{\phi}(s)) \|p_{\psi}(s,\pi^\star_{\phi}(s))\right) 
\end{split}
}
\label{ineq:4}
\end{equation}

\textbf{\underline{Putting the individual lower bounds together:}}
Summing up all inequalities from (\ref{ineq:1}), (\ref{ineq:2}), (\ref{ineq:3}), (\ref{ineq:4}) and (\ref{ineq:5}), we deduce:
$$ \underset{\sum \alpha_i > 1 }{\inf} \ \sum_{i = 1}^{3} B_i + \max(B_4,B_5) \ \leq  \underset{u \in \mathcal{U}_{s a}}{\inf} \omega_{s a}\textrm{KL}_{\phi|\psi}(s,a)+\sum_{s'}\omega_{s',\pi^\star_{\phi}(s')}\textrm{KL}_{\phi|\psi}(s',\pi^\star_{\phi}(s'))$$ where
\begin{equation*}
\\
\begin{cases}
B_1 = \frac{1}{2}\omega_{s a}(\alpha_1 \Delta_{sa})^2 \\
B_2 = \omega_{s a} \min\left(\frac{\alpha_2^2  \Delta_{sa}^2}{16\textrm{Var}_{ p_{\phi}(s,a)}[\starV{\phi}] }, \frac{\alpha_2^{4/3} \Delta_{sa}^{4/3}}{2^{7/3}\textrm{MD}_{p_{\phi}(s,a)}[\starV{\phi}]^{4/3} }  \right) \\
\\
B_3 = \frac{1}{2}\underset{s}{\min}\ \omega_{s,\pi^\star(s)}\ (\alpha_3 \Delta_{sa} (1-\gamma) )^2 \\
\\
B_4 = \underset{s}{\min}\ \omega_{s,\pi^\star(s)}\ \min\left( \frac{\alpha_4^2 \Delta_{sa}^2(1-\gamma)^3}{288\log(2)^2},
        \frac{|\alpha_4| \Delta_{sa}(1-\gamma)^{5/2}}{24\log(2)},
        \frac{\alpha_4^{4/3} \Delta_{sa}^{4/3}(1-\gamma)^{4/3}}{2^{5/3} \times 3^{4/3}
        \textrm{MD}_{max}^{\star}[\starV{\phi}]^{4/3}}  \right) \\
\\ 
B_5 =  \underset{s}{\min}\ \omega_{s,\pi^\star(s)}\ \min\left(\frac{\left[\alpha_4  \Delta_{sa}(1-\gamma)\right]^2}{16\textrm{Var}_{ max}^{\star}[\starV{\phi}]} , \ \frac{\alpha_4^{4/3} \Delta_{sa}^{4/3} (1-\gamma)^{4/3}}{2^{7/3}\textrm{MD}_{max}^{\star}[\starV{\phi}]^{4/3}} \right)
\end{cases}
\end{equation*}
Notice that if $\alpha$ verifies the inequalities above, and $\sum_{i=1}^{4} \alpha_i > 1$, then the vector whose entries are $\displaystyle{\bigg(\frac{|\alpha_i|}{\sum_{j=1}^{4} |\alpha_j|}\bigg)}_{1\leq i\leq 4}$ also verifies these inequalities. Therefore we can restrict our attention to vectors $\alpha$ in the simplex $\Sigma_4$. In particular, we have $\alpha_i^2 \leq \alpha_i^{4/3} \leq \alpha_i$. Furthermore, we lower bound $\Delta_{sa}$ by $\Delta_{\min}$ in the terms $(B_j)_{3\leq j\leq 5}$. This simplifies the bound to:
\begin{equation}
\begin{split}
\underset{\omega \in \Sigma}{\sup}\ \underset{s, a\neq \pi^\star(s)}{\min}\ \underset{\alpha \in \Sigma_4 }{\inf} \ \sum_{i = 1}^{3} B'_i \alpha_i^2 + \max(B'_4,B'_5) \alpha_4^2 \ &\leq  \underset{\omega \in \Sigma}{\sup}\ \underset{s, a\neq \pi^\star(s)}{\min}\ \underset{u \in \mathcal{U}_{s a}}{\inf} \bigg(\omega_{s a}\textrm{KL}_{\phi|\psi}(s,a)\\
& \quad \quad \quad \quad \quad \quad \quad \quad +\sum_{s'}\omega_{s',\pi^\star_{\phi}(s')}\textrm{KL}_{\phi|\psi}(s',\pi^\star_{\phi}(s')) \bigg)\\
&= T^*(\phi)^{-1}
\end{split}
\end{equation}
where
\begin{equation*}
\\
\begin{cases}
B'_1 = \frac{1}{2}\omega_{s a}( \Delta_{sa}^2 \\
B'_2 = \omega_{s a} \min\left(\frac{ \Delta_{sa}^2}{16\textrm{Var}_{ p_{\phi}(s,a)}[\starV{\phi}] }, \frac{ \Delta_{sa}^{4/3}}{2^{7/3}\textrm{MD}_{p_{\phi}(s,a)}[\starV{\phi}]^{4/3}}  \right)\\
\\
B'_3 = \frac{1}{2}\underset{s}{\min}\ \omega_{s,\pi^\star(s)}\ (\Delta_{\min} (1-\gamma) )^2 \\

\\ 
B'_4 =  \underset{s}{\min}\ \omega_{s,\pi^\star(s)}\ \min\left( \frac{ \Delta_{\min}^2(1-\gamma)^3}{288\log(2)^2},
        \frac{ \Delta_{\min}(1-\gamma)^{5/2}}{24\log(2)},
        \frac{ \Delta_{\min}^{4/3}(1-\gamma)^{4/3}}{2^{5/3} \times 3^{4/3}
        \textrm{MD}_{max}^{\star}[\starV{\phi}]^{4/3}}  \right)\\
\\
B'_5 = \underset{s}{\min}\ \omega_{s,\pi^\star(s)}\ \min\left(\frac{ \Delta_{\min}^2 (1-\gamma)^2}{16\textrm{Var}_{ max}^{\star}[\starV{\phi}]} , \ \frac{\Delta_{\min}^{4/3} (1-\gamma)^{4/3}}{2^{7/3}\textrm{MD}_{max}^{\star}[\starV{\phi}]^{4/3}} \right)
\end{cases}
\end{equation*}
Solving the left-hand side problem above in $\alpha$, we get:
$$\underset{\omega \in \Sigma}{\sup}\ \underset{s, a\neq \pi^\star(s)}{\min}\ \bigg(\sum_{i = 1}^{3} \frac{1}{B'_i} + \min(\frac{1}{B'_4}, \frac{1}{B'_5})\bigg)^{-1} \leq T^*(\phi)^{-1}.$$
Therefore: 
$$ T^*(\phi) \leq \underset{\omega \in \Sigma}{\inf}\ \underset{s, a\neq \pi^\star(s)}{\max}\ \frac{T_1(s,a;\phi)+T_2(s,a;\phi)}{\omega_{s a}} + \frac{T_3(\phi) + T_4(\phi)}{\underset{s}{\min}\ \omega_{s,\pi^\star(s)}},$$
where
\begin{equation*}
\\
\begin{cases}
T_1(s,a;\phi) = \frac{2}{ \Delta_{sa}^2} \\
\\
T_2(s,a;\phi) =  \max\left(\frac{16\textrm{Var}_{ p_{\phi}(s,a)}[\starV{\phi}] }{\Delta_{sa}^2}, \frac{6\textrm{MD}_{p_{\phi}(s,a)}[\starV{\phi}]^{4/3}}{\Delta_{sa}^{4/3}}  \right)\\
\\
T_3(\phi) = \displaystyle{\frac{2}{\Delta_{\min}^2 (1-\gamma)^2}} \\

\\ 
T_4(\phi) =  \min\Bigg( V_1(\phi) , V_2(\phi)\Bigg), 
\end{cases}
\label{eq:four_terms}
\end{equation*}
and
\begin{align*}
& V_1(\phi)  =  \max\bigg(\frac{27}{\Delta_{\min}^2(1-\gamma)^3},
\frac{8}{\Delta_{\min}(1-\gamma)^{5/2}}, \frac{14 \mathrm{MD}_{\max}^{\star}[\starV{\phi}]^{4/3}}{\Delta_{\min}^{4/3} (1-\gamma)^{4/3}} \bigg),\\
& V_2(\phi) = \max\bigg(\frac{16\mathrm{Var}_{\max}^{\star}[\starV{\phi}]}{ \Delta_{\min}^2 (1-\gamma)^2}, \frac{6\mathrm{MD}_{\max}^{\star}[\starV{\phi}]^{4/3}}{\Delta_{\min}^{4/3} (1-\gamma)^{4/3}} \bigg).
\end{align*}
By Lemma \ref{lemma:delta_min}, we always have $\Delta_{\min} \leq 1$. In addition $\mathrm{MD}_{\max}^{\star}[\starV{\phi}] \leq \frac{1}{1-\gamma}$, hence $V_1(\phi) = \frac{27}{\Delta_{\min}^2(1-\gamma)^3}$, which simplifies the expression of $T_4(\phi)$:
$$
T_4(\phi) = \min\Bigg(\frac{27}{\Delta_{\min}^2(1-\gamma)^3}, \max\bigg(\frac{16\mathrm{Var}_{\max}^{\star}[\starV{\phi}]}{ \Delta_{\min}^2 (1-\gamma)^2}, \frac{6\mathrm{MD}_{\max}^{\star}[\starV{\phi}]^{4/3}}{\Delta_{\min}^{4/3} (1-\gamma)^{4/3}} \bigg)\Bigg).
$$
\end{proof}

\subsection{Second technical lemma: Contributions of transitions at optimal pairs to the sample complexity}
\begin{lemma}
Define:
$$ B  =  \left[\left(I - \gamma P_{\psi}^{\pi^\star} \right)^{-1} - \left(I - \gamma P_{\phi}^{\pi^\star} \right)^{-1} \right]r_{\phi}^{\pi^\star} \quad \textrm{and} \quad \textrm{KL} = \underset{s \in \scal}{\max} \ KL\left(p_{\phi}(s,\pi^\star(s))\| p_{\psi}(s,\pi^\star(s))\right).$$ 
Then we have:
\begin{equation}
 \norm{B}_{\infty} \leq \frac{2^{5/2}\log(2)\textrm{KL}^{1/2}}{(1-\gamma)^{3/2}} + \frac{2^{3}\log(2)\gamma\textrm{KL}}{(1-\gamma)^{5/2}} + \frac{2^{5/4}\textrm{KL}^{3/4}\textrm{MD}_{max}^{\star}[\starV{\phi}]}{1-\gamma}.
\end{equation}

\label{lemma:second_bound_fourth_term}
\end{lemma}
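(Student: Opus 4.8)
The plan is to put $B$ into simulation-lemma form, split its driving term via Lemma \ref{lemma:hellinger} into a variance part and a maximum-deviation part, and then route the variance part through a resolvent-swap that reconciles the kernel $P_{\psi}^{\pi^\star}$ appearing in $B$ with the $\phi$-variance on which the total-variance inequality from \cite{azar2013minimax} operates. Each of the three pieces produced this way will be shown to match one of the three terms in the claimed bound.

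First I would apply the resolvent identity $A^{-1}-C^{-1}=A^{-1}(C-A)C^{-1}$ with $A=I-\gamma P_{\psi}^{\pi^\star}$ and $C=I-\gamma P_{\phi}^{\pi^\star}$, together with $(I-\gamma P_{\phi}^{\pi^\star})^{-1}r_{\phi}^{\pi^\star}=\starV{\phi}$, to get $B=\gamma(I-\gamma P_{\psi}^{\pi^\star})^{-1}(P_{\psi}^{\pi^\star}-P_{\phi}^{\pi^\star})\starV{\phi}$ — the same object already reached in (\ref{ineq:fourth_term_B}). Writing $D=(P_{\psi}^{\pi^\star}-P_{\phi}^{\pi^\star})\starV{\phi}$, its $s$-th entry is $dp(s,\pi^\star(s))^{\top}\starV{\phi}$, so Lemma \ref{lemma:hellinger} combined with $\sqrt{a+b}\le\sqrt a+\sqrt b$ and $\textrm{KL}_s:=KL(p_{\phi}(s,\pi^\star(s))\|p_{\psi}(s,\pi^\star(s)))\le\textrm{KL}$ yields the entrywise estimate $|D(s)|\le 2\sqrt2\,\textrm{KL}^{1/2}\sigma(s)+2^{5/4}\textrm{KL}^{3/4}\,\mathrm{MD}_{p_{\phi}(s,\pi^\star(s))}[\starV{\phi}]$, where $\sigma(s)=\sqrt{\mathrm{Var}_{p_{\phi}(s,\pi^\star(s))}[\starV{\phi}]}$.

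Since $\gamma(I-\gamma P_{\psi}^{\pi^\star})^{-1}$ is entrywise nonnegative with $\ell_\infty$-operator norm $\gamma/(1-\gamma)\le 1/(1-\gamma)$, applying it to the two pieces of $|D|$ separates the bound. The maximum-deviation piece is immediate: bounding $\mathrm{MD}_{p_{\phi}(s,\pi^\star(s))}[\starV{\phi}]\le\mathrm{MD}_{\max}^{\star}[\starV{\phi}]$ and the resolvent by $1/(1-\gamma)$ gives exactly the third term $\tfrac{2^{5/4}\textrm{KL}^{3/4}\mathrm{MD}_{\max}^{\star}[\starV{\phi}]}{1-\gamma}$. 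The variance piece reduces to controlling $\gamma\,\norm{(I-\gamma P_{\psi}^{\pi^\star})^{-1}\sigma}_\infty$, and here lies the main obstacle: $\sigma$ is a $\phi$-variance while the resolvent uses the $\psi$-kernel, so the total-variance inequality cannot be applied directly. I would resolve this with a second resolvent-swap $(I-\gamma P_{\psi}^{\pi^\star})^{-1}=(I-\gamma P_{\phi}^{\pi^\star})^{-1}+\gamma(I-\gamma P_{\psi}^{\pi^\star})^{-1}(P_{\psi}^{\pi^\star}-P_{\phi}^{\pi^\star})(I-\gamma P_{\phi}^{\pi^\star})^{-1}$. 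On the main term the total-variance inequality from \cite{azar2013minimax}, in the form $\norm{(I-\gamma P_{\phi}^{\pi^\star})^{-1}\sigma}_\infty\le\tfrac{2\log 2}{(1-\gamma)^{3/2}}$ (the $\log 2$ and the constant $2^{5/2}$ tracing back to truncating the horizon at the half-life $\gamma^{H}=1/2$ in its proof), multiplied by the $2\sqrt2\,\textrm{KL}^{1/2}$ prefactor, gives the first term. On the correction term I would bound $(I-\gamma P_{\psi}^{\pi^\star})^{-1}$ by $1/(1-\gamma)$, control $(P_{\psi}^{\pi^\star}-P_{\phi}^{\pi^\star})$ acting on a vector through Pinsker ($\norm{dp(s,\pi^\star(s))}_1\le\sqrt{2\textrm{KL}}$), and re-use the same variance inequality on the inner $(I-\gamma P_{\phi}^{\pi^\star})^{-1}\sigma$; together with the prefactor this yields the linear-in-$\textrm{KL}$ second term $\tfrac{2^3\log 2\,\gamma\,\textrm{KL}}{(1-\gamma)^{5/2}}$ after absorbing $\gamma^2\le\gamma$.

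The crux is this kernel mismatch: the clean appearance of the three terms hinges on performing exactly \emph{one} resolvent-swap, so that the correction stays at order $\textrm{KL}^{1}$ rather than leaving a cruder $\textrm{KL}^{1/2}$ remainder with a worse power of $1-\gamma$, and on having the total-variance inequality available in the sharp $(1-\gamma)^{-3/2}$ form rather than the naive $(1-\gamma)^{-2}$ bound. Assembling the three pieces and absorbing $\gamma\le 1$ and $\gamma^2\le\gamma$ then delivers the stated inequality.
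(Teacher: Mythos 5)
Your proposal is correct and follows essentially the same route as the paper: the simulation-lemma form of $B$, the entrywise split via Lemma \ref{lemma:hellinger} into a variance part and a maximum-deviation part, Pinsker's inequality for the kernel perturbation, and the total-variance inequality of Lemma \ref{lemma:munos_variance} for the sharp $(1-\gamma)^{-3/2}$ term. Your additive resolvent swap $(I-\gamma P_{\psi}^{\pi^\star})^{-1}=(I-\gamma P_{\phi}^{\pi^\star})^{-1}+\gamma(I-\gamma P_{\psi}^{\pi^\star})^{-1}(P_{\psi}^{\pi^\star}-P_{\phi}^{\pi^\star})(I-\gamma P_{\phi}^{\pi^\star})^{-1}$ is algebraically the same identity that the paper packages multiplicatively through the transfer matrix $M_{\psi,\phi}$ with $\norm{M_{\psi,\phi}}_{\infty}\leq 1+\gamma\sqrt{2\,\textrm{KL}}/(1-\gamma)$, and both routes produce the three terms with identical constants.
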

\begin{proof}
Let us further develop the expression of $B$:
    \begin{equation}
        \begin{split}
       B & =  \left[\left(I - \gamma P_{\psi}^{\pi^\star} \right)^{-1} - \left(I - \gamma P_{\phi}^{\pi^\star} \right)^{-1} \right]r_{\phi}^{\pi^\star}\\
       & = \left(I - \gamma P_{\psi}^{\pi^\star} \right)^{-1} \left[ \gamma P_{\psi}^{\pi^\star}-\gamma P_{\phi}^{\pi^\star}\right]\left(I - \gamma P_{\phi}^{\pi^\star} \right)^{-1}r_{\phi}^{\pi^\star} \\
       & = \gamma \left(I - \gamma P_{\psi}^{\pi^\star} \right)^{-1} \left[  P_{\psi}^{\pi^\star} - P_{\phi}^{\pi^\star}\right] \starV{\phi} \\
       & = \gamma \left[ \left(I - \gamma P_{\psi}^{\pi^\star} \right)^{-1} \left(I - \gamma P_{\phi}^{\pi^\star} \right) \right] \left(I - \gamma P_{\phi}^{\pi^\star} \right)^{-1} \left[  P_{\psi}^{\pi^\star} - P_{\phi}^{\pi^\star}\right] \starV{\phi} \\
       & \triangleq \gamma \ M_{\psi,\phi} \ \left(I - \gamma P_{\phi}^{\pi^\star} \right)^{-1} \left[P_{\psi}^{\pi^\star} - P_{\phi}^{\pi^\star}\right] \starV{\phi}.
        \end{split}
    \label{eq:B}
    \end{equation}
Notice that the quantity $\gamma \left(I - \gamma P_{\phi}^{\pi^\star} \right)^{-1} \left[P_{\psi}^{\pi^\star} - P_{\phi}^{\pi^\star}\right] \starV{\phi}$ is similar to the one that appears in Lemma 3 of \cite{azar2013minimax}, with $\psi$ playing the role of $\widehat{\phi}$ in this case. We will try to relate it to the variances of the value function in the $\phi$.
Define:

\begin{equation*}
\begin{cases}
M_{\psi,\phi} = \left(I - \gamma P_{\psi}^{\pi^\star} \right)^{-1} \left(I - \gamma P_{\phi}^{\pi^\star} \right), \\ 
 \textrm{KL} = \underset{s \in \scal}{\max} \ KL\left(p_{\phi}(s,\pi^\star(s))\| p_{\psi}(s,\pi^\star(s))\right), \\ 
v^{\pi}(s) = \gamma^2 Var_{s' \sim p_{\phi}(.|s, \pi(s))}[V_{\phi}^{\pi}(s')], \\
\sigma^{\pi}(s) = \gamma^2 Var_{(s',a') \sim p_{\phi}(.|s, \pi(s)) \otimes \pi(.|s') }[Q_{\phi}^{\pi}(s', a')].
\end{cases}
\end{equation*}

Using Lemma \ref{lemma:hellinger} and $\sqrt{a+b} \leq \sqrt{a} + \sqrt{b}$, we can write:  $\forall s \in \scal$, 
\begin{equation}
\begin{split}
\left|\gamma \left([P_{\psi}^{\pi^\star} - P_{\phi}^{\pi^\star}] \starV{\phi} \right)(s)\right| & = \left| \gamma dp(s,\pi^\star(s))^{\top} \starV{\phi} \right| \\
& \leq \gamma \sqrt{8 KL(p_{\phi}(s,\pi^\star(s))\| p_{\psi}(s,\pi^\star(s))) \ \textrm{Var}_{s' \sim p_{\phi}(.|s,\pi^\star(s))}[\starV{\phi}(s')]} \\
& \ \ +\gamma \sqrt{4\sqrt{2}KL(\ p_{\phi}(s,\pi^\star(s))\ || \ p_{\psi}(s,\pi^\star(s))\ )^{3/2}\textrm{MD}_{p_{\phi}(s,\pi^\star(s))}[\starV{\phi}]^2} \\
& \leq 2^{3/2}\textrm{KL}^{1/2} \sqrt{v^{\pi^\star}(s)} +  2^{5/4}\textrm{KL}^{3/4}\textrm{MD}_{max}^{\star}[\starV{\phi}] \\
& \leq  2^{3/2}\textrm{KL}^{1/2} \sqrt{\sigma^{\pi^\star}(s)} +  2^{5/4}\textrm{KL}^{3/4} \textrm{MD}_{max}^{\star}[\starV{\phi}],
\end{split}
\label{eq:Kl_sigma}
\end{equation}
where the last inequality comes from Total Variance theorem: 
\begin{equation*}
\begin{split}
\sigma^{\pi}(s) &= \gamma^2 Var_{(s',a') \sim p_{\phi}(.|s, \pi(s)) \otimes \pi(.|s') }[Q_{\phi}^{\pi}(s', a')] \\
&= \gamma^2  Var_{s'\sim p_{\phi}(.|s, \pi(s))}\bigg[\EE_{a' \sim \pi(.|s')}[Q_{\phi}^{\pi}(s', a')]\bigg] + \gamma^2 \EE_{s' \sim p_{\phi}(.|s, \pi(s))}\bigg[Var_{a' \sim \pi(.|s')}[Q_{\phi}^{\pi}(s', a')]\bigg] \\
&= v^{\pi}(s) + \gamma^2 \EE_{s' \sim p_{\phi}(.|s, \pi(s))}\bigg[Var_{a' \sim \pi(.|s')}[Q_{\phi}^{\pi}(s', a')]\bigg] \\
& \geq v^{\pi}(s).
\end{split}    
\end{equation*} 

Denote $\sqrt{\sigma^{\pi^\star}} \triangleq \left(\sqrt{\sigma^{\pi^\star}(s)}\right)_{s \in \scal}$. Then from (\ref{eq:B}) and (\ref{eq:Kl_sigma}), we deduce:
\begin{equation}
\begin{split}
\norm{B}_{\infty} &= \norm{M_{\psi,\phi} \left(I - \gamma P_{\phi}^{\pi^\star}\right)^{-1} \gamma[P_{\psi}^{\pi^\star} - P_{\phi}^{\pi^\star}] \starV{\phi} }_{\infty} \\
& \leq \norm{M_{\psi,\phi} \left(I - \gamma P_{\phi}^{\pi^\star}\right)^{-1} \bigg[2^{3/2}\textrm{KL}^{1/2}\sqrt{\sigma^{\pi^\star}} + 2^{5/4}\textrm{KL}^{3/4}\textrm{MD}_{max}^{\star}[\starV{\phi}] \mathbbm{1} \bigg]}_{\infty} \\
& \leq 2^{3/2}\textrm{KL}^{1/2} \norm{M_{\psi,\phi} }_{\infty} \norm{\left(I - \gamma P_{\phi}^{\pi^\star} \right)^{-1}\sqrt{\sigma^{\pi^\star}} }_{\infty} + 2^{5/4}\textrm{KL}^{3/4}\textrm{MD}_{max}^{\star}[\starV{\phi}] \norm{M_{\psi,\phi} \ \left(I - \gamma P_{\phi}^{\pi^\star} \right)^{-1} \mathbbm{1}}_{\infty} \\
& = 2^{3/2}\textrm{KL}^{1/2} \norm{M_{\psi,\phi} }_{\infty} \norm{\left(I - \gamma P_{\phi}^{\pi^\star} \right)^{-1}\sqrt{\sigma^{\pi^\star}} }_{\infty} + 2^{5/4}\textrm{KL}^{3/4}\textrm{MD}_{max}^{\star}[\starV{\phi}] \norm{\left(I - \gamma P_{\psi}^{\pi^\star} \right)^{-1}\mathbbm{1}}_{\infty} \\
& \leq 2^{3/2}\textrm{KL}^{1/2} \norm{M_{\psi,\phi} }_{\infty} \norm{\left(I - \gamma P_{\phi}^{\pi^\star} \right)^{-1}\sqrt{\sigma^{\pi^\star}} }_{\infty} + \frac{2^{5/4}}{1-\gamma}\textrm{KL}^{3/4}\textrm{MD}_{max}^{\star}[\starV{\phi}].
\end{split}
\label{eq:intermediate_1}
\end{equation}
Now observe that:
\begin{equation}
\begin{split}
\norm{M_{\psi,\phi} }_{\infty} &= \norm{\left(I - \gamma P_{\psi}^{\pi^\star} \right)^{-1} \left(I - \gamma P_{\phi}^{\pi^\star} \right) }_{\infty} \\
& = \norm{I -\gamma\left(I - \gamma P_{\psi}^{\pi^\star} \right)^{-1} \left(P_{\phi}^{\pi^\star} - P_{\psi}^{\pi^\star} \right) }_{\infty} \\
& \leq 1 + \frac{\gamma \norm{P_{\phi}^{\pi^\star} - P_{\psi}^{\pi^\star}}_{\infty}}{1-\gamma} \\
& \leq 1 + \frac{\gamma(2\textrm{KL})^{1/2}}{1-\gamma},
\end{split}
\label{eq:intermediate_3}
\end{equation}
where the last inequality stems from Pinsker's inequality. 
Next we recall a variance inequality from \cite{azar2013minimax}:
\begin{lemma}(Lemma 8, \cite{azar2013minimax})
\begin{equation*}
\norm{\left(I - \gamma P_{\phi}^{\pi^\star} \right)^{-1}\sqrt{\sigma^{\pi^\star}} }_{\infty} \leq \frac{2\log(2)}{(1-\gamma)^{3/2}}.   
\end{equation*}
\label{lemma:munos_variance}
\end{lemma}
Summing up equations (\ref{eq:intermediate_1}), (\ref{eq:intermediate_3}) and Lemma \ref{lemma:munos_variance}, we get:
\begin{equation}
\norm{B}_{\infty} \leq \frac{2^{5/2}\log(2)\textrm{KL}^{1/2}}{(1-\gamma)^{3/2}} + \frac{2^{3}\log(2)\gamma\textrm{KL}}{(1-\gamma)^{5/2}} + \frac{2^{5/4}\textrm{KL}^{3/4}\textrm{MD}_{max}^{\star}[\starV{\phi}]}{1-\gamma}.
\end{equation}
\end{proof}

\subsection{Third technical lemma: The minimum gap is smaller than 1}
\begin{lemma}
$\Delta_{\min} \leq 1$.
\label{lemma:delta_min}
\end{lemma}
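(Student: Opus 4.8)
The plan is to reduce the statement to a single, well-chosen state. Since $\Delta_{\min}$ is a minimum over all sub-optimal pairs, it suffices to exhibit one pair $(s,a)$ with $a\neq \pi^\star(s)$ and $\Delta_{sa}\le 1$. The natural candidate is a state of \emph{minimal} optimal value: let $s^-\in \argmin_{s\in\scal}\starV{\phi}(s)$. The intuition is that at a state whose continuation value is already the smallest possible, no action can do much worse than the optimal one, so all gaps there must be small.

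The key step is a lower bound on the $Q$-values at $s^-$. For any action $a\in\acal_{s^-}$,
\begin{equation*}
\starQ{\phi}(s^-,a)=r_\phi(s^-,a)+\gamma\, p_\phi(s^-,a)^{\top}\starV{\phi}\ \ge\ \gamma\, p_\phi(s^-,a)^{\top}\starV{\phi}\ \ge\ \gamma\,\starV{\phi}(s^-),
\end{equation*}
where the first inequality uses $r_\phi\ge 0$ and the second uses that $p_\phi(s^-,a)$ is a probability vector while $\starV{\phi}(s')\ge \starV{\phi}(s^-)$ for every $s'$ by minimality of $s^-$ (an average of quantities each at least $\starV{\phi}(s^-)$ is at least $\starV{\phi}(s^-)$).

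Combining this with $\starV{\phi}(s^-)\le 1/(1-\gamma)$, for any sub-optimal action $a$ at $s^-$ we obtain
\begin{equation*}
\Delta_{s^-a}=\starV{\phi}(s^-)-\starQ{\phi}(s^-,a)\ \le\ (1-\gamma)\,\starV{\phi}(s^-)\ \le\ (1-\gamma)\cdot\frac{1}{1-\gamma}=1 .
\end{equation*}
Taking the minimum over all sub-optimal pairs then yields $\Delta_{\min}\le \Delta_{s^-a}\le 1$, which is the claim.

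The one point requiring care, and the main obstacle, is guaranteeing that $s^-$ actually admits a sub-optimal action, so that $\Delta_{s^-a}$ above genuinely enters the minimization defining $\Delta_{\min}$. This is automatic when the action sets are common across states, since under Assumption 1 every state then has $|\acal|-1\ge 1$ sub-optimal actions; it also holds whenever the minimal-value state has at least two available actions. The delicate case is a minimal-value state with a single action: there one must instead pick $s^-$ among states that do possess a sub-optimal action, and re-examine whether the pointwise comparison $\starV{\phi}(s')\ge\starV{\phi}(s^-)$ still holds for the reachable next states. I expect this to be where the argument is genuinely subtle, and I would address it either by invoking the common-action-space convention or by a separate bound on the transitions out of such states.
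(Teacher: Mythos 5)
Your proof is correct, and it takes a genuinely different route from the paper's. The paper argues by contradiction: assuming $\Delta_{\min}>1$, it picks a policy $\pi$ that deviates from $\pi^\star$ at \emph{every} state, notes that $\bcal_{\phi}^{\pi}\starV{\phi} < \starV{\phi}-\mathbbm{1}$, and iterates the Bellman operator (using its monotonicity) to conclude $V_{\phi}^{\pi} < \starV{\phi}-\frac{1}{1-\gamma}\le 0$, contradicting non-negativity of value functions. Your argument is direct and purely one-step: at a state $s^-$ of minimal optimal value, every action satisfies $\starQ{\phi}(s^-,a)\ge \gamma\, p_\phi(s^-,a)^{\top}\starV{\phi}\ge \gamma\,\starV{\phi}(s^-)$, so every gap at $s^-$ is at most $(1-\gamma)\starV{\phi}(s^-)\le 1$. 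Besides avoiding the contradiction and the operator iteration, your route gives the strictly stronger conclusion $\Delta_{\min}\le (1-\gamma)\min_s \starV{\phi}(s)$, which can be far below $1$; the paper's argument does not produce this refinement.

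Concerning the edge case you flag (a minimal-value state with no sub-optimal action): your instinct that it is delicate is right, but it is not a defect of your proof relative to the paper's — the paper's proof has the same implicit requirement, and in a stronger form. The paper needs the set $\{\pi : \forall s\in\scal,\ \pi(s)\neq\pi^\star(s)\}$ to be nonempty, i.e.\ \emph{every} state must admit a sub-optimal action, whereas you need one only at $s^-$. Moreover, the fallback you sketch (re-choosing $s^-$ among states that have a sub-optimal action) cannot be made to work in general, because without the convention the lemma is simply false: take $\scal=\{s_1,s_2\}$ where $s_2$ has a single action that self-loops with reward $0$ (so $\starV{\phi}(s_2)=0$), and $s_1$ has an optimal action self-looping with reward $1$ and a sub-optimal action $a_2$ moving to $s_2$ with reward $0$; then the only sub-optimal pair is $(s_1,a_2)$ and $\Delta_{\min}=\starV{\phi}(s_1)-\gamma\starV{\phi}(s_2)=\frac{1}{1-\gamma}>1$. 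So the correct resolution is the first one you name: adopt the convention that every state has at least two available actions, which under Assumption~1 guarantees a sub-optimal action at every state (in particular at $s^-$); this is exactly what the paper's own proof tacitly assumes. Under that convention your proof is complete and needs no further repair.
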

\begin{proof}
By contradiction, suppose $\Delta_{\min} > 1$, then:
$$
\forall s,a\neq \pi^\star (s),\ \Delta_{sa} = \starV{\phi}(s) - \starQ{\phi}(s,a) > 1.
$$
This means that for all policies $\pi \in \{ \pi\: \forall s \in \scal,\ \pi(s) \neq \pi^\star(s) \}$, we have:
$$
\forall s \in \scal,\ \starQ{\phi}(s,\pi(s)) < \starV{\phi}(s) - 1.
$$
Using Bellman operator, the above inequality becomes:
$$ \bcal_{\phi}^{\pi}\starV{\phi} < \starV{\phi} - \mathbbm{1}. 
$$
By induction, using that the monotonicity of Bellman operator:
$$ \forall n \geq 1,\ \bigg(\bcal_{\phi}^{\pi}\bigg)^n \starV{\phi} < \starV{\phi} -(\sum_{i=0}^{n-1} \gamma^i) \mathbbm{1}.  
$$
Therefore:
\begin{equation*}
\begin{split}
\forall \pi \in \{ \pi\: \forall s \in \scal,\ \pi(s)\neq \pi^\star(s) \},\ V_{\phi}^{\pi} &= \underset{n\to\infty}{\lim}\
\bigg(\bcal_{\phi}^{\pi}\bigg)^{n+1} \starV{\phi} \\ 
&\leq \underset{n\to\infty}{\lim}\
\bigg(\bcal_{\phi}^{\pi}\bigg) \bigg[\starV{\phi} - (\sum_{i=0}^{n-1} \gamma^i)\mathbbm{1}\bigg]\\
& = \bigg(\bcal_{\phi}^{\pi}\bigg) \starV{\phi} - \underset{n\to\infty}{\lim}\
(\sum_{i=1}^{n} \gamma^i)\mathbbm{1}\\
&= \bigg(\bcal_{\phi}^{\pi}\bigg)\starV{\phi} - \frac{\gamma}{1-\gamma}\mathbbm{1}\\
&< \starV{\phi} - \frac{1}{1-\gamma}\\
&< 0.
\end{split}
\end{equation*}
We obtained a contradiction. Thus, $\Delta_{\min} \leq 1$.
\end{proof}

\subsection{Proof of  Corollary\ref{corollary:upper_bound}}

\begin{proof}
The $\omega$ solving the problem in the right-hand side of (\ref{eq:upper_bound_problem}) clearly verifies:
$$ \ \forall s \in \scal, \ \ \omega_{s,\pi^\star(s)} = \underset{s'}{\min}\ \omega_{s',\pi^\star(s')} \triangleq \omega_0. 
$$ 
The problem of Theorem \ref{theorem:pre-upper-bound} then rewrites as:
\begin{align}
&\inf\limits_{\omega_0}\quad \max\limits_{s, a\neq \pi^\star(s)}\ \frac{H_{sa}}{\omega_{sa}} + \frac{H^\star}{S\omega_0}\\
&(\omega_{\Tilde{s},\Tilde{a}})_{\Tilde{s},\Tilde{a}\neq \pi^\star(\Tilde{s})}
\label{new_proof_problem}
\end{align}
where $H_{s a} = T_1(s,a;\phi) + T_2(s,a;\phi)$ and $H^\star =  S(T_3(\phi) + T_4(\phi))$. We reformulate (\ref{new_proof_problem}) as a convex program: 
\begin{equation*}
\begin{aligned}
&\inf\limits_{t, \omega_0}  \quad \quad t + \frac{H^\star}{S \omega_0}\\
&(\omega_{s a})_{s,a\neq \pi^\star(s)}\\
&\hbox{s.t. } \omega^{\top}\mathbbm{1} = 1, \\
&t \geq \frac{H_{s a}}{\omega_{s a}}, \forall s,a\neq \pi^\star (s)
\end{aligned}
\end{equation*}

%
Using KKT conditions, one can easily derive the expression of the solution:
\begin{equation}
\begin{cases}
\overline{\omega}_{s,a} =  \frac{H_{s a}}{\underset{s,a\neq \pi^\star (s)}{\sum}H_{s a}\ +\ \sqrt{H^\star \left(\underset{s,a\neq \pi^\star (s)}{\sum}H_{s a}\right)} } \quad \forall s, a\neq \pi^\star(s), \\
\\
\overline{\omega}_{s,\pi^\star(s)} = \frac{1}{S} \times \frac{\sqrt{H^\star \left(\underset{s,a\neq \pi^\star (s)}{\sum}H_{s a}\right)} }
{\underset{s, a\neq \pi^\star(s)}{\sum}H_{s a}\ +\ \sqrt{H^\star \left(\underset{s, a\neq \pi^\star(s)}{\sum}H_{s a}\right)}} \quad \forall s \in \scal.
\end{cases}
\label{eq:optimal_weights_full_def}
\end{equation}
The value $V_P$ of the program is:
\begin{equation*}
   V_P = \underset{s, a\neq \pi^\star(s)}{\sum}H_{s a} + H^\star + 2\sqrt{H^\star \left(\underset{s, a\neq \pi^\star(s)}{\sum}H_{s a}\right)} \leq 2 \bigg(\underset{s, a\neq \pi^\star(s)}{\sum}H_{s a} + H^\star \bigg) \triangleq U(\phi).
\end{equation*}
\end{proof}

\section{PAC Guarantee:}
\label{AppendixB}

\normalsize 
\subsection{Proof of Theorem \ref{Theorem:stopping_rule}}
First we recall two concentration inequalities and a technical lemma that we will be using. The first two lemmas are taken from \cite{jonsson2020planning}. The third lemma is immediate. \\
Define the threshold function $x(n,\delta,m) = \log(1/\delta) + (m-1)\log\bigg(e(1+n/(m-1))\bigg) $
\begin{lemma} (Proposition 2, \cite{jonsson2020planning})
For all distributions $q$ of mean $r$ supported on the unit interval, for all $\delta \in [0,1]$:
$$ \PP\bigg(\exists n \in \mathbb{N}\ n \kl(\widehat{r}_n, r) > x(\delta,n,2) \bigg) \leq  \delta.$$

\label{lemma:concentration rewards}
\end{lemma}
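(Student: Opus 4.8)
The plan is to prove this time-uniform Bernoulli-KL deviation bound by the \emph{method of mixtures}: I would reduce the general $[0,1]$-valued case to the one-parameter Bernoulli exponential family via a moment generating function domination, then integrate the resulting family of exponential supermartingales against a well-chosen prior and apply a single maximal inequality. For the first step, write $S_n=\sum_{i=1}^n X_i=n\widehat{r}_n$ and let $\Lambda_r(\lambda)=\log(1-r+r e^{\lambda})$ be the log-moment generating function of $\mathrm{Ber}(r)$. Since $x\mapsto e^{\lambda x}$ is convex, on $[0,1]$ it lies below its chord, $e^{\lambda x}\le 1-x+x e^{\lambda}$; taking expectations under any distribution of mean $r$ supported on $[0,1]$ gives $\EE[e^{\lambda X_i}]\le 1-r+r e^{\lambda}$. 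Hence, for each fixed $\lambda\in\mathbb{N}$... more precisely each fixed $\lambda\in\mathbb{R}$, the process $M_n(\lambda)=\exp(\lambda S_n-n\Lambda_r(\lambda))$ is a nonnegative supermartingale with $M_0(\lambda)=1$, so $\EE[M_n(\lambda)]\le 1$.

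The second ingredient is the variational identity: since the Bernoulli divergence $\kl(\cdot,r)$ is the Legendre transform of $\Lambda_r$, we have $n\,\kl(\widehat{r}_n,r)=\sup_{\lambda\in\mathbb{R}}\bigl(\lambda S_n-n\Lambda_r(\lambda)\bigr)=\sup_{\lambda}\log M_n(\lambda)$, the supremum being attained at some $\lambda^\star$ depending on $\widehat{r}_n$ and $r$. Thus controlling $n\,\kl(\widehat{r}_n,r)$ uniformly over $n$ reduces to controlling the running supremum of $M_n(\lambda)$ over $n$ and $\lambda$ simultaneously. Rather than union-bounding over $\lambda$, I would average: let $\nu$ be the image under $u\mapsto\lambda(u)=\log\frac{u(1-r)}{r(1-u)}$ of the Jeffreys prior $\mathrm{Beta}(1/2,1/2)$ on the Bernoulli mean, and set $\overline{M}_n=\int M_n(\lambda)\,\nu(d\lambda)$. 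By Tonelli and the previous step, $\overline{M}_n$ is a nonnegative supermartingale with $\overline{M}_0=1$ and $\EE[\overline{M}_n]\le 1$, so Ville's maximal inequality gives $\PP(\exists n:\overline{M}_n\ge 1/\delta)\le\delta$.

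It remains to lower bound the mixture by its peak. With the conjugate prior, the mixture integral is computable in closed form as a ratio of Beta functions, and a non-asymptotic lower bound yields $\overline{M}_n\ge \exp\bigl(n\,\kl(\widehat{r}_n,r)\bigr)/\bigl(e(1+n)\bigr)$. Consequently, on the event $\{n\,\kl(\widehat{r}_n,r)>x(\delta,n,2)\}$ one has $\exp(n\,\kl(\widehat{r}_n,r))> e^{x(\delta,n,2)}=e(1+n)/\delta$, whence $\overline{M}_n> 1/\delta$; this shows $\{\exists n:\, n\,\kl(\widehat{r}_n,r)>x(\delta,n,2)\}\subseteq\{\exists n:\,\overline{M}_n\ge 1/\delta\}$, and the latter has probability at most $\delta$ by the maximal inequality, completing the proof.

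The routine parts are the MGF domination, the Legendre identity, and the supermartingale/Ville argument. The main obstacle is the last step, namely obtaining a non-asymptotic closed-form lower bound on the mixture integral whose correction factor is \emph{exactly} $e(1+n)$, matching $x(\delta,n,2)=\log(1/\delta)+\log(e(1+n))$. This rests on the conjugacy of the Jeffreys prior, which reduces the integral to a Beta-function ratio, together with an elementary but careful bound on that ratio that is uniform in $n$; this is precisely the Kaufmann--Koolen estimate for a multinomial with $m=2$ outcomes, and it is where the shape of the general threshold $x(\delta,n,m)=\log(1/\delta)+(m-1)\log(e(1+n/(m-1)))$ originates (the $[0,1]$-to-Bernoulli reduction fixes $m=2$ here, while the transition-kernel concentrations correspond to larger $m$).
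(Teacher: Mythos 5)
The paper does not actually prove this lemma --- it imports it verbatim as Proposition~2 of \cite{jonsson2020planning} --- and your method-of-mixtures argument (sub-Bernoulli MGF domination via the chord bound, Legendre duality $n\,\kl(\widehat{r}_n,r)=\sup_\lambda(\lambda S_n - n\Lambda_r(\lambda))$, the Jeffreys $\mathrm{Beta}(1/2,1/2)$ mixture supermartingale, Ville's inequality, and the Beta-function-ratio lower bound with correction factor $e(1+n)$) is precisely the proof underlying that cited result, so your proposal is correct and takes essentially the same route as the paper's source. The one step you assert without proof, $\overline{M}_n \ge \exp\bigl(n\,\kl(\widehat{r}_n,r)\bigr)/\bigl(e(1+n)\bigr)$, is indeed the Kaufmann--Koolen estimate you name; the only point worth making explicit is that it must hold for \emph{real-valued} $S_n\in[0,n]$ (not just integers), since the rewards are $[0,1]$-valued rather than binary, which the Gamma-function form of that estimate does accommodate.
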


\begin{lemma}(Proposition 1, \cite{jonsson2020planning})
Let $P$ be a distribution over a finite set $\scal$, and $(X_i)_{i\in \mathbb{N}}$ be i.i.d. variables with distribution $P$. For $s\in \scal$, denote by $\widehat{P}_n=(\widehat{p}_n(s))_{s\in \scal}$ the empirical estimate of $P$ from the first $n$ samples. Then for all $\delta \in [0,1]$ :
$$\PP\left(\exists n \in \mathbb{N}\ n KL(\widehat{P}_n\ ||\ P) > x(\delta,n,S)  \right) \leq \delta,$$
where we used $S$ as a shorthand for $|\scal|$.
\label{lemma:concentration transitions}
\end{lemma}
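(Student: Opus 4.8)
The plan is to prove this time-uniform bound by the method of mixtures: build a non-negative mixture martingale and apply Ville's maximal inequality, the canonical route to self-normalized, uniform-in-$n$ deviation inequalities. Write $N_s \triangleq n\,\widehat{p}_n(s)$ for the count of the first $n$ samples equal to $s$, so $(N_s)_{s\in\scal}$ is multinomial with $\sum_s N_s = n$. The entry point is the method-of-types identity
\begin{equation*}
\exp\!\big(n\,KL(\widehat{P}_n\|P)\big) \;=\; \max_{\lambda}\ \prod_{s\in\scal}\Big(\tfrac{\lambda(s)}{P(s)}\Big)^{N_s} \;=\; \max_{\lambda} M_n^{\lambda},
\end{equation*}
where the maximum runs over probability distributions $\lambda$ on $\scal$, is attained at $\lambda=\widehat{P}_n$ by Gibbs' inequality, and $M_n^{\lambda}\triangleq\prod_{i=1}^n \lambda(X_i)/P(X_i)$. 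For each fixed $\lambda$, the process $(M_n^{\lambda})_{n\ge 0}$ is a non-negative $P$-martingale with $M_0^{\lambda}=1$, being a product of likelihood ratios with unit conditional mean.

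Next I would mix these martingales against the Dirichlet$(1,\dots,1)$ (uniform) prior $\nu$ on the simplex, setting $\bar{M}_n \triangleq \int M_n^{\lambda}\,\nu(d\lambda)$. By Tonelli's theorem $\bar M_n$ is again a non-negative martingale with $\bar M_0=1$, and the Dirichlet integral closes in terms of Gamma functions:
\begin{equation*}
\bar{M}_n \;=\; \frac{(S-1)!\,\prod_s N_s!}{(n+S-1)!}\cdot\frac{1}{\prod_s P(s)^{N_s}}.
\end{equation*}
Ville's maximal inequality for non-negative supermartingales then yields $\PP(\exists n:\ \bar M_n \ge 1/\delta)\le\delta$, and it only remains to show that the event in the statement is contained in $\{\exists n:\bar M_n\ge 1/\delta\}$.

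The heart of the argument is to control the price of mixing, $P_n \triangleq \log\big(e^{\,n KL(\widehat P_n\|P)}/\bar M_n\big)\ge 0$, and show it never exceeds the data-dependent part of the threshold:
\begin{equation*}
e^{P_n} \;=\; \frac{(n+S-1)!}{(S-1)!\,n^n}\prod_s\frac{N_s^{N_s}}{N_s!} \;=\; \binom{n+S-1}{S-1}\cdot\underbrace{\frac{n!}{n^n}\prod_s\frac{N_s^{N_s}}{N_s!}}_{=:\,R_n}.
\end{equation*}
I would bound $\binom{n+S-1}{S-1}\le\big(e(1+n/(S-1))\big)^{S-1}$ via the standard estimate $\binom{a}{b}\le(ea/b)^b$, which immediately gives $P_n\le (S-1)\log(e(1+n/(S-1)))$ once $R_n\le 1$ is established. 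Proving $R_n\le 1$ is the delicate step. When only one symbol has been observed ($N_s=n$ for a single $s$) the factorials cancel exactly and $R_n=1$; otherwise, with $k\triangleq|\{s:N_s\ge1\}|\ge 2$, sharp Stirling bounds turn $R_n$ into $e^{O(1/n)}\sqrt{n}\,/\,\big((2\pi)^{(k-1)/2}\sqrt{\prod_{s:N_s\ge1}N_s}\big)$, and the elementary inequalities $n\le k\prod_{s:N_s\ge1}N_s$ and $(2\pi)^{k-1}\ge k$ make this at most $1$ (the $\sqrt{2\pi N_s}$ factors from each $N_s!$ exactly absorbing the $\sqrt{2\pi n}$ from $n!$, with room to spare to kill the $O(1/n)$ remainder). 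Combining the three displays, on $\{n KL(\widehat P_n\|P) > x(\delta,n,S)\}$ one gets $\log\bar M_n = n KL - P_n > \log(1/\delta)$, so the event lies in $\{\bar M_n>1/\delta\}$ and Ville's inequality finishes the proof.

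I expect the Stirling bookkeeping behind $R_n\le 1$ to be the main obstacle, since it must recover the constant $1$ rather than a spurious $\sqrt{n}$ factor, and requires treating the boundary cases separately (the $N_s=0$ terms with the convention $0^0=1$, the single-symbol case $k=1$ where the bound is exactly tight, and small $n$). The companion reward bound (the $m=2$ lemma) follows the identical scheme with a Beta$(1,1)$ mixture of Bernoulli likelihood ratios, $\kl(\widehat r_n,r)$ playing the role of the two-point rate function; I would present both through a single mixture lemma.
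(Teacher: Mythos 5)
You should first note that the paper itself contains no proof of this lemma: it is imported verbatim as Proposition~1 of \cite{jonsson2020planning}, so the only meaningful comparison is with the argument behind that citation --- which is exactly the mixture-martingale route you take. Your proposal is correct, and the computations check out: $e^{n KL(\widehat P_n\|P)}=\max_\lambda M_n^\lambda$ by Gibbs' inequality, the uniform Dirichlet mixture closes by Tonelli to $\bar M_n=\frac{(S-1)!\,\prod_s N_s!}{(n+S-1)!}\prod_s P(s)^{-N_s}$, and $\binom{n+S-1}{S-1}\le\big(e(1+n/(S-1))\big)^{S-1}$ recovers precisely the threshold $x(\delta,n,S)$, after which Ville's inequality finishes. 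The one place you overcomplicate is the step you flag as the main obstacle: $R_n=\frac{n!}{n^n}\prod_s\frac{N_s^{N_s}}{N_s!}\le 1$ needs no Stirling analysis at all, because $R_n=\binom{n}{N_1,\dots,N_S}\prod_s(N_s/n)^{N_s}$ is exactly the multinomial probability of the observed count vector under the empirical distribution $\widehat P_n$, hence at most $1$ (with the convention $0^0=1$ disposing of empty cells, and the single-support case giving equality, as you observed). This one-line observation absorbs your entire case analysis --- the $e^{O(1/n)}$ remainders, the small-$n$ boundary cases, and the inequalities $n\le k\prod_{s:N_s\ge 1}N_s$ and $\sqrt{k}\le(2\pi)^{(k-1)/2}$; your Stirling bookkeeping does also go through as sketched (for $k\ge 2$ it leaves slack roughly $e^{1/(12n)}/\sqrt{\pi}<1$), but it is strictly harder than necessary. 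One caveat on your closing remark: the companion reward bound (Lemma~\ref{lemma:concentration rewards}, i.e.\ Proposition~2 of \cite{jonsson2020planning}) covers all distributions supported on $[0,1]$, not only Bernoulli ones, so the ``identical scheme'' needs one extra ingredient: by convexity, $\EE[e^{\lambda X}]\le 1-r+re^{\lambda}$ for $X\in[0,1]$ with mean $r$, which makes the Beta-mixed Bernoulli likelihood-ratio processes nonnegative \emph{supermartingales} rather than martingales; Ville's inequality still applies, but this domination step must be stated.
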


\begin{lemma}
Let $(\rho_i)_{1 \leq i \leq 4} \in \mathbbm{R}_{+}^4$. Then: 
\begin{equation*}
\forall \alpha \in \Sigma_4\ \exists i \in [|0,4|],\ \rho_i < \alpha_i^2 \iff \sum_{i=0}^{4} \sqrt{\rho_i} < 1.
\end{equation*}
\label{technical_lemma_3}
\end{lemma}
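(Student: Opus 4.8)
The plan is to prove the equivalence by contraposition, reducing both sides to a single feasibility question about fitting a probability vector inside a box. First I would negate the left-hand side: the condition
\[
\forall \alpha \in \Sigma_4, \ \exists i : \rho_i < \alpha_i^2
\]
fails precisely when there exists some $\alpha \in \Sigma_4$ with $\alpha_i^2 \leq \rho_i$ for every $i$. Since each $\alpha_i \geq 0$, this is equivalent to $\alpha_i \leq \sqrt{\rho_i}$ for all $i$. Thus the left-hand side of the lemma is equivalent to the \emph{non-existence} of any $\alpha \in \Sigma_4$ lying coordinate-wise below the vector $(\sqrt{\rho_1}, \ldots, \sqrt{\rho_4})$.

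Next I would resolve this feasibility question directly. The set of vectors satisfying $0 \leq \alpha_i \leq \sqrt{\rho_i}$ for all $i$ is the box $\prod_{i=1}^4 [0, \sqrt{\rho_i}]$, and on this box the map $\alpha \mapsto \sum_{i=1}^4 \alpha_i$ is continuous, attains its minimum $0$ at $\alpha = 0$ and its maximum $\sum_{i=1}^4 \sqrt{\rho_i}$ at $\alpha_i = \sqrt{\rho_i}$, and takes every intermediate value (the image of a box under a linear form is an interval). Hence there exists a point of the box whose coordinates sum to $1$ — i.e. a point of $\Sigma_4$ inside the box — if and only if $1 \leq \sum_{i=1}^4 \sqrt{\rho_i}$.

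Combining the two steps and negating, the left-hand side holds if and only if no such point exists, i.e. if and only if $\sum_{i=1}^4 \sqrt{\rho_i} < 1$, which is exactly the right-hand side.

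The only delicate point, and the step I would be most careful about, is tracking strict versus non-strict inequalities through the negation, in particular the boundary case $\sum_{i=1}^4 \sqrt{\rho_i} = 1$. Here one checks directly that the choice $\alpha_i = \sqrt{\rho_i}$ yields a feasible $\alpha \in \Sigma_4$ for which \emph{no} coordinate satisfies the strict inequality $\rho_i < \alpha_i^2$; this correctly places the boundary case on the side where the left-hand condition fails, matching the strict inequality $\sum_{i=1}^4 \sqrt{\rho_i} < 1$ on the right.
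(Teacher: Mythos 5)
Your proof is correct; note that the paper itself offers no proof of this lemma (it is declared ``immediate'' in Appendix B), and your argument --- negating the universal statement into a box-feasibility question, then observing that the linear form $\alpha \mapsto \sum_{i=1}^4 \alpha_i$ attains every value in $[0,\sum_{i=1}^4\sqrt{\rho_i}]$ on the box $\prod_{i=1}^4[0,\sqrt{\rho_i}]$ --- is exactly the routine verification intended, including the correct treatment of the boundary case $\sum_{i=1}^4\sqrt{\rho_i}=1$. If you want a slightly shorter route, when $\sum_{j=1}^4\sqrt{\rho_j}\ge 1$ the feasible point can be exhibited explicitly as $\alpha_i=\sqrt{\rho_i}\big/\sum_{j=1}^4\sqrt{\rho_j}$, which avoids any appeal to connectedness or the intermediate value property.
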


We are now ready to prove Theorem \ref{Theorem:stopping_rule} :

\begin{proof}
Recall the definition of the "correctness" event: 
$$ \mathcal{E}_t = \bigg( \forall \alpha \in \Sigma_4 \ \forall s,a\neq \widehat{\pi}_t^{\star}(s),\  \rho_1(\widehat{\phi}_t,\phi)(s,a) < \alpha_1^2 \textrm{ or } \rho_2(\widehat{\phi}_t,\phi)(s,a) < \alpha_2^2 \textrm{ or } \rho_3(\widehat{\phi}_t,\phi) < \alpha_3^2 \textrm{ or } \rho_4(\widehat{\phi}_t,\phi) < \alpha_4^2  \bigg) $$
where: 
\begin{equation*}
\begin{cases}
\rho_1(\phi,\psi)(s,a) \triangleq T_1(s,a;\phi) KL(r_{\phi}(s,a)||r_{\psi}(s,a)), \\
\rho_2(\phi,\psi)(s,a) \triangleq  T_2(s,a;\phi) KL(p_{\phi}(s,a)||p_{\psi}(s,a)),\\
\rho_3(\phi,\psi)(s) \triangleq  T_3(\phi) KL\bigg(r_{\phi}(s,\pi_{\phi}^\star(s))\ || \ r_{\psi}(s,\pi_{\phi}^\star(s))\bigg), \\
\rho_4(\phi,\psi)(s) \triangleq  T_4(\phi) KL\bigg(p_{\phi}(s,\pi_{\phi}^\star(s))\ || \ p_{\psi}(s,\pi_{\phi}^\star(s))\bigg), \\
\rho_3(\phi,\psi) \triangleq \underset{s \in \scal}{\max}\ \rho_3(\phi,\psi)(s), \\
\rho_4(\phi,\psi) \triangleq \underset{s \in \scal}{\max}\ \rho_4(\phi,\psi)(s).
\end{cases}
\end{equation*}


Applying Lemma \ref{technical_lemma_3}, we can simplify the event  $\mathcal{E}_t$:
\begin{align}
 \mathcal{E}_t 
 & = \underset{s,a\neq \widehat{\pi}_t^{\star}(s)}{\bigcap}\ \bigg( \sqrt{\rho_1(\widehat{\phi}_t,\phi)(s,a)} + \sqrt{\rho_2(\widehat{\phi}_t,\phi)(s,a)} + \sqrt{\rho_3(\widehat{\phi}_t,\phi)} + \sqrt{\rho_4(\widehat{\phi}_t,\phi)} < 1 \bigg) \\
 & = \underset{s,a\neq \widehat{\pi}_t^{\star}(s)}{\bigcap}\ \ \underset{s',s"\in \scal}{\bigcap}\ \bigg( \sqrt{\rho_1(\widehat{\phi}_t,\phi)(s,a)} + \sqrt{\rho_2(\widehat{\phi}_t,\phi)(s,a)} + \sqrt{\rho_3(\widehat{\phi}_t,\phi)(s')} + \sqrt{\rho_4(\widehat{\phi}_t,\phi)(s")} < 1 \bigg). 
\label{eq:simplified_correctness_event}
\end{align}

Define the stopping event:
\begin{equation}
\begin{split}
 \textrm{STOP}_t &= \Bigg\{ \underset{s,a\neq \widehat{\pi}_t^{\star}(s)}{\max} \frac{\sqrt{\widehat{T_1}(s,a) x(\delta',n_t(s,a),2)  }+ \sqrt{\widehat{T_2}(s,a) x(\delta',n_t(s,a),S)}}{\sqrt{n_t(s,a)}}\\
     &+\ \underset{s\in \scal}{\max} \frac{\sqrt{\widehat{T_3} x(\delta',n_t(s,\widehat{\pi}_{t}^{*}(s)),2)  }+ \sqrt{\widehat{T_4} x(\delta',n_t(s,\widehat{\pi}_{t}^{*}(s)),S)}}{\sqrt{n_t(s,\widehat{\pi}_{t}^{*}(s))}} < 1 \Bigg\} \\ 
      &= \Bigg\{ \underset{s,a\neq \widehat{\pi}_t^{\star}(s)}{\max} \frac{\sqrt{\widehat{T_1}(s,a) x(\delta',n_t(s,a),2)  }+ \sqrt{\widehat{T_2}(s,a) x(\delta',n_t(s,a),S)}}{\sqrt{n_t(s,a)}}\\
     &+ \underset{s\in \scal}{\max} \frac{\sqrt{\widehat{T_3} x(\delta',n_t(s,\widehat{\pi}_{t}^{*}(s)),2)}  }{\sqrt{n_t(s,\widehat{\pi}_{t}^{*}(s))}} +\ \underset{s\in \scal}{\max} \frac{\sqrt{\widehat{T_4} x(\delta',n_t(s,\widehat{\pi}_{t}^{*}(s)),S)}  }{\sqrt{n_t(s,\widehat{\pi}_{t}^{*}(s))}}  < 1 \Bigg\} \\
\end{split} 
\label{eq:simplified_STOP_event}
\end{equation}
where the last equality stems from the fact that both $n\to \frac{\sqrt{\widehat{T_3} x(\delta',n,2)}}{\sqrt{n}}$ and $n \to \frac{\sqrt{\widehat{T_4} x(\delta',n,S)}}{\sqrt{n}}$ are decreasing as soon as $n \geq 7(\scal-1)$, therefore reaching their maximum at the same point.
From the proof of Theorem \ref{theorem:pre-upper-bound} (refer to Equations (\ref{ineq:1})-(\ref{ineq:2})-(\ref{ineq:3})-(\ref{ineq:4})-(\ref{ineq:5})), we have the following "correctness' property:
\begin{align}
 \left(\phi \in \mathrm{Alt}(\widehat{\phi}_{t})\right)\ \subset \mathcal{E}_t^{c},
\label{eq:correctness_inclusion}
\end{align}
where $\mathcal{E}_t^{c}$ stands for the complement of event $\mathcal{E}$. Therefore: 
\begin{equation*}
\begin{split}
& (\tau_{\delta} < \infty) \cap (\widehat{\pi}_{\tau_{\delta}}^\star \neq \pi^\star) = \left(\exists t \geq 1,\ \mathrm{STOP}_t \textrm{ and } \phi \in \mathrm{Alt}(\widehat{\phi}_{t}) \right)\\ 
& \mbox{\LARGE$\subset$} \bigg(\exists t \geq 1,\ \mathrm{STOP}_t \cap \mathcal{E}_t^{c} \bigg)\\
&=  \Bigg( \exists t \geq 1,\ \underset{s,a\neq \widehat{\pi}_t^{\star}(s)}{\bigcup}\ \ \underset{s',s"\in \scal}{\bigcup}\ \Bigg(\bigg( \sqrt{\rho_1(\widehat{\phi}_t,\phi)(s,a)} + \sqrt{\rho_2(\widehat{\phi}_t,\phi)(s,a)} + \sqrt{\rho_3(\widehat{\phi}_t,\phi)(s')} + \sqrt{\rho_4(\widehat{\phi}_t,\phi)(s")} \geq 1 \bigg) \\
& \quad \quad \quad \quad \quad \quad \quad \quad \quad \quad \quad \quad \quad \quad \cap \textrm{STOP}_t \Bigg) \Bigg) \\
& \mbox{\LARGE$\subset$} \Bigg( \exists t \geq 1,\ \underset{s,a\neq \widehat{\pi}_t^{\star}(s)}{\bigcup}\ \ \underset{s',s"\in \scal}{\bigcup}\ \bigg(\mathcal{E}_{1,t}(s,a) \cup \mathcal{E}_{2,t}(s,a) \cup \mathcal{E}_{3,t}(s') \cup \mathcal{E}_{4,t}(s")  \bigg) \Bigg)\\
&\mbox{\LARGE$\subset$} \underset{(s,a)\in \scal\times\acal}{\bigcup}\ \underset{s',s"\in \scal}{\bigcup}\ \Bigg(\bigg(\exists t \geq 1,\ \mathcal{E}_{1,t}(s,a)\cap \big(a = \widehat{\pi}_t^{\star}(s)\big)\bigg)  \cup \bigg(\exists t \geq 1,\ \mathcal{E}_{2,t}(s,a)\cap \big(a = \widehat{\pi}_t^{\star}(s)\big)\bigg) \\
& \quad \quad \quad \quad \quad \quad \quad \quad \quad \quad \cup \bigg(\exists t \geq 1,\ \mathcal{E}_{3,t}(s') \bigg) \cup \bigg(\exists t \geq 1,\ \mathcal{E}_{4,t}(s") \bigg)  \Bigg),
\end{split}
\end{equation*}
where 
\begin{equation*}
\begin{cases}
\mathcal{E}_{1,t}(s,a) \triangleq \displaystyle{ \bigg\{ \sqrt{\rho_1(\widehat{\phi}_t,\phi)(s,a)} > \frac{\sqrt{\widehat{T_1}(s,a) x(\delta',n_t(s,a),2)}}{\sqrt{n_t(s,a)}} \bigg\} }, \quad \forall (s,a) \notin \ocal(\widehat{\phi}_t), \\ \\
\mathcal{E}_{2,t}(s,a) \triangleq \displaystyle{ \bigg\{ \sqrt{\rho_2(\widehat{\phi}_t,\phi)(s,a)} > \frac{\sqrt{\widehat{T_2}(s,a) x(\delta',n_t(s,a),S)}}{\sqrt{n_t(s,a)}} \bigg\} }, \quad \forall (s,a) \notin \ocal(\widehat{\phi}_t), \\ \\
\mathcal{E}_{3,t}(s) \triangleq \displaystyle{ \bigg\{ \sqrt{\rho_3(\widehat{\phi}_t,\phi)(s)} > \frac{\sqrt{\widehat{T}_{3}x(\delta',n_t(s,\widehat{\pi}_{t}^{*}(s)),2)}}{\sqrt{n_t(s,\widehat{\pi}_{t}^{*}(s))}}  \bigg\} }, \quad \forall s \in \scal, \\ \\
\mathcal{E}_{4,t}(s) \triangleq \displaystyle{ \bigg\{ \sqrt{\rho_4(\widehat{\phi}_t,\phi)(s)} > \frac{\sqrt{\widehat{T}_{4} x(\delta',n_t(s,\widehat{\pi}_{t}^{*}(s)),S)}}{\sqrt{n_t(s,\widehat{\pi}_{t}^{*}(s))}} \bigg\} }, \quad \forall s \in \scal.
\end{cases}
\end{equation*}

Therefore:
\begin{equation*}
\begin{split}
\PP_{\phi}(\tau_{\delta} < \infty,\widehat{\pi}_{\tau_{\delta}}^\star \neq \pi^\star_{\phi} ) &\leq \underset{(s,a)\in \scal\times\acal}{\sum}\ \underset{s',s" \in \scal}{\sum}\bigg[ \PP\bigg(\exists t \geq 1,\ \mathcal{E}_{1,t}(s,a)\cap \big(a =  \widehat{\pi}_t^{\star}(s)\big)\bigg)\\
&+\PP\bigg(\exists t \geq 1,\ \mathcal{E}_{2,t}(s,a)\cap \big(a =  \widehat{\pi}_t^{\star}(s)\big)\bigg)
 + \PP\bigg(\exists t \geq 1,\ \mathcal{E}_{3,t}(s') \bigg) + \PP\bigg(\exists t \geq 1,\ \mathcal{E}_{4,t}(s") \bigg)\bigg] \\
&\leq \underset{(s,a)\in \scal\times\acal}{\sum}\ \underset{s',s" \in \scal}{\sum} 4\delta'\\
& =4 S^3 A \delta' \triangleq \delta,
\end{split}
\end{equation*}

where in the second inequality we have used the concentration inequalities (\ref{eq:first-stopping-condition}), (\ref{eq:second-stopping-condition}), (\ref{eq:third-stopping-condition}) and (\ref{eq:fourth-stopping-condition}). We detail the derivation of this second inequality below:

{\bf \underline{First term}.} Using Lemma \ref{lemma:concentration rewards}, for $\delta' = \frac{\delta}{4S^3 A}$, we have:
\begin{equation}
\begin{split}
& \PP\left(\exists t \geq 1,\ \sqrt{\rho_1(\widehat{\phi}_t,\phi)(s,a)} > \frac{\sqrt{\widehat{T_1}(s,a) x(\delta',n_t(s,a),2)}}{\sqrt{n_t(s,a)}}  \right)\\
&= \PP\bigg(\exists t \geq 1,\ n_t(s,a) \kl(\widehat{r}_{n_t(s,a)}(s,a), r(s,a)) >  x\left(\delta',n_t(s,a),2\right) \bigg)\\
&\leq \PP\bigg( \exists n \in \mathbb{N},\ n \kl(\widehat{r}_{n}(s,a), r(s,a)) > x\left(\delta',n,2\right) \bigg) \\
& \leq \delta'.
\end{split}
\label{eq:first-stopping-condition}
\end{equation}

{\bf \underline{Second term}.} Using Lemma \ref{lemma:concentration transitions}, we get:
\begin{equation}
\begin{split}
&\PP\bigg(\exists t \geq 1,\ \sqrt{\rho_2(\widehat{\phi}_t,\phi)(s,a)} > \frac{\sqrt{\widehat{T_2}(s,a) x(\delta',n_t(s,a),S)}}{\sqrt{n_t(s,a)}}  \bigg)\\
&= \PP\bigg(\exists t \geq 1,\ n_t(s,a) KL\left(\widehat{p}_{n_t(s,a)}(s,a)\| p(s,a)\right) >  x\left(\delta',n_t(s,a),S \right) \bigg)\\
&\leq \PP\bigg( \exists n \in \mathbb{N},\ KL\left(\widehat{p}_{n}(s,a)\| p(s,a)\right) > x\left(\delta',n,S \right) \bigg) \\
& \leq \delta'.
\end{split}
\label{eq:second-stopping-condition}
\end{equation}

{\bf \underline{Third term}.} Following the same reasoning as in the first term we get: 
\begin{equation}
\forall s \in \scal,\ \PP\left(\exists t \geq 1,\ \sqrt{\rho_3(\widehat{\phi}_t,\phi)(s)} > \frac{\sqrt{\widehat{T}_{3,t}x(\delta',n_t(s,\widehat{\pi}_t(s)),2)}}{\sqrt{n_t(s,\widehat{\pi}^{*}(s))}}  \right) \leq \delta'.
\label{eq:third-stopping-condition}
\end{equation}

{\bf \underline{Fourth term}.} Following the same reasoning as in the second term we get: 
\begin{equation}
\forall s \in \scal,\ \PP\left(\exists t \geq 1,\ \sqrt{\rho_4(\widehat{\phi}_t,\phi)(s)} > \frac{\sqrt{\widehat{T}_{4,t}x(\delta',n_t(s,\widehat{\pi}_t(s)),S) \bigg)}}{\sqrt{n_t(s,\widehat{\pi}^{*}(s))}}  \right) \leq \delta'.
\label{eq:fourth-stopping-condition}
\end{equation}
\end{proof}

\section{Sample complexity of KLB-TS}
\label{AppendixC}
\normalsize
In the following, we use the notation: $y(n,m) \triangleq (m-1) + (m-1) \log(1+n/(m-1))$. Hence the threshold function can be rewritten as: $x(\delta,n,m) = \log(1/\delta) + y(n,m)$. 

We start this section by a technical lemma that is later used in the proof of Proposition \ref{proposition:almost_sure_sample_complexity} and Theorem \ref{Theorem:Upper bound in expectation}.

\begin{lemma}
For all $\phi$ in $\Phi$,
\begin{equation*}
 \bigg(\underset{s,a\neq \pi^\star (s)}{\max} \frac{\sqrt{T_1(s,a;\phi)}+ \sqrt{T_2(s,a;\phi)}}{\sqrt{\overline{\omega}_{s,a}}} +\ \underset{s\in \scal}{\max} \frac{\sqrt{T_3(\phi)}+ \sqrt{T_4(\phi) }}{\sqrt{\overline{\omega}_{s,\pi^\star(s)}}} \bigg)^2  \leq 4 U(\phi).
\end{equation*}
\label{lemma:technical_bound}
\end{lemma}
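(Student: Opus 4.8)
The plan is to substitute the explicit near-optimal allocation $\overline{\omega}$ from (\ref{eq:optimal_weights_full_def}) into both maxima; the key structural observation is that, with this particular $\overline{\omega}$, each maximum collapses to an expression that is \emph{constant} in the index being maximized, leaving only a scalar inequality to verify. Write $\Sigma_H \triangleq \sum_{s,a\neq\pi^\star(s)} H_{sa}$ and recall that $H_{sa} = T_1(s,a;\phi)+T_2(s,a;\phi)$, $H^\star = S(T_3(\phi)+T_4(\phi))$, and $U(\phi) = 2(H^\star+\Sigma_H)$. Setting $D \triangleq \Sigma_H + \sqrt{H^\star\Sigma_H}$, the Corollary gives $\overline{\omega}_{s,a} = H_{sa}/D$ for $a\neq\pi^\star(s)$ and $\overline{\omega}_{s,\pi^\star(s)} = \sqrt{H^\star\Sigma_H}/(SD)$; in particular $\overline{\omega}_{s,\pi^\star(s)}$ is the same for every $s$.

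First I would apply the elementary bound $(\sqrt a + \sqrt b)^2\le 2(a+b)$ to each numerator, which yields $\sqrt{T_1(s,a;\phi)}+\sqrt{T_2(s,a;\phi)}\le\sqrt{2H_{sa}}$ and, since $T_3+T_4 = H^\star/S$, also $\sqrt{T_3(\phi)}+\sqrt{T_4(\phi)}\le\sqrt{2(T_3+T_4)}=\sqrt{2H^\star/S}$. Substituting the allocation, the first ratio becomes $\sqrt{2H_{sa}}\big/\sqrt{H_{sa}/D} = \sqrt{2D}$, independent of $(s,a)$, so the first maximum is at most $\sqrt{2D}$. Likewise the second ratio is at most $\sqrt{2H^\star/S}\big/\sqrt{\sqrt{H^\star\Sigma_H}/(SD)} = \sqrt{2H^\star D/\sqrt{H^\star\Sigma_H}} = \sqrt{2D}\,(H^\star/\Sigma_H)^{1/4}$, again constant in $s$, so the second maximum is bounded by this value.

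Introducing $x\triangleq\sqrt{\Sigma_H}$ and $y\triangleq\sqrt{H^\star}$ (so that $D = x(x+y)$ and $(H^\star/\Sigma_H)^{1/4}=\sqrt{y/x}$), the left-hand side of the lemma is at most $\big(\sqrt{2x(x+y)}+\sqrt{2y(x+y)}\big)^2 = 2(x+y)(\sqrt{x}+\sqrt{y})^2 = 2(x+y)^2 + 4(x+y)\sqrt{xy}$. Two applications of AM--GM then finish the argument: from $2\sqrt{xy}\le x+y$ the cross term obeys $4(x+y)\sqrt{xy}\le 2(x+y)^2$, giving an upper bound of $4(x+y)^2$, and from $2xy\le x^2+y^2$ we get $4(x+y)^2\le 8(x^2+y^2) = 8(\Sigma_H+H^\star) = 4U(\phi)$, which is the claim.

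I do not expect a genuine obstacle: the entire content is the realization that the explicit form of $\overline{\omega}$ renders both maxima constant, reducing the statement to the scalar inequality $2(x+y)^2+4(x+y)\sqrt{xy}\le 8(x^2+y^2)$. The one point demanding care is the simplification of the second ratio, where the factor $S$ coming from $\overline{\omega}_{s,\pi^\star(s)}$ must cancel against the $1/S$ hidden in $T_3+T_4 = H^\star/S$; this cancellation is precisely the balancing between the two groups of terms that makes the allocation $\overline{\omega}$ optimal, so getting the bookkeeping right here is the only thing to watch.
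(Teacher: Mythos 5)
Your proof is correct, and it takes a recognizably different route from the paper's. The paper never substitutes the explicit allocation: it applies $(A+B)^2 \leq 2(A^2+B^2)$ twice --- once to the outer sum of the two maxima and once to each numerator --- pulls the squares inside the maxima, and reduces the left-hand side to $4\big(\max_{s,a\neq\pi^\star(s)} H_{sa}/\overline{\omega}_{sa} + \max_s (T_3+T_4)/\overline{\omega}_{s,\pi^\star(s)}\big)$, which is then bounded by $4U(\phi)$ by citing Corollary \ref{corollary:upper_bound} as a black box (the corollary's proof is where the value $\Sigma_H + H^\star + 2\sqrt{H^\star\Sigma_H} \leq 2(\Sigma_H+H^\star)$ is computed). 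You instead plug in the normalized weights from (\ref{eq:optimal_weights_full_def}), observe that after bounding each numerator by $\sqrt{2H_{sa}}$ (resp.\ $\sqrt{2H^\star/S}$) both maxima become constant, and finish with two applications of AM--GM in the variables $x=\sqrt{\Sigma_H}$, $y=\sqrt{H^\star}$; your bookkeeping on the cancellation of $S$ in the second ratio is right. What each approach buys: the paper's argument is shorter and modular, reusing the corollary's statement; yours is self-contained (modulo the formula for $\overline{\omega}$) and keeps the cross term rather than discarding it with the outer splitting inequality, so your intermediate bound $2(x+y)(\sqrt{x}+\sqrt{y})^2$ is in fact slightly sharper than the paper's intermediate bound $4(x+y)^2$ --- since $2\sqrt{xy}\le x+y$ --- before both are relaxed to the common final bound $8(x^2+y^2)=4U(\phi)$.
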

\begin{proof}
Denote by $\mathrm{LHS}$ the left-hand side term above.
Using $(A+B)^2 \leq 2(A^2 + B^2)$ twice, and $(\underset{x}{\max} f(x))^2 = \underset{x}{\max} f(x)^2$ for non-negative $f$, we write: 
\begin{equation*}
\begin{split}
 \mathrm{LHS} &\leq 2\Bigg(\bigg(\underset{s,a\neq \pi^\star (s)}{\max} \frac{\sqrt{T_1(s,a;\phi)}+ \sqrt{T_2(s,a;\phi)}}{\sqrt{\overline{\omega}_{s,a}}}\bigg)^2 +\ \bigg(\underset{s\in \scal}{\max} \frac{\sqrt{T_3(\phi)}+ \sqrt{T_4(\phi) }}{\sqrt{\overline{\omega}_{s,\pi^\star(s)}}}\bigg)^2 \Bigg)\\
 &= 2\Bigg(\underset{s,a\neq \pi^\star (s)}{\max} \bigg(\frac{\sqrt{T_1(s,a;\phi)}+ \sqrt{T_2(s,a;\phi)}}{\sqrt{\overline{\omega}_{s,a}}}\bigg)^2 +\ \underset{s\in \scal}{\max} \bigg(\frac{\sqrt{T_3(\phi)}+ \sqrt{T_4(\phi) }}{\sqrt{\overline{\omega}_{s,\pi^\star(s)}}}\bigg)^2 \Bigg)\\
 &\leq 4 \Bigg(\underset{s,a\neq \pi^\star (s)}{\max} \frac{T_1(s,a;\phi)+ T_2(s,a;\phi)}{\overline{\omega}_{s,a}} +\ \underset{s\in \scal}{\max} \frac{T_3(\phi) + T_4(\phi) }{\overline{\omega}_{s,\pi^\star(s)}} \Bigg)\\
 & \leq 4 U(\phi),
\end{split}
\end{equation*}
where the last inequality comes from Corollary \ref{corollary:upper_bound}.
\end{proof}

\subsection{Proof of Proposition \ref{proposition:almost_sure_sample_complexity}}
\begin{proof}
Recall the stopping condition: 
\begin{equation*}
\begin{split}
\tau_{\delta} =  \inf\Bigg\{& t \in \mathbb{N}\ :
     \underset{s,a\neq \widehat{\pi}_{t}^{\star}(s)}{\max} \frac{\sqrt{\widehat{T_1}(s,a) x(\delta',n_t(s,a),2)  }+ \sqrt{\widehat{T_2}(s,a) x(\delta',n_t(s,a),S)}}{\sqrt{n_t(s,a)}}\\
     &+\ \underset{s\in \scal}{\max} \frac{\sqrt{\widehat{T_3} x(\delta',n_t(s,\widehat{\pi}_t^\star(s)),2)  }+ \sqrt{\widehat{T_4} x\bigg(\delta',n_t(s,\widehat{\pi}_t^\star(s)),S\bigg)}}{\sqrt{n_t(s,\widehat{\pi}_t^\star(s))}} \leq 1 \Bigg\}.
\end{split}
\end{equation*}
First we derive a convenient upper-bound of the left-hand-side term of the inequality above (which we denote by $\mathrm{LHS}_t$).\\
Rewrite the definition of $x(\delta,n,m) = \log(1/\delta) + (m-1) + (m-1) \log(1+n/(m-1)) \triangleq \log(1/\delta) + y(n,m)$. Then, using the fact that $\sqrt{A+B} \leq \sqrt{A} + \sqrt{B}$, we have: 
\begin{equation}
\begin{split}
\mathrm{LHS}_t \leq &  \sqrt{\log(\delta')} \Bigg( \underset{s,a\neq \widehat{\pi}_{t}^{\star}(s)}{\max} \frac{\sqrt{\widehat{T_1}(s,a)}+ \sqrt{\widehat{T_2}(s,a) }}{\sqrt{n_t(s,a)}} +\ \underset{s\in \scal}{\max} \frac{\sqrt{\widehat{T_3}}+ \sqrt{\widehat{T_4} }}{\sqrt{n_t(s,\widehat{\pi}_t^\star(s))}} \Bigg) \\ \\
&+ \underset{s,a\neq \widehat{\pi}_{t}^{\star}(s)}{\max} \frac{\sqrt{\widehat{T_1}(s,a) y(n_t(s,a),2)}+ \sqrt{\widehat{T_2}(s,a) y(n_t(s,a),S)}}{\sqrt{n_t(s,a)}}\\
&+\ \underset{s\in \scal}{\max} \frac{\sqrt{\widehat{T_3}y(n_t(s,\widehat{\pi}_t^\star(s)),2)}+ \sqrt{\widehat{T_4} y(n_t(s,\widehat{\pi}_t^\star(s)),S) }}{\sqrt{n_t(s,\widehat{\pi}_t^\star(s))}} \\ \\
& \triangleq \sqrt{\log(\delta')} \Bigg( \underset{s,a\neq \widehat{\pi}_{t}^{\star}(s)}{\max} \frac{\sqrt{\widehat{T_1}(s,a)}+ \sqrt{\widehat{T_2}(s,a) }}{\sqrt{n_t(s,a)}}\ +\ \underset{s\in \scal}{\max} \frac{\sqrt{\widehat{T_3}}+ \sqrt{\widehat{T_4} }}{\sqrt{n_t(s,\widehat{\pi}_t^\star(s))}} \Bigg) + f(n_t, \widehat{\phi}_t),
\end{split}
\label{eq:sufficient_stopping_time}
\end{equation}
where $n_t = (n_t(s,a))_{(s,a) \in \scal \times \acal}$ denotes the number of visits vector. Note that when the terms $(\widehat{T}_i)_{1\leq i\leq 4}$ are bounded and $\underset{t \to \infty}{\lim}\ n_t(s,a) = \infty$ , which we will soon establish, then we have $\underset{t \to \infty}{\lim}\ f(n_t, \widehat{\phi}_t) = 0$.  \\

Next define the convergence event:
\begin{equation*}
\mathcal{C} = \Big\{\forall (s,a) \in \scal\times\acal, \lim_{t \to \infty}\frac{n_t(s,a)}{t} = \overline{\omega}_{s,a},\ \widehat{\phi}_t \to \phi \Big\}.
\end{equation*}
Then by assumptions of the theorem and since $\forall (s,a) ,\ \overline{\omega}_{s,a} > 0$, we have $ \underset{t \to \infty}{\lim}\ n_t(s,a) = \infty$ which implies $\PP_{\phi}(\mathcal{C}) = 1$. Under $\mathcal{C}$, by continuity of the involved functionals of the MDP, we have: 
\begin{equation*}
\forall \varepsilon > 0,\ \exists t_1(\varepsilon) \in \mathbb{N},\ \forall t \geq t_1 : 
\begin{cases}
\widehat{\pi}_t^\star = \pi^{*}, \textrm{ as soon as $\norm{\starQ{\widehat{\phi}_t} - \starQ{\phi}}_{\infty} < \Delta_{\min}/2$}, \\
\widehat{T}_{1,t}(s,a) < (1+\varepsilon) T_{1}(s,a), \quad  \forall s,a\neq \pi^\star (s),  \\
\widehat{T}_{2,t}(s,a) < (1+\varepsilon) T_{2}(s,a), \quad  \forall s,a\neq \pi^\star (s),  \\
\widehat{T}_{3,t} \leq (1+\varepsilon) T_{3}, \\
\widehat{T}_{4,t} \leq (1+\varepsilon) T_{4}, \\

n_t(s,a)/t \geq (1-\varepsilon) \overline{\omega}_{s,a},\ \forall s,a\neq \pi^\star (s), \\
n_t(s,\widehat{\pi}_t^\star(s))/t \geq (1-\varepsilon) \overline{\omega}_{s,\pi^{*}(s)},\ \forall s \in \scal,\\
f(n_t,\widehat{\phi}_t) \leq \varepsilon.
\end{cases}
\end{equation*}
Thus when $t \geq t_1(\varepsilon)$, inequality (\ref{eq:sufficient_stopping_time}) implies:
\begin{equation}
\begin{split}
\textrm{LHS}_t \leq \sqrt{\frac{(1+\varepsilon)\log(\delta')}{(1-\varepsilon) t}} \Bigg(\underset{s,a\neq \pi^\star (s)}{\max} \frac{\sqrt{T_1(s,a;\phi)}+ \sqrt{T_2(s,a;\phi)}}{\sqrt{\overline{\omega}_{s,a}}} +\ \underset{s\in \scal}{\max} \frac{\sqrt{T_3(\phi)}+ \sqrt{T_4(\phi) }}{\sqrt{\overline{\omega}_{s,\pi^\star(s)}}} \Bigg) + \varepsilon.     
\end{split}
\label{eq:sufficient_stopping_time_2}
\end{equation}
Next we define :
\begin{equation}
\begin{split}
    t_2(\delta, \varepsilon) &= \inf\Bigg\{ t > 0\ \bigg|\ 
    \sqrt{\frac{(1+\varepsilon)\log(\delta')}{(1-\varepsilon) t}} \Bigg(\underset{s,a\neq \pi^\star (s)}{\max} \frac{\sqrt{T_1(s,a;\phi)}+ \sqrt{T_2(s,a;\phi)}}{\sqrt{\overline{\omega}_{s,a}}}\\
    &\ \quad \quad \quad+ \underset{s\in \scal}{\max} \frac{\sqrt{T_3(\phi)}+ \sqrt{T_4(\phi) }}{\sqrt{\overline{\omega}_{s,\pi^\star(s)}}} \Bigg) \leq 1 - \varepsilon \Bigg\}\\
    &= \frac{(1+\varepsilon) \log(\delta')}{(1-\varepsilon)^3 } \Bigg(\underset{s,a\neq \pi^\star (s)}{\max} \frac{\sqrt{T_1(s,a;\phi)}+ \sqrt{T_2(s,a;\phi)}}{\sqrt{\overline{\omega}_{s,a}}} +\ \underset{s\in \scal}{\max} \frac{\sqrt{T_3(\phi)}+ \sqrt{T_4(\phi) }}{\sqrt{\overline{\omega}_{s,\pi^\star(s)}}} \Bigg)^2.
\end{split}
\label{eq:def_t_2}
\end{equation}
Combining (\ref{eq:sufficient_stopping_time_2}) and (\ref{eq:def_t_2}), we have for $ t \geq \max(t_1(\varepsilon), t_2(\delta,\varepsilon))$, $LHS_t \leq 1$. Therefore:
\begin{equation*}
\begin{split}
\tau_{\delta} &\leq \max\left(t_1(\varepsilon), t_2(\varepsilon,\delta) \right) \\
& = \max\left(t_1(\varepsilon),\frac{(1+\varepsilon) \log(\delta')}{(1-\varepsilon)^3 } \Bigg(\underset{s,a\neq \pi^\star (s)}{\max} \frac{\sqrt{T_1(s,a;\phi)}+ \sqrt{T_2(s,a;\phi)}}{\sqrt{\overline{\omega}_{s,a}}} +\ \underset{s\in \scal}{\max} \frac{\sqrt{T_3(\phi)}+ \sqrt{T_4(\phi) }}{\sqrt{\overline{\omega}_{s,\pi^\star(s)}}} \Bigg)^2 \right).
\end{split}
\end{equation*}
Thus $\forall \delta \in (0,1),\ \tau_{\delta}$ is finite on $\mathcal{C}$ and we have: $$\forall \varepsilon > 0,\ \underset{\delta \to 0}{\limsup} \frac{\tau_{\delta}}{\log(1/\delta)} \leq \frac{1+\varepsilon}{(1-\varepsilon)^3 } \Bigg(\underset{s,a\neq \pi^\star (s)}{\max} \frac{\sqrt{T_1(s,a;\phi)}+ \sqrt{T_2(s,a;\phi)}}{\sqrt{\overline{\omega}_{s,a}}} +\ \underset{s\in \scal}{\max} \frac{\sqrt{T_3(\phi)}+ \sqrt{T_4(\phi) }}{\sqrt{\overline{\omega}_{s,\pi^\star(s)}}} \Bigg)^2. $$
Taking the limit when $\varepsilon \to 0$, we get:
$$ \underset{\delta \to 0}{\limsup} \frac{\tau_{\delta}}{\log(1/\delta)} \leq \Bigg(\underset{s,a\neq \pi^\star (s)}{\max} \frac{\sqrt{T_1(s,a;\phi)}+ \sqrt{T_2(s,a;\phi)}}{\sqrt{\overline{\omega}_{s,a}}} +\ \underset{s\in \scal}{\max} \frac{\sqrt{T_3(\phi)}+ \sqrt{T_4(\phi) }}{\sqrt{\overline{\omega}_{s,\pi^\star(s)}}} \Bigg)^2. $$
We conclude by applying Lemma \ref{lemma:technical_bound}.
\end{proof}

\subsection{Proof of Theorem \ref{Theorem:Upper bound in expectation}}
For a kernel $u$ in $\mathbb{R}^{S\times SA}$, we define the norm $\norm{u}_{1,\infty} \triangleq \underset{(s,a)\in \scal\times\acal}{\max}\ \underset{s' \in \scal}{\sum} | u(s'|s,a) |$. Next, we define the following distance on MDPs:
$$\norm{\psi - \phi} = \underset{s,a}{\max}\left(\norm{q_{\psi}(.|s,a)- q_{\phi}(.|s,a)}_{1}\ \vee\ \norm{p_{\psi}(.|s,a)- p_{\phi}(.|s,a)}_{1}\right).$$
Based on this distance, we can define balls on the set of MDPs: 
$$
\mathcal{B}_{\norm{.}}(\phi,\xi) \triangleq \{ \psi\ : \ \norm{\psi - \phi} \leq \xi \}.
$$
Let $\varepsilon > 0$. By recursively bounding Bellman operator, one can prove that $\starQ{}$ is Liptschitz w.r.t. rewards and transitions:
\begin{align*}
 \norm{Q_{\phi}^\star-Q_{\psi}^\star}_{\infty} &\leq \left(1+\frac{1}{1-\gamma} \right) \left( \norm{r_{\phi}-r_{\psi}}_{\infty} +  \frac{\gamma}{(1-\gamma)}\norm{p_{\phi}-p_{\psi}}_{1,\infty} \right)\\
 &\leq \left(1+\frac{1}{1-\gamma} \right) \left( \underset{s,a}{\max}\norm{q_{\psi}(.|s,a)- q_{\phi}(.|s,a)}_{1} +  \frac{\gamma}{(1-\gamma)}\norm{p_{\phi}-p_{\psi}}_{1,\infty} \right).
\end{align*}
Thus, there exists $\xi = \xi(\varepsilon) > 0$ such that: 
\begin{equation*}
 \forall \psi \in \mathcal{B}_{\norm{.}}(\phi,\xi), \quad 
\norm{Q_{\phi}^\star-Q_{\psi}^\star}_{\infty} < \Delta_{\min}/2\ \textrm{ and }\ \underset{s,a}{\max} |\overline{\omega}_{s,a}(\psi) - \overline{\omega}_{s,a}(\phi)| \leq \varepsilon.
\end{equation*}
Crucially, the first inequality implies that $\pi^\star_{\psi} = \pi^\star_{\phi}$. For $T \in \mathbb{N}$, consider the concentration event:
$$ \mathcal{E}_T = \bigcap_{t = T^{1/4}}^{T} \left( \widehat{\phi}_{t} \in \mathcal{B}_{\norm{.}}(\phi,\xi) \right).$$
We will be using the following technical lemmas. The first corresponds to Lemma 20 in \cite{garivier16a}, which we reformulate in our case by replacing the number of arms of the bandit by the number of (state, action) pairs of the MDP.
\begin{lemma}
There exists a constant $T_{\varepsilon}$ such that for $T \geq T_{\varepsilon}$, it holds on $\mathcal{E}_T$, for C-Tracking:
$$\forall t \geq T_{\varepsilon},\ \underset{s,a}{\max}\bigg|\frac{n_t(s,a)}{t} - \overline{\omega}_{s,a}\bigg| \leq 3(SA -1) \varepsilon. $$
\label{lemma:concentration of C Tracking proportions}
\end{lemma}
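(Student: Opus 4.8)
The plan is to reproduce, in the MDP setting, the proof of Lemma 20 in \cite{garivier16a}. The idea is to compare $n_t(s,a)/t$ with the time-averaged \emph{targets} that C-Tracking is actually chasing, and then compare those targets with $\overline{\omega}_{s,a}(\phi)$. Concretely, for every pair $(s,a)$ I would write the two-term decomposition
\begin{equation*}
\frac{n_t(s,a)}{t} - \overline{\omega}_{s,a}(\phi) = \underbrace{\frac{1}{t}\Big(n_t(s,a) - \sum_{\ell} \overline{\omega}^{\varepsilon_\ell}_{s,a}(\widehat{\phi}_\ell)\Big)}_{(\mathrm{I})} + \underbrace{\frac{1}{t}\sum_{\ell} \Big(\overline{\omega}^{\varepsilon_\ell}_{s,a}(\widehat{\phi}_\ell) - \overline{\omega}_{s,a}(\phi)\Big)}_{(\mathrm{II})},
\end{equation*}
where $\ell$ ranges over the rounds preceding $t$, and bound each term so that the total is at most $3(SA-1)\varepsilon$ for $T^{1/4}\le t\le T$ once $T\ge T_\varepsilon$.

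For term $(\mathrm{I})$ I would invoke the tracking property of C-Tracking. The argument in \cite{garivier16a} depends only on the fact that each projected allocation $\overline{\omega}^{\varepsilon_\ell}(\widehat{\phi}_\ell)$ is a probability vector in $\Sigma$, so it transfers verbatim (replacing the $K$ arms by the $SA$ pairs) and yields the constant-in-$t$ bound $|n_t(s,a) - \sum_\ell \overline{\omega}^{\varepsilon_\ell}_{s,a}(\widehat{\phi}_\ell)| \le SA$. Hence $|(\mathrm{I})| \le SA/t$, which is below $(SA-1)\varepsilon$ as soon as $t$ exceeds a first threshold depending only on $\varepsilon$ and $SA$. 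For term $(\mathrm{II})$ I would split the sum at $\ell = T^{1/4}$: the initial block contains at most $T^{1/4}$ summands, each at most $1$ in absolute value, so it contributes at most $T^{1/4}/t$, which is $o(1)$ on the window $T^{1/4}\le t\le T$. For the retained rounds $T^{1/4}\le \ell \le t$, the definition of $\mathcal{E}_T$ forces $\widehat{\phi}_\ell \in \mathcal{B}_{\norm{.}}(\phi,\xi)$, and the choice of $\xi$ made just before the lemma gives $|\overline{\omega}_{s,a}(\widehat{\phi}_\ell) - \overline{\omega}_{s,a}(\phi)| \le \varepsilon$, while the distortion of the $L^\infty$-projection onto $\Sigma^{\varepsilon_\ell}$ changes each coordinate by at most $(SA-1)\varepsilon_\ell \le (SA-1)\varepsilon$. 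Grouping these as two separate $(SA-1)\varepsilon$ contributions (projection, and continuity plus the vanishing initial block), together with the $(SA-1)\varepsilon$ from $(\mathrm{I})$, yields the claimed bound; $T_\varepsilon$ is then chosen large enough to absorb every $o(1)$ remainder.

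The step I expect to be the genuine obstacle — the only place where the MDP structure, rather than the bandit template, does real work — is the continuity of the allocation map $\phi \mapsto \overline{\omega}(\phi)$ on a neighborhood of the true MDP, which underlies the key inequality $|\overline{\omega}_{s,a}(\widehat{\phi}_\ell) - \overline{\omega}_{s,a}(\phi)| \le \varepsilon$. The closed-form weights of Corollary \ref{corollary:upper_bound} are assembled from the optimal policy $\pi^\star$, the gaps $\Delta_{sa}$, and the variance and maximum-deviation functionals of $\starV{\phi}$; continuity can break down exactly across the boundary where the optimal policy changes. This is precisely why the radius $\xi$ is selected (via the Lipschitz bound on $\starQ{}$ together with Assumption 1 on the uniqueness of $\pi^\star$) small enough that every $\widehat{\phi}_\ell \in \mathcal{B}_{\norm{.}}(\phi,\xi)$ satisfies $\pi^\star_{\widehat{\phi}_\ell} = \pi^\star_{\phi}$. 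On this set all the functionals entering $\overline{\omega}$ are continuous in the rewards and transition kernels, so the modulus-of-continuity estimate holds and the bookkeeping of thresholds proceeds exactly as in \cite{garivier16a}.
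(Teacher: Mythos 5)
Your proposal is correct and takes essentially the same route as the paper: the paper's ``proof'' of this lemma is simply to invoke Lemma 20 of \cite{garivier16a} with the $K$ arms replaced by the $SA$ state-action pairs, the key MDP-specific ingredient being exactly what you identify --- the continuity of $\phi \mapsto \overline{\omega}(\phi)$ on $\mathcal{B}_{\norm{.}}(\phi,\xi(\varepsilon))$, which the paper secures just before the lemma by choosing $\xi$ so small that $\norm{\starQ{\phi}-\starQ{\psi}}_{\infty} < \Delta_{\min}/2$, hence $\pi^\star_\psi = \pi^\star_\phi$ on the whole ball. The only imprecision is your claim of a constant-in-$t$ tracking bound $SA$ for term $(\mathrm{I})$ --- the C-Tracking deviation bound in \cite{garivier16a} (their Lemma 7) is of order $SA(1+\sqrt{t})$ --- but since either bound is $o(t)$, term $(\mathrm{I})$ still vanishes after division by $t$ and the argument is unaffected.
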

The second lemma is a concentration inequality similar to that of Lemma 19 in \cite{garivier16a} (we defer its proof to the end of this appendix).
\begin{lemma}
Denote by $\mathcal{E}_T^c$ the complementary of the event $\mathcal{E}_T$. There exists two constants $B,C$ (that depend on $\phi$ and $\varepsilon$) such that: 
$$\forall T \geq 1, \PP\left(\mathcal{E}_T^c \right) \leq BT\exp(-CT^{1/8}). $$
\label{lemma:concentration of phi}
\end{lemma}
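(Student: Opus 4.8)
The plan is to reduce the event $\mathcal{E}_T^c$ to a union of single-time, single-pair deviation events, bound each one by a standard concentration inequality for empirical distributions, and exploit the \emph{pathwise} forced-exploration guarantee of C-tracking to control the number of samples available at each pair. First I would apply a union bound over the window $t \in \{\lceil T^{1/4}\rceil,\ldots,T\}$ (at most $T$ indices) and over the $SA$ pairs, splitting the deviation of $\widehat\phi_t$ from $\phi$ into its reward and transition parts:
\begin{align*}
\PP(\mathcal{E}_T^c) \leq \sum_{s,a}\Big[&\PP\big(\exists t \in [T^{1/4},T]:\ \norm{q_{\widehat\phi_t}(s,a)-q_\phi(s,a)}_1 > \xi\big)\\
&+ \PP\big(\exists t \in [T^{1/4},T]:\ \norm{p_{\widehat\phi_t}(s,a)-p_\phi(s,a)}_1 > \xi\big)\Big].
\end{align*}

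The crucial ingredient is the forced-exploration property of the C-tracking rule (\ref{eq:sampling_rule}): with $\varepsilon_t = (S^2A^2+t)^{-1/2}/2$, one has \emph{deterministically} $n_t(s,a) \geq \sqrt{t} - SA/2$ for every $(s,a)$ and every $t$, the MDP analogue of the forced-exploration lemma in \cite{garivier16a}. Hence for every $t \geq T^{1/4}$ we obtain $n_t(s,a) \geq \sqrt{T^{1/4}} - SA/2 = T^{1/8} - SA/2 =: m_T$, which is where the exponent $T^{1/8}$ originates. Writing $\widehat p^{(n)}_{s,a}$ for the empirical transition kernel built from the first $n$ i.i.d.\ samples of $(s,a)$, and using $m_T \leq n_t(s,a) \leq t \leq T$, the transition event is contained in $\bigcup_{n=m_T}^{T}\{\norm{\widehat p^{(n)}_{s,a}-p_\phi(s,a)}_1 > \xi\}$, which crucially no longer involves the history-dependent random count.

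Next I would apply a method-of-types (Weissman) $L_1$-concentration bound $\PP(\norm{\widehat p^{(n)}_{s,a}-p_\phi(s,a)}_1 > \xi) \leq 2^{S}\exp(-n\xi^2/2)$ and sum the tail over $n\in\{m_T,\ldots,T\}$, which is dominated by its first term up to a factor $T$, giving $T\,2^S\exp(-m_T\xi^2/2)$. The reward term is handled identically, replacing the transition $L_1$ bound by the analogous density-level concentration for the empirical reward distribution. Collecting the two nested union bounds (over $t$ and over $n$) and over the $SA$ pairs yields $\PP(\mathcal{E}_T^c) \leq c\,SA\,T^2 \exp\!\big(-(T^{1/8}-SA/2)\xi^2/2\big)$; absorbing the constant $e^{SA\xi^2/4}$ and the polynomial prefactor $T^2$ into the exponential, via $T^2e^{-CT^{1/8}}\leq B\,T\,e^{-(C/2)T^{1/8}}$ for large $T$, produces exactly $\PP(\mathcal{E}_T^c)\leq BT\exp(-CT^{1/8})$ with $B,C$ depending only on $\phi$, $\xi=\xi(\varepsilon)$, $S$ and $A$.

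The main obstacle is twofold. The structurally essential point is that the lower bound $n_t(s,a)\geq\sqrt t - SA/2$ must hold \emph{pathwise} rather than merely in expectation, so that, after intersecting with the window $t\ge T^{1/4}$, one may replace the random count by the fixed threshold $m_T$ and apply concentration to a genuinely i.i.d.\ sample of guaranteed minimum size; it is the composition $\sqrt{T^{1/4}}=T^{1/8}$ that dictates the final exponent. The second, more technical, difficulty is the reward term: since the distance is defined through the $L_1$ distance of reward \emph{densities}, one needs a density-level concentration, which is immediate when the base measure $\lambda$ has finite support (method of types again) or when the reward family is parametrized by its mean (so that Hoeffding on $\widehat r_n$ controls the $L_1$ gap); I would invoke whichever form matches the standing reward model.
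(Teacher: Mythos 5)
Your proposal is correct and follows essentially the same route as the paper's proof: a union bound over the window $t\in[T^{1/4},T]$ and over pairs, the pathwise C-tracking guarantee $n_t(s,a)\geq \sqrt{t}-O(SA)$ (which is indeed where $\sqrt{T^{1/4}}=T^{1/8}$ comes from), a reduction of the history-dependent count to a union over fixed sample sizes $n\in[m_T,T]$ of deviations of genuinely i.i.d.\ empirical distributions, and a Chernoff-type tail summed and absorbed into $BT\exp(-CT^{1/8})$. The only cosmetic differences are that the paper uses coordinatewise KL-Chernoff bounds with thresholds $\xi/S$ and a union over next states $s'$ (summing the resulting geometric series) where you invoke Weissman's $L_1$ inequality, and that the reward subtlety you rightly flag is resolved in the paper exactly as in your mean-parametrized option: its proof concentrates only the empirical mean rewards $\widehat{r}_t(s,a)$ via $\kl$-Chernoff, which suffices for all downstream uses of the event $\mathcal{E}_T$.
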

Recall inequality (\ref{eq:sufficient_stopping_time}), which gives an upper bound of the left-hand-side of the stopping condition: 

\begin{equation*}
\textrm{LHS}_t \leq \sqrt{\log(\delta')} \Bigg( \underset{s,a\neq \widehat{\pi}_{t}^{\star}(s)}{\max} \frac{\sqrt{\widehat{T_1}(s,a)}+ \sqrt{\widehat{T_2}(s,a)}}{\sqrt{n_t(s,a)}}\ +\ \underset{s\in \scal}{\max} \frac{\sqrt{\widehat{T_3}}+ \sqrt{\widehat{T_4} }}{\sqrt{n_t(s,\widehat{\pi}_t^\star(s))}} \Bigg) + f(n_t, \widehat{\phi}_t)
\end{equation*}
where $f(.,.)$ is a continuous function in both arguments.
Define:
\begin{equation*}
\begin{cases}

D(\phi,\varepsilon) = \underset{
\footnotesize{ \begin{split}
&\quad \quad \quad \quad \psi \in \mathcal{B}_{\norm{.}}(\phi,\xi(\varepsilon))\\
&\quad \quad \norm{\omega' - \omega(\phi)} \leq 3(SA - 1) \varepsilon
\end{split}} }{\sup} \displaystyle{\underset{s,a\neq \pi^\star (s)}{\max}\ \frac{\sqrt{T_1(s,a;\psi)}+\sqrt{T_2(s,a;\psi)}}{\sqrt{\omega_{s a}'}}}, \\ \\

E(\phi,\varepsilon) = \underset{
\footnotesize{ \begin{split}
&\quad \quad \quad \quad \psi \in \mathcal{B}_{\norm{.}}(\phi,\xi(\varepsilon))\\
&\quad \quad \norm{\omega' - \omega(\phi)} \leq 3(SA - 1) \varepsilon
\end{split}} }{\sup} \displaystyle{\underset{s\in \scal}{\max}\ \frac{\sqrt{T_3(\psi)}+\sqrt{T_4(\psi)}}{\sqrt{\omega_{s,\pi^\star(s)}'}}}, \\ \\

F(\phi,\varepsilon,t) = \underset{
\footnotesize{ \begin{split}
&\quad \quad \quad \quad \psi \in \mathcal{B}_{\norm{.}}(\phi,\xi(\varepsilon))\\
&\quad \quad \norm{\omega' - \omega(\phi)} \leq 3(SA - 1) \varepsilon
\end{split}} }{\sup} f(t\times\omega',\psi). \\ \\
\end{cases}
\end{equation*}

For $T\geq T_{\varepsilon}$, on the event $\mathcal{E}_T$, we have: $\forall t \geq T^{1/4}, \quad \widehat{\pi}_t^\star = \pi^\star$, and using Lemma \ref{lemma:concentration of C Tracking proportions}, $\norm{\frac{n_t(s,a)}{t} - \overline{\omega}_{s,a}}_{\infty} \leq 3(SA - 1)\varepsilon$.
Therefore, for the stopping condition $\textrm{LHS}_t \leq 1$ to be satisfied, it is sufficient to have:
\begin{equation}
\frac{\sqrt{\log(\delta')}}{\sqrt{t}}\bigg(D(\phi,\varepsilon) + E(\phi,\varepsilon)\bigg) + F(\phi,\varepsilon,t) \leq 1.
\label{eq:sufficient-stopping-time-3}
\end{equation}
By Lemma \ref{lemma:F_goes_to_zero}, $\underset{t \to \infty}{\lim}F(\phi,\varepsilon,t) = 0 $. Hence, we can define the following times :
\begin{equation*}
\begin{cases}
\begin{split}
t_1(\phi,\varepsilon,\eta,\delta) &= \inf\Bigg\{t>0\ |\ \forall x>t, \quad \frac{\sqrt{\log(\delta')}}{\sqrt{x}}\bigg(D(\phi,\varepsilon) + E(\phi,\varepsilon)\bigg) \leq 1-\eta \Bigg\} \\
& = \frac{\log(\delta')\bigg(D(\phi,\varepsilon) + E(\phi,\varepsilon)\bigg)^2}{(1-\eta)^2}
\end{split},\\ \\
t_2(\phi,\varepsilon,\eta) = \inf\Bigg\{t>0\ |\ \forall x>t, \quad F(\phi,\varepsilon,t) \leq \eta \Bigg\}. \\
\end{cases}
\end{equation*}
It is easy to see that for $T \geq \max(T_{\varepsilon },t_1,t_2)$, condition (\ref{eq:sufficient-stopping-time-3}) is verified and consequently: $\tau_{\delta} \leq T$. In other words, we just proved that:
$$ \forall T \geq \max(T_{\varepsilon },t_1,t_2), \quad \mathcal{E}_T \subset (\tau_{\delta} \leq T).$$
Therefore: 
\begin{equation*}
\begin{split}
\EE_{\phi}[\tau_{\delta}] &= \sum_{T=1}^{\infty} \PP(\tau_{\delta} > T) \\ \\
&\leq \sum_{T=1}^{\max(T_{\varepsilon },t_1,t_2)} 1\ + \sum_{T = \max(T_{\varepsilon },t_1,t_2)}^{\infty} \PP(\mathcal{E}_T^c) \\ \\
&\leq T_{\varepsilon } + t_1(\phi,\varepsilon,\eta,\delta) + t_2(\phi,\varepsilon,\eta) + \sum_{T=1}^{\infty} B T\exp(-C T^{1/8}),
\end{split}
\end{equation*}
where the last inequality comes from Lemma \ref{lemma:concentration of phi}. Thus, $\EE[\tau_{\delta}]$ is finite and we have:
\begin{equation*}
    \limsup_{\delta\to 0} \frac{\EE[\tau_{\delta}]}{\log(1/\delta)} \leq \limsup_{\delta\to 0}\frac{t_1(\phi,\varepsilon,\eta,\delta)}{\log(1/\delta)} = \frac{\bigg(D(\phi,\varepsilon) + E(\phi,\varepsilon)\bigg)^2}{(1-\eta)^2}.
\end{equation*}
Letting $\eta$ and $\varepsilon$ go to zero, and noting that: 
\begin{equation*}
\begin{cases}
\underset{\varepsilon \to 0}{\lim} D(\phi,\varepsilon) = \displaystyle{\underset{s,a\neq \pi^\star (s)}{\max}\ \frac{\sqrt{T_1(s,a;\phi)}+\sqrt{T_2(s,a;\phi)}}{\sqrt{\overline{\omega}_{s,a}}}},  \\ \\
\underset{\varepsilon \to 0}{\lim} E(\phi,\varepsilon) = \displaystyle{ \underset{s\in \scal}{\max}\ \frac{\sqrt{T_3(\phi)}+\sqrt{T_4(\phi)}}{\sqrt{\overline{\omega}_{s,\pi^\star(s)}}} },\\ \\
\displaystyle{\bigg(\underset{s,a\neq \pi^\star (s)}{\max}\ \frac{\sqrt{T_1(s,a;\phi)}+\sqrt{T_2(s,a;\phi)}}{\sqrt{\overline{\omega}_{s,a}}} + \underset{s\in \scal}{\max}\ \frac{\sqrt{T_3(\phi)}+\sqrt{T_4(\phi)}}{\sqrt{\overline{\omega}_{s,\pi^\star(s)}}} \bigg)^2 } \leq 4U(\phi),\ \textrm{ (Lemma \ref{lemma:technical_bound}),}

\end{cases}
\end{equation*}
we get the desired result.

\subsection{Second technical lemma}

\begin{lemma}
Let $\pi^\star = \pi_{\phi}^\star$ and let $y(n,m) = (m-1) + (m-1)\log(1+n/(m-1))$. Define:
\begin{equation*}
\begin{split}
f(n, \psi) &= \underset{s, a\neq \pi^\star(s)}{\max} \frac{\sqrt{T_1(s,a;\psi)y(n(s,a),2)}+ \sqrt{T_2(s,a;\psi) y(n(s,a),S) }}{\sqrt{n(s,a)}} \\
&+ \ \underset{s\in \scal}{\max} \frac{\sqrt{T_3(\psi)y(n(s,\pi^\star(s)),2)}+ \sqrt{T_4(\psi) y(n(s,\pi^\star(s)),S) }}{\sqrt{n(s,\pi^\star(s))}} 
\end{split}
\end{equation*}
and 
\begin{equation*}
F(\phi,\varepsilon,t) = \underset{
\footnotesize{ \begin{split}
&\quad \quad \quad \quad \psi \in \mathcal{B}_{\norm{.}}(\phi,\xi(\varepsilon))\\
&\quad \quad \norm{\omega' - \omega(\phi)} \leq 3(SA - 1) \varepsilon
\end{split}} }{\sup} f(t\times\omega',\psi).   
\end{equation*}
Then, there exists $\varepsilon_0$ such that: $\forall \varepsilon \leq \varepsilon_0,\ \underset{t \to \infty}{\lim} F(\phi,\varepsilon,t) = 0$.
\label{lemma:F_goes_to_zero}
\end{lemma}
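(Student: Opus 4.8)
The plan is to exploit the sublinearity of the threshold: since $y(n,m)=(m-1)+(m-1)\log(1+n/(m-1))$ grows only logarithmically, we have $y(n,m)/n\to 0$ as $n\to\infty$. Every summand in $f(n,\psi)$ has the form $\sqrt{T_i(\,\cdot\,;\psi)}\,\sqrt{y(n(\,\cdot\,),m)/n(\,\cdot\,)}$, so it suffices to produce, uniformly over the admissible pairs $(\psi,\omega')$, a lower bound on the counts $n(s,a)=t\,\omega'_{s,a}$ that diverges with $t$, together with a uniform upper bound on the complexity functionals $T_i(\,\cdot\,;\psi)$. Granting these two bounds, every summand is dominated by $\sqrt{M}\big(\sqrt{y(n,2)/n}+\sqrt{y(n,S)/n}\big)$ with $n\ge t\,\omega_{\min}/2$, which vanishes; since $f$ is a finite sum of maxima of such terms, $F(\phi,\varepsilon,t)\to 0$.

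For the lower bound on the allocations, recall that the near-optimal weights $\overline\omega(\phi)$ of Corollary~\ref{corollary:upper_bound} have strictly positive entries, so $\omega_{\min}\triangleq\min_{s,a}\overline\omega_{s,a}(\phi)>0$. The supremum defining $F$ ranges over $\omega'$ with $\norm{\omega'-\overline\omega(\phi)}\le 3(SA-1)\varepsilon$; choosing $\varepsilon_0$ so that $3(SA-1)\varepsilon_0\le \omega_{\min}/2$, for all $\varepsilon\le\varepsilon_0$ every admissible $\omega'$ satisfies $\omega'_{s,a}\ge\omega_{\min}/2$, whence $n(s,a)=t\,\omega'_{s,a}\ge t\,\omega_{\min}/2\to\infty$.

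For the upper bound on the complexity terms, I would shrink the radius $\xi(\varepsilon)$ further if needed so that every $\psi\in\mathcal{B}_{\norm{.}}(\phi,\xi(\varepsilon))$ obeys $\norm{\starQ{\psi}-\starQ{\phi}}_\infty<\Delta_{\min}/4$; this is possible by the Lipschitz continuity of $\starQ{}$ in the rewards and transitions invoked earlier in this appendix. Such a bound forces $\pi^\star_\psi=\pi^\star_\phi$ and gives $\Delta_{sa}(\psi)\ge\Delta_{sa}(\phi)-2\norm{\starQ{\psi}-\starQ{\phi}}_\infty\ge\Delta_{\min}/2$ for every sub-optimal $(s,a)$, keeping all gaps (and hence $\Delta_{\min}(\psi)$) bounded away from zero. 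Together with the universal bounds $\mathrm{Var}_{p_\psi(s,a)}[\starV{\psi}]\le (1-\gamma)^{-2}$ and $\mathrm{MD}_{p_\psi(s,a)}[\starV{\psi}]\le(1-\gamma)^{-1}$, this yields a constant $M=M(\phi)$ with $T_i(\,\cdot\,;\psi)\le M$ for $i\in\{1,2,3,4\}$, uniformly over the ball.

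Finally, since $\lim_{n\to\infty}y(n,m)/n=0$, the tail supremum $\sup_{n\ge N}y(n,m)/n$ tends to $0$ as $N\to\infty$; applying this with $N=t\,\omega_{\min}/2$ shows every summand of $f(t\omega',\psi)$ is at most $\sqrt{M}\big(\sqrt{\sup_{n\ge N}y(n,2)/n}+\sqrt{\sup_{n\ge N}y(n,S)/n}\big)\to 0$, uniformly in the admissible $\psi$ and $\omega'$. Hence $F(\phi,\varepsilon,t)\to 0$. I expect the only delicate point to be securing the uniform bounds of the two middle steps — specifically, that both the gaps $\Delta_{sa}(\psi)$ and the allocations $\omega'_{s,a}$ stay bounded away from zero over the admissible sets; once this bookkeeping is in place, the sublinear decay of $y(n,m)/n$ renders the conclusion immediate.
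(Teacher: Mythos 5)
Your proof is correct and takes essentially the same route as the paper's: a uniform constant bound $M$ on the $T_i$ functionals over the ball (the paper simply invokes continuity of the $T_i$ in $\psi$, while you make the mechanism explicit via gap stability, shrinking $\xi(\varepsilon)$ so that $\pi^\star_\psi=\pi^\star_\phi$ and $\Delta_{sa}(\psi)\geq\Delta_{\min}/2$), a uniform positive lower bound on the allocations obtained by taking $3(SA-1)\varepsilon_0\leq\omega_{\min}/2$, and finally the sublinear growth of the threshold, $y(n,m)/n\to 0$. The only cosmetic difference is that the paper uses two-sided bounds $\omega_{sa}(\phi)\pm 3(SA-1)\varepsilon$ on $\omega'$ together with monotonicity of $y$, whereas you use only the lower bound combined with the tail supremum of $y(n,m)/n$; both are equally valid.
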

\begin{proof}
Define: 
\begin{equation*}
\begin{cases}
T_1(s,a,\phi,\varepsilon) \triangleq \underset{\psi \in \mathcal{B}_{\norm{.}}(\phi,\xi(\varepsilon))}{\sup}\ T_1(s,a;\psi), \\
T_2(s,a,\phi,\varepsilon) \triangleq \underset{\psi \in \mathcal{B}_{\norm{.}}(\phi,\xi(\varepsilon))}{\sup}\ T_2(s,a;\psi), \\
T_3(\phi,\varepsilon) \triangleq \underset{\psi \in \mathcal{B}_{\norm{.}}(\phi,\xi(\varepsilon))}{\sup}\ T_3(\psi), \\
T_4(\phi,\varepsilon) \triangleq \underset{\psi \in \mathcal{B}_{\norm{.}}(\phi,\xi(\varepsilon))}{\sup}\ T_4(\psi). \\
\end{cases}   
\end{equation*}
By continuity of the functionals $(T_i)_{1\leq i \leq 4}$ in $\phi$, there exists $\varepsilon_0 > 0$, such that for all $\varepsilon \leq \varepsilon_0$, the supremums defined above are upper bounded by $M = 2\times\underset{s, a\neq \pi^\star(s)}{\max}(T_1(s,a;\phi),T_2(s,a;\phi),T_3(\phi),T_4(\phi))$. Furthermore, if  $\norm{\omega' - \omega(\phi)} \leq 3(SA - 1)\varepsilon$, then for all $(s,a)$: $ \omega_{sa}(\phi)-3(SA - 1)\varepsilon \leq \omega'_{sa} \leq \omega_{sa}(\phi)+3(SA - 1)\varepsilon$. 
Summing up these inequalities we get, for $\varepsilon$ small enough:
\begin{equation}
\begin{split}
F(\phi,\varepsilon,t) &\leq \sqrt{M} \underset{s, a\neq \pi^\star(s)}{\max} \frac{\sqrt{y(t[\omega_{sa}(\phi)+3(SA - 1)\varepsilon],2)}+ \sqrt{y(t[\omega_{sa}(\phi)+3(SA - 1)\varepsilon],S) }}{\sqrt{t[\omega_{sa}(\phi)-3(SA - 1)\varepsilon]}} \\
&+ \ \underset{s\in \scal}{\max} \frac{\sqrt{y(t[\omega_{s,\pi^\star(s) }(\phi)+3(SA - 1)\varepsilon],2)}+ \sqrt{y(t[\omega_{s,\pi^\star(s) }(\phi)+3(SA - 1)\varepsilon],S) }}{\sqrt{t[\omega_{s,\pi^\star(s) }(\phi)-3(SA - 1)\varepsilon]}}. 
\end{split}
\label{eq:hidieous_expression}
\end{equation}
Since $ \forall a >0\ \forall m\geq2,\ \underset{x\to\infty}{\lim} \frac{\sqrt{y(a x,m)}}{\sqrt{x}} = \underset{x\to\infty}{\lim} \frac{\sqrt{(m-1)+(m-1)\log(1 + ax/(m-1))}}{\sqrt{x}} = 0$, and the maximums in (\ref{eq:hidieous_expression}) are taken over finite sets, then $\underset{t \to \infty}{\lim} F(\phi,\varepsilon,t) = 0.$
\end{proof}

\subsection{Proof of Lemma \ref{lemma:concentration of phi} }
\begin{proof} We have:
\begin{equation*}
\begin{split}
\PP\left(\mathcal{E}_T^c \right) &\leq \sum_{t = T^{1/4}}^{T} \PP\left(\widehat{\phi}_t \notin \mathcal{B}_{\norm{.}}(\phi,\xi) \right) \\
&\leq \sum_{t = T^{1/4}}^{T} \underset{s,a}{\sum}\ \Bigg[\PP\bigg(\widehat{r}_t(s,a) - r(s,a) > \xi \bigg) + \PP\bigg(\widehat{r}_t(s,a) - r(s,a) < -\xi \bigg)\\
&\quad \quad \quad  \quad \quad \quad + \underset{s'}{\sum}\ \PP\bigg(\widehat{p}_t(s'|s,a) - p(s'|s,a) > \xi/S \bigg) + \PP\bigg(\widehat{p}_t(s'|s,a) - p(s'|s,a) < -\xi/S \bigg) \Bigg].
\end{split}
\end{equation*}
Let $T$ be such that $T^{1/4} \geq (SA)^2$. Then for $t \geq T^{1/4}$, we have $\forall (s,a), \quad n_t(s,a) \geq (\sqrt{t} -S A/2)_{+} - 1 \geq \sqrt{t} - S A$. Therefore, using a union bound and Chernoff inequality, one can write:
\begin{equation*}
\begin{split}
\PP\bigg(\widehat{p}_t(s'|s,a) - p(s'|s,a) > \xi/S \bigg) &= \PP\bigg(\widehat{p}_t(s'|s,a) - p(s'|s,a) > \xi/S,\ n_t(s,a) \geq \sqrt{t} - S \bigg) \\
&\leq \sum_{t'=\sqrt{t} - SA}^{t} \PP\bigg(\widehat{p}_t(s'|s,a) - p(s'|s,a) > \xi/S,\ n_t(s,a) = t' \bigg) \\
&\leq  \sum_{t'=\sqrt{t} - SA}^{t} \exp\bigg(-t'\cdot \kl\big(p(s'|s,a)+\xi/S,\ p(s'|s,a)\big)\bigg)\\
&\leq \frac{ \exp\bigg(-(\sqrt{t} - SA)\kl\big(p(s'|s,a)+\xi/S,\ p(s'|s,a)\big)\bigg)}{1 - \exp\bigg(-\kl\big(p(s'|s,a)+\xi/S,\ p(s'|s,a)\big)\bigg)}.
\end{split}
\end{equation*}
Using the same reasoning, we can prove that: 
\begin{equation*}
\begin{cases}
\PP\bigg(\widehat{p}_t(s'|s,a) - p(s'|s,a) < -\xi/S \bigg) \leq \displaystyle{\frac{ \exp\bigg(-(\sqrt{t} - SA)\kl\big(p(s'|s,a)-\xi/S,\ p(s'|s,a)\big)\bigg)}{1 - \exp\bigg(-\kl\big(p(s'|s,a)-\xi/S,\ p(s'|s,a)\big)\bigg)}}, \\
\PP\bigg(\widehat{r}_t(s,a) - r(s,a) > \xi \bigg) \leq \displaystyle{\frac{ \exp\bigg(-(\sqrt{t} - SA)\kl\big(r(s,a)+\xi,\ r(s,a)\big)\bigg)}{1 - \exp\bigg(-\kl\big(r(s,a)+\xi,\ r(s,a)\big)\bigg)}}, \\
\PP\bigg(\widehat{r}_t(s,a) - r(s,a) < -\xi \bigg) \leq \displaystyle{\frac{ \exp\bigg(-(\sqrt{t} - SA)\kl\big(r(s,a)-\xi,\ r(s,a)\big)\bigg)}{1 - \exp\bigg(-\kl\big(r(s,a)-\xi,\ r(s,a)\big)\bigg)}}.
\end{cases}
\end{equation*}
Thus, for the following choice of constants 
\begin{equation*}
\begin{split}
C = \underset{s,a}{\min}\Bigg(& \kl\big(r(s,a)-\xi,\ r(s,a)\big)\ \wedge\ \kl\big(r(s,a)+\xi,\ r(s,a)\big)\\
&\wedge\ \underset{s'}{\min}\bigg(\kl\big(p(s'|s,a)-\xi/S,\ p(s'|s,a)\big)\ \wedge\ \kl\big(p(s'|s,a)+\xi/S,\ p(s'|s,a)\big) \bigg) \Bigg)
\end{split}
\end{equation*} 
and 
\begin{equation*}
\begin{split}
B =& \underset{s,a}{\sum}\ \Bigg( \frac{ \exp\bigg(SA \cdot \kl\big(r(s,a)+\xi,\ r(s,a)\big)\bigg)}{1 - \exp\bigg(-\kl\big(r(s,a)+\xi,\ r(s,a)\big)\bigg)} + \frac{ \exp\bigg(SA \cdot \kl\big(r(s,a)-\xi,\ r(s,a)\big)\bigg)}{1 - \exp\bigg(-\kl\big(r(s,a)-\xi,\ r(s,a)\big)\bigg)} \\
&+ \underset{s'}{\sum}\ \Bigg[\frac{ \exp\bigg(SA \cdot \kl\big(p(s'|s,a)+\xi/S,\ p(s'|s,a)\big)\bigg)}{1 - \exp\bigg(-\kl\big(p(s'|s,a)+\xi/S,\ p(s'|s,a)\big)\bigg)} + \frac{ \exp\bigg(SA \cdot \kl\big(p(s'|s,a)-\xi/S,\ p(s'|s,a)\big)\bigg)}{1 - \exp\bigg(-\kl\big(p(s'|s,a)-\xi/S,\ p(s'|s,a)\big)\bigg)} \Bigg] \Bigg),
\end{split}
\end{equation*}
we have 
\begin{equation*}
\begin{split}
\PP\left(\mathcal{E}_T^c \right) \leq \sum_{t = T^{1/4}}^{T} B \exp(-C \sqrt{t}) \leq B T \exp(-C T^{1/8}).
\end{split}
\end{equation*}
\end{proof}

\section{Comparison of KLB-TS and BESPOKE:}\label{sec:compare}

\subsection{Design principles}

As KLB-TS, BESPOKE is an algorithm that adapts its sampling strategy to the learnt MDP. The two algorithms have however different objectives: BESPOKE aims at returning an $\varepsilon$-optimal policy. BESPOKE starts with an intialization phase where each (state, action) pair is sampled $n_{\min} = \frac{2\times 625^2 \times \gamma^2 \times S \times \log(1/\delta)}{(1-\gamma)^2}$ times. After this first phase, the algorithm enters an inner loop. Each iteration of the loop aims at halving the sub-optimality gap 
$\norm{\starV{\phi} - V_{\phi}^{\widehat{\pi}^*}}_{\infty}$ of the empirical best policy. The algorithm iterates until the gap becomes smaller than $\varepsilon$. At the beginning of each iteration, the algorithm solves a convex program whose solution provides the numbers of times each (state, action) pair should be sampled in this iteration. The program minimizes a weighted sum of "confidence intervals" of rewards and transitions estimates at each (state, action) pair, subject to a maximum budget constraint. This objective is known, thanks to the Simulation Lemma\footnote{see Lemma 2 in \cite{BESPOKE2019}}, to be an upper bound of the sub-optimality gap of the empirical optimal policy. BESPOKE uses a doubling trick to compute the maximum budget for each iteration (this budget is defined so that the gap is halved).
We note the following important differences between KLB-TS and BESPOKE.
\begin{enumerate}
\item KLB-TS does not need to solve any convex program to update its sampling strategy, because given an estimate of the MDP, this strategy is explicit. 
\item It is also worth noting that the initialization phase of BESPOKE is extremely long: $\frac{2\times 625^2 \times \gamma^2 \times S^2A \times \log(1/\delta)}{(1-\gamma)^2}$ samples must be gathered. During this phase, the algorithm is not adaptive at all. As we have shown in our numerical experiments, even with small state and action spaces, the initialization phase constitutes a very large proportion of the sample complexity -- which makes the algorithm less adaptive than it seems, and really leads to poor performance. KLB-TS has a much smaller initialization phase and is really adaptive. On Figure \ref{fig:experiment11}, we see that BESPOKE's large sample complexity is mainly due to the constant term corresponding to the minimum number of samples it allocates to each (state, action) pair in the initialization phase. Note that this minimum number of samples cannot be avoided as it is necessary to ensure that BESPOKE halves the accuracy of the empirical policy after each iteration\footnote{see Lemma 16 and the proof of Theorem 1 in \cite{BESPOKE2019}.}
\item BESPOKE's stopping rule is suited to identify $\varepsilon-$optimal policies. Unless it has access an oracle revealing $\Delta_{\min}$, it cannot perform best policy identification.
\end{enumerate}

\begin{figure}[h]
    \centering
        \includegraphics[width=0.5\linewidth]{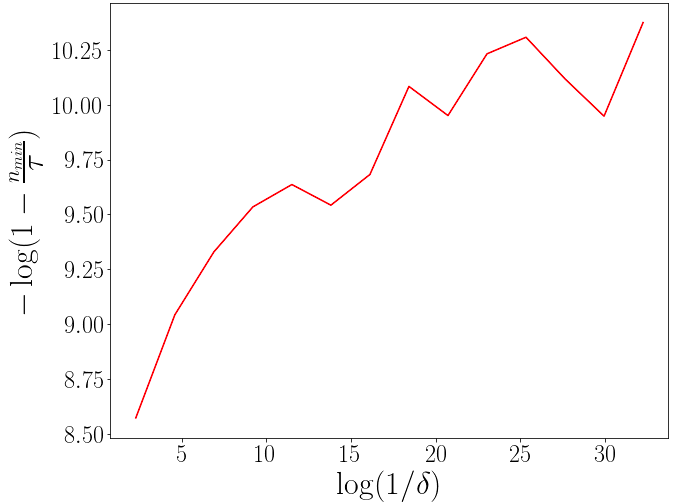}
             \caption{Comparing BESPOKE initialization phase duration $n_{\min}$ to its total sample complexity $\tau$: $-\log(1-\frac{n_{\min}}{\tau})$ as a function of $\log(1/\delta)$. }
         \label{fig:experiment11}
\end{figure}

\subsection{Theoretical guarantees of BESPOKE and KLB-TS}

Theorem 2 in \cite{BESPOKE2019} states that with a probability at least $1-\delta$, the sample complexity of best-policy identification using BESPOKE with $\varepsilon=\delta_{\min}$ is upper bounded by\footnote{$\Tilde{\ocal}(.)$ is used to indicate a quantity that depends on (.) up to a \textit{polylog} expression at most polynomial in $ S, A, \frac{1}{1-\gamma}, \frac{1}{\delta}$.}: 
\begin{equation*}
\begin{split}
\tau_{\delta} = \Tilde{\ocal}\Bigg(& \sum_{s,a\neq \pi^\star (s)} \bigg(\frac{\textrm{Var}[R(s,a)] + \gamma^2 \textrm{Var}_{p(s,a)}[\starV{\phi}]}{\Delta_{sa}^2}+ \frac{1}{(1-\gamma) \Delta_{sa}}\bigg)\\
&+ \sum_{s \in \scal} \min\bigg\{\frac{1}{(1-\gamma)^3 \Delta_{\min}^2},\quad \frac{\textrm{Var}[R(s,\pi^*(s))] + \gamma^2 \textrm{Var}_{p(s,\pi^*(s))}[\starV{\phi}]}{\Delta_{\min}^2} + \frac{1}{(1-\gamma)^2 \Delta_{\min}} \bigg\}+ \frac{S^2 A}{(1-\gamma)^2}\Bigg).
\end{split}
\end{equation*}

In contrast, the sample complexity of KLB-TS scales as:
\begin{equation*}
\begin{split}
\tau_{\delta}  = \ocal\Bigg(&\sum_{s, a\neq \pi^\star(s)}\bigg(\max\bigg\{\frac{ \textrm{Var}_{p(s,a)}[\starV{\phi}]}{\Delta_{sa}^2},\frac{ \textrm{MD}_{p(s,a)}[\starV{\phi}]^{4/3}}{\Delta_{sa}^{4/3}}\bigg\} + \frac{1}{\Delta_{sa}^2} \bigg)\\
&+ S \times \min\bigg\{\frac{1}{(1-\gamma)^3 \Delta_{min}^2}, \max\bigg\{\frac{ \textrm{Var}_{max}^*[\starV{\phi}]}{(1-\gamma)^2 \Delta_{\min}^2}, \frac{ \textrm{MD}_{max}^*[\starV{\phi}]^{4/3}}{(1-\gamma)^{4/3} \Delta_{\min}^{4/3}}\bigg\}\bigg\} + \frac{S}{(1-\gamma)^2 \Delta_{\min}^2} \Bigg) \log(1/\delta)\\
&+ o(\log(1/\delta)).
\end{split}
\end{equation*}
From the above upper bounds, we can make the following comments:
\begin{enumerate}
\item Both bounds depend on functionals of the particular MDP to be learnt, such as the minimum gap, the variance or maximum deviations of value functions. This means that BESPOKE and KLB-TS can adapt to the hardness of the problem, and in particular perform significantly better than minimax approaches when the MDP is easy (e.g. when the minimum gap is high or when the variances of the value function is low).
\item In the worst case, both sample complexities scale at most as $\Tilde{\ocal}\bigg(\frac{S A}{\Delta_{\min}^2 (1-\gamma)^3}\bigg)$, which corresponds to the minimax bound.  
\item When the rewards have strictly positive variances, then the two upper bounds are very similar, except for the large constant term $\frac{S^2 A \log(1/\delta)}{(1-\gamma)^2}$ for BESPOKE which comes from its very long initialization phase. We believe that this constant term makes BESPOKE impractical.
\item While BESPOKE's bound has the advantage of being non-asymptotic, it only holds with probability $1-\delta$. In contrast, KLB-TS comes with an asymptotic bound on the expected sample complexity, which we also proved to be finite for all confidence levels $\delta$.
\end{enumerate}

\end{document}